\title[Model Selection for Average Reward w.a.t. Utility Maximization in Games]{Model Selection for Average Reward RL with Application to Utility Maximization in Repeated Games}
\providecommand{\graf}[1]{}
\newcommand{\parhead}[1]{\noindent{\textbf{#1}}}
\newtheorem{theorem}{Theorem}[section]
\newtheorem{lemma}[theorem]{Lemma}
\newtheorem{corollary}[theorem]{Corollary}
\newtheorem{proposition}[theorem]{Proposition}
\newtheorem{definition}[theorem]{Definition}
\newtheorem{remark}[theorem]{Remark}
\newtheorem{assumption}[theorem]{Assumption}
\newcommand{\pmdt}{\textsf{PMEVI-DT}}
\newcommand{\Reg}{\mathsf{Reg}}
\newcommand{\cfive}{c_{h^*}^5}
\newcommand{\sR}{\mathsf{R}}
\newcommand{\fb}{\mathfrak{b}}
\newcommand{\K}{\mathsf{K}}
\newcommand{\bear}{\textsf{MRBEAR}}
\newcommand{\spn}{\mathsf{sp}}
\newcommand{\cH}{\mathcal{H}}
\newcommand{\cA}{\mathcal{A}}
\newcommand{\dB}{\mathsf{dB}^{+1}}
\newcommand{\cS}{\mathcal{S}}
\newcommand{\cC}{\mathcal{C}}
\newcommand{\cB}{\mathcal{B}}
\newcommand{\cR}{\mathcal{R}}
\newcommand{\cG}{\mathcal{G}}
\newcommand{\cO}{\mathcal{O}}
\newcommand{\cM}{\mathcal{M}}
\newcommand{\cI}{\mathcal{I}}
\newcommand{\cN}{\mathcal{N}}
\newcommand{\pB}{\mathbb{B}}
\newcommand{\balg}{\mathsf{Alg}}
\newcommand{\bb}{\mathsf{B}}
\newcommand{\bS}{\boldsymbol{S}}
\newcommand{\sumt}{\sum_{t =1 }^{T}}
\newcommand{\sumk}{\sum_{k =1 }^{\mathsf{K}(T)}}
\newcommand{\1}{\mathbf{1}}
\newcommand{\R}{\mathbb{R}}
\newcommand{\E}{\mathbb{E}}
\newcommand{\Prob}{\mathbb{P}}
\newcommand{\N}{\mathbb{N}}
\newcommand{\bE}{\mathbf{E}}
\newcommand{\bP}{\mathbf{P}}
\newcommand{\bU}{\boldsymbol{U}}
\newcommand{\bA}{\boldsymbol{A}}
\newcommand{\bB}{\boldsymbol{B}}
\newcommand{\tg}{\tilde g}
\newcommand{\Ot}{\tilde O}
\begin{document}

\maketitle

\begin{abstract}
In standard RL, a learner attempts to learn an optimal policy for a Markov Decision Process whose structure (e.g. state space) is known. In online model selection, a learner attempts to learn an optimal policy for an MDP knowing only that it belongs to one of $M >1$ model classes of varying complexity. Recent results have shown that this can be feasibly accomplished in episodic online RL. In this work, we propose $\bear$, an online model selection algorithm for the average reward RL setting which is based on the idea of regret balancing and elimination. The regret of the algorithm is in $\tilde O(M C_{m^*}^2 \bb_{m^*}(T,\delta))$ where $C_{m^*}$ represents the complexity of the simplest well-specified model class and $\bb_{m^*}(T,\delta)$ is its corresponding regret bound. This result shows that in average reward RL, like the episodic online RL, the additional cost of model selection scales only linearly in $M$, the number of model classes. We apply $\bear$ to the interaction between a learner and an opponent in a two-player simultaneous general-sum repeated game, where the opponent follows a fixed unknown limited memory strategy. The learner's goal is to maximize its utility without knowing the opponent's utility function. The interaction is over $T$ rounds with no episode or discounting which leads us to measure the learner's performance by average reward regret. In this application, our algorithm enjoys an opponent-complexity-dependent regret in $\tilde O(M(\mathsf{sp}(h^*) B^{m^*} A^{m^*+1})^{\frac{3}{2}} \sqrt{T})$, where $m^*\le M$ is the unknown memory limit of the opponent, $\mathsf{sp}(h^*)$ is the unknown span of optimal bias induced by the opponent, and $A$ and $B$ are the number of actions for the learner and opponent respectively. We also show that the exponential dependency on $m^*$ is inevitable by proving a lower bound on the learner's regret.
\end{abstract}

\section{Introduction}
\graf{Most work assumes known model class}
Most work in average reward-based reinforcement learning assumes that the underlying model class---i.e., state space and actions---is known, even if the probabilities that define rewards and the transitions between states must be learned.  However, in practice the definitions of the state space must be specified.  This leads to a tradeoff between the richness of the model and the hardness of learning within it.  A model class that encodes every possible observation in its states will have a very large set of states, making it very challenging to learn an optimal policy; conversely, a model class that encodes too little information into its states may not be rich enough to accurately describe the behavior of the domain.

\graf{Online model selection in episodic setting is asymptotically better than learning in most-general class}
One way to resolve this tradeoff is by selecting the model class during learning based on interactions with the environment.  In the episodic regret setting, it has been shown that online model selection using $M$ different model classes (at least one of which is assumed to be well-specified) can attain a regret bound that is only a factor of $M$ worse than learning using the simplest well-specified model class from the collection.  Since the most general model class can have regret bounds that are superlinearly (often exponentially) greater than those of the optimal model class, this results in an asymptotic improvement over the naive approach of learning in a single complex model class that is guaranteed to be well-specified.
\graf{More details about the episodic case}
This asymptotic improvement is attained by using a meta-algorithm which selects one model class per time step, and runs one step of a learning algorithm on the selected model class.  Over time, an optimal policy is learned for a well-specified model class.  The meta-algorithm typically uses a criterion either from a bandit algorithm or a technique known as \emph{regret balancing} \citep{abbasi2020balancebanditrl, pacchiano2020regretbalanceelim} to achieve its guarantee.

\graf{Our question: Can we achieve a similar regret guarantee in the average reward setting?}
\noindent{ \textbf{Main Question.}} The main question that we ask is whether we can achieve a similar regret bound for online model selection in the average reward setting; namely, one which is only a linear factor of $M$ worse than the regret bound of the optimal model class.  
We answer this question affirmatively, by designing an algorithm based on regret balancing called $\bear$.
The average reward setting presents challenges that are not present in the episodic setting.
Any planning effects from a policy must be fully realized by the end of an epoch; so running a base learning algorithm in a selected model class for a single epoch is sufficient to generate an accurate sample of a policy's performance.  In contrast, a single iteration (i.e., a single action in the MDP) is unlikely to give an accurate sample of a policy's performance; but since the average reward setting has no notion of episodes, the algorithm must decide on a number of iterations to commit to a model class after each selection.  This results in a stricter notion of regret \eqref{Reg}.

\graf{Example: Our application setting}
\noindent{ \textbf{Application.}} We apply our model selection result to the repeated game setting against an opponent with limited memory, with the goal of maximizing the learner's cumulative utility. This goal, together with having no discounting or reset during the $T$ interactions, leads us to use average reward regret to evaluate the learner's performance.
The opponent can condition their choice in each game on the past $0 \le m^* < M$ plays, where the upper limit $M$ is known to us but the true limit $m^*$ is not.  A naive approach would be to learn a policy in the induced Markov Decision Process (MDP) class where each state contains the last $M$ choices; however, this would result in an MDP with exponentially more states than the true induced MDP, which has one state for each possible sequence of $m^*$ past choices.  Model selection instead allows us to learn the opponent's memory limit simultaneously with learning an optimal policy, by treating the state space induced by each potential memory limit as a separate model class.
\graf{Why average regret}
Average regret with respect to the optimal policy against the opponent is a more natural performance measure in this setting than episodic regret or PAC guarantees, since all of the rewards during the course of learning are important.
\graf{Why not backward induction?}
When both players are assumed to have unbounded memory and common knowledge of rationality (i.e., both are utility-maximizers, and know that the other is a utility-maximizer, and know that the other knows, etc.), an optimal strategy for each agent can be found using backward induction\citep{shoham2008multiagent}.
However, in this work we relax both assumptions.
Common knowledge of rationality in practice requires both agents to know the others' utility; in practice, this will rarely be true.  We further assume that the opponent's policy depends only on a fixed number of past actions taken in the game, but that this memory limit is not known to the learner.  Thus, our problem can be understood as finding the best response, in the repeated game, to an opponent's fixed, finite-memory strategy.
Note that we are interested in finding the optimal policy for the full repeated game, which in general might differ from a policy that best-responds to the opponent's action in each stage game individually.
If we knew the opponent's memory bound, then any procedure that learned a policy with a sublinear regret bound would eventually converge to a best response to the opponent's policy in the full repeated game.
Since this memory bound is unknown, we learn it simultaneously with learning a policy by solving a model selection problem using \bear.

\graf{Contribution and Organization}
The rest of the paper is structured as follows.  After surveying related work and defining our setting, we describe the base learning algorithm that we use in section~\ref{Bases Algorithms}.  In section~\ref{algorithms} we describe our proposed model selection algorithm.  Section~\ref{RegAnalysis} proves a regret bound for our algorithm that depends on the unknown complexity of the MDP. Section~\ref{application to games} specializes our results to the repeated-game setting, giving bounds that depend on the unknown memory and complexity of the opponent's strategy, including a minimax lower bound on the performance of any algorithm.
We wrap up with a discussion of our conclusions in section~\ref{conclusions}.



\subsection{Related Work} \label{related works}
\parhead{Average Reward.}
\citet{auer2008near} achieve a regret bound of $O(DS\sqrt{AT})$ with their seminal algorithm \textsf{UCRL2}.
Further work improves the dependency on the diameter $D$ by considering different techniques \citep{fruit2020Bern,bourel2020localDiam,wei2020model, ortner2020regret, talebi2018variance}. Diameter based algorithms usually assume communicating MDPs. However, \citet{fruit2018TUCRL} propose a UCRL-based algorithm for weakly communicating MDPs. When rewards are bounded, it is not hard to show that $\spn(h^*) \leq D$.
There even exist weakly communicating MDPs with infinite diameter, but with finite $\spn(h^*)$.
These show that results that capture $\spn(h^*)$ tend to be stronger \citep{fruit2018SCAL,zhang2019regret,boone2024achieving}.
The leading algorithm is the very recently introduced PMEVI, which efficiently achieves the optimal regret bound of $\tilde O(\sqrt{\spn(h^*)} SAT)$ without prior knowledge of $\spn(h^*)$ \citep{boone2024achieving}.

\parhead{Model Selection.}
There is a rich literature on model selection for statistical learning \citep{vapnik2006estimation,lugosi1999adaptive,massart2007concandmodel,bartlett2002model} and online learning \citep{freund1997decision,foster2017parameter-free}. \citet{abbasi2020balancebanditrl,pacchiano2020regretbalanceelim} use online misspecification tests, like checking whether the empirical regret satisfies known bounds, to determine which of the model classes are well specified. Some work assumes nested structure on the model classes \citep{ghosh2021modelgenericRL,foster2019model}, but some others do not \citep{agarwal2017corralling}. The cost of model selection under different assumptions can be either multiplicative \citep{pacchiano2020regretbalanceelim} or additive \citep{ghosh2021modelgenericRL} to the regret of the best well-specified model class \citep{marinov2021pareto,krishnamurthy2024costlessmodelselectio}. All of the above papers analyze episodic regret; to the best of our knowledge, there is no work in model selection for the average reward setting.
Our results are a first step in extending the model selection approach to the average reward setting.

\parhead{Learning in Markov Games.}
Another related direction to this work is learning in stochastic games \citep{wei2017online,zhong2021can,xie2020learningSG}. The setting in this paper differs from the previous works either in the notion of regret, assuming different levels of memory for the opponent, or the measure of complexity. An attractive recent line of research studies regret minimization when the opponent runs online learning algorithms \citep{brown2024learning, deng2019strategizing,braverman2018selling}. In our work, we bound average reward regret, a more demanding benchmark which better captures utility maximization in repeated play than external regret. A recent paper by \citet{assos2024maximizing} shows utility maximization against an opponent deploying multiplicative weights updates (MWU) algorithms is tractable in zero-sum repeated games, while there is no Fully Polynomial Time Approximation Scheme (FPTAS) for utility maximization against a best-responding opponent in general sum games. We circumvent this impossibility result by assuming that the opponent has limited memory. Another very recent work is \citep{nguyen2024learning} in which a bounded memory for the opponent is assumed and the performance of the learner is measured by policy regret \citep{arora2018policy}. However \citet{nguyen2024learning} model the interaction as a Stackelberg game, in episodic Markov Games. We study average reward in simultaneous repeated games which are arguably more complicated settings. \citet{zhong2021can} use reinforcement learning to solve the Stackelberg equilibrium for myopic followers. \citet{wei2017online} study stochastic games under average reward using the diameter as the complexity measure. Our own work proves bounds based on $\spn(h^*)$ which are tighter than those based on diameter.

A more detailed literature review is in appendix \ref{extensive related works}.

\section{Notation and Problem Setting}
\label{MDP Setting}
A Markov Decision Process (MDP) $\cM$ is a tuple, $(\cS, \cA, P, r, T, \mu)$ where $\cS$ and $\cA$ are the state and action spaces with the cardinality $|\cS| = \bS$ and $|\cA| = \bA$.  The reward $R_t = r(s_t,a_t)$ is obtained by applying reward function $r : \cS \times \cA \rightarrow \R$ to the pair of state and action in time-step $t$. Give this pair, the transition kernel $P$ determines the probability of the next state, with $\Prob(S_{t+1} = s_{t+1}|S_t = s_t,A_t = a_t) = P(s_{t+1}|s_t,a_t)$. Starting from the initial state $S_1 \sim \mu$, $T$ rounds of interaction occur.
A memoryless policy is a mapping $\pi : \cS \rightarrow \Delta_\cA$ from state space to distributions over the action space. 

\subsection{Gain, Bias, and Value Function }
For an arbitrary policy $\pi: \cS \rightarrow \Delta_\cA$ and an initial state $s$, associated probability and expected operators are denoted by $\bP_{s}^\pi$ and $\bE^\pi_{s}$. Denote $R^\pi(s) := r(s,\pi(s))$ and $P^\pi(s,s'): = P(s'|s,\pi(s))$ as the reward and transition kernel induced by policy $\pi$. The gain of the policy $g^\pi(s)$ shows the asymptotic average reward obtained by following policy $\pi$ and starting from state $s$, i.e. $g^\pi(s) : = \lim _{T\rightarrow\infty} \frac{1}{T} \bE_s^\pi\sumt R_t $.
The transient part. i.e. $h^\pi(s) : =  \bE_s^\pi [\lim_{T\rightarrow \infty}\sumt(R_t - g(S_t))]$ measures the bias of starting from state $s$, thus $h^\pi(s) \in O(1)$ for all states $s \in \cS$. The Poisson equation states $h^\pi + g^\pi = r^\pi + P^\pi h^\pi$.
The optimal gain is defined as $g^*(s) : = \sup_{\pi \in {\Pi_\cA}} g^\pi(s)$, and the maximizer policy is denoted by $\pi^*$, i.e. optimal policy. For a weakly communicating MDP, $ g^*\in \R \boldsymbol{1}$ where $\boldsymbol{1} $ is the vector full of ones.
It means that the value of the optimal gain does not depend on the initial state which allows us to also use $g^*$ as a scalar when it is clear from context, i.e. $g^*(s) = g^*$ for all $s\in \cS$.
The Bellman operator $L$ operates on a vector $h \in \R^{\bS}$ as $Lh(s): = \max_{a\in\cA}\{r(s,a)+P(.|s,a)h\}$.
It is also known that there exists a solution $h^*$ for the Bellman operator such that $g^* = Lh^* - h^*$, and they satisfy $g^* + h^* \geq g^\pi + h^\pi$ for all $\pi \in \Pi_\cA$.  The value function of a policy captures the cumulative reward obtained by following a policy, i.e. $V^\pi_T(s) := \bE_s^\pi\sumt R_t$. This implies that $V_T^\pi = Tg^\pi + h^\pi + \epsilon$ where $V_T^\pi, g^\pi, h^\pi, \epsilon \in \R^{\bS}$ and $\epsilon \in o(1)$ entry wise. The optimal value starting from the state $s$ is $
    V^*_T(s) : = \sup_{\pi \in \Pi_\cA} V^\pi_T(s)$.
Note that the solution policy for the previous equation can be different for different initial states, unlike the optimal policy with respect to the gain which captures the asymptotic behavior.
The complexity of an MDP can be measured in multiple ways.
\begin{definition} \label{diameter def}
    The \emph{diameter} of an MDP $\cM$ is $ D(\cM): = \sup_{s \neq s'} \inf_\pi \bE_s^\pi[\min\{t > 1 : S_t = s'\}.$ The \emph{span} of a vector $h \in \R^{\bS}$  is $\spn(h) := \max_s h(s) - \min_{s'} h(s').$
\end{definition}
Note that $D$ is only dependent on the transition kernel of the MDP.
Finally, the following classifications will be useful.
\begin{definition}
We say that a MDP is
\emph{Recurrent} (or \emph{ergodic}), if \emph{every} deterministic memoryless policy induces a Markov chain with a single recurrent class;
\emph{Communicating}, if it has a finite diameter (i.e., for every pair of states  $s$ and $s'$ there exists a deterministic memoryless policy under which $s'$ is accessible from $s$ with positive probability);
\emph{Weakly communicating,} if there exists a closed set of states in which each state is accessible from every other using some deterministic policy, plus a possibly empty set of states which is transient under \emph{every} policy.
\end{definition}


\subsection{Average Reward Regret} \label{Bases Algorithms}
The average reward regret of an algorithm $Alg$ is
\begin{equation}
\label{regfixedm}
\sR(Alg,T) := Tg^* - \sumt R_t. 
\end{equation}
Recall that $R_t = r(S_t,A_t)$. Note that in (\ref{regfixedm}) we compare the performance of the algorithm with the asymptotic optimal average reward. One can instead choose $V^*_T(s_1)$, as the benchmark which defines the regret$ \sR_V(Alg, T) = V^*_T(s_1) - \sumt R_t$. The difference between these two notions is $Tg^* - V^*_T(s)$ which is less than $2 \spn(h^*)$ (see Lemma~\ref{gvtgap}). As there is a minimax lower bound for the regret \ref{regfixedm} in $\Omega(\sqrt{\spn(h^*) \bS \bA T})$, this gap is negligible.

The average reward regret captures utility maximization since it compares learner's performance to the highest possible cumulative utilities. This is not the case in the external regret and even in dynamic regret, since they do not consider the underlying adaptivity of the setting. In appendix \ref{othernotions}, we show that average reward regret considers stronger competitors against the learner's performance, in comparison to dynamic regret and episodic regret in online RL.

A recent algorithm proposed by \citet{boone2024achieving}, called Projected Mitigated Extended Value Iteration with doubling trick ($\pmdt$) provides an optimal regret bound based on $\spn(h^*)$. Its power in comparison to previous work, in addition to its tractability, is that it does not need the prior knowledge of $\spn(h^*)$. We use instances of $\pmdt$ as the based algorithm for all $i \in [0: M-1]$, and the performance guarantee $\bb_i(T)$ is obtained by proposition~\ref{baseguarantee}.

$\mathsf{PMEVI}$ runs over the epochs and uses a modified version of Extended Value Iteration (EVI) to compute the policy that the algorithm deploys on each epoch which terminates when the number of visitation of a $(s,a)$ pair doubles (doubling trick \citep{Jaksch2008near}).
$\pmdt$ requires some mild assumption on the confidence regions $\boldsymbol{\cM}_t$, which can be satisfied by empirical estimations of reward and transition kernel, such as the way \cite{Jaksch2008near} generate them. See appendix \ref{properM_tbypsi} or \cite{boone2024achieving} for further explanation. In addition, it needs to know an upper bound $c_{h^*}$ on the $\spn(h^*)$.
\begin{proposition}
\label{baseguarantee}
    (\cite{boone2024achieving}, Theorem 5) Let $c_{h^*}>0$. Assume that $\pmdt$ runs with proper confidence regions $\boldsymbol{\mathcal{M}}_t$, If  $T \geq c_{h^*}^5$, then for every weakly communicating model with $\spn(h^*) \leq c_{h^*}$, $\pmdt$ achieves the following regret with a probability of at least $1-26\delta$,
\begin{align*}
      \sR(\pmdt, T) \in \tilde O(\sqrt{(1+\spn(h^*))\spn(r)\bS \bA T\log(T/\delta)}) 
     + \tilde O(\spn(h^*)\spn(r) \bS^{\frac 5 2} \bA^{\frac 3 2} (1 +c_0)T^{ \frac 1 4})),
\end{align*} where $c_0$ is a universal constant.
\end{proposition}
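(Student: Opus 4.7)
The plan is to follow the template of \citet{boone2024achieving}, whose \pmdt{} analysis upgrades the classical UCRL regret bound to depend on $\spn(h^*)$ instead of the diameter $D$. The proof decomposes into four pieces: (i) a high-probability good event on which the true MDP belongs to every confidence region $\boldsymbol{\cM}_t$; (ii) an optimistic planning step that yields a per-epoch gain $\tilde g_k \ge g^*$; (iii) a projection-mitigation step that forces $\spn(\tilde h_t) \le c_{h^*}$ on the optimistic bias; (iv) a Bellman regret decomposition summed over epochs delimited by the doubling trick.

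First I would instantiate $\boldsymbol{\cM}_t$ with Bernstein-type confidence radii on empirical rewards and transitions so that $\Prob[\cM \in \boldsymbol{\cM}_t\ \forall t] \ge 1-\delta$; the $26\delta$ in the statement collects a constant number of auxiliary concentration events (martingale bounds used later, plus the reward/transition confidence sets). Conditioned on this event, Extended Value Iteration on the confidence set returns an optimistic pair $(\tilde g_k,\tilde h_k)$ with $\tilde g_k \ge g^*$. The crucial step that lifts the bound from $D$ to $\spn(h^*)$ is to project $\tilde h_k$ onto $\{h : \spn(h)\le c_{h^*}\}$ while adding a mitigation correction that preserves the optimistic Bellman inequality up to a controllable error. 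Feasibility of the projection uses $\spn(h^*)\le c_{h^*}$, and the mitigation loss is what drives the lower-order $T^{1/4}$ contribution; the condition $T \ge c_{h^*}^5$ ensures this overhead stays dominated by the $\sqrt{T}$ term.

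Next I would write the standard Bellman decomposition
\begin{align*}
g^* - R_t \;\le\; (\tilde r_t - r)(S_t,A_t) + \bigl((\tilde P_t - P)\tilde h_t\bigr)(S_t,A_t) + \bigl(\tilde h_t(S_{t+1}) - (P^{\pi_t}\tilde h_t)(S_t)\bigr) + \varepsilon^{\mathrm{proj}}_t,
\end{align*}
and bound each term separately: the first two by the confidence widths scaled by $\spn(\tilde h_t)\le c_{h^*}$ (the reward width gives the $\spn(r)$ factor, the transition width gives the extra $(1+\spn(h^*))$ factor via Bernstein), the third by an Azuma--Hoeffding bound on a bounded martingale of span $O(c_{h^*})$, and $\varepsilon^{\mathrm{proj}}_t$ by the mitigation analysis. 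Summing over $t$, the doubling trick forces at most $O(\bS\bA \log T)$ epochs, and the pigeonhole inequality $\sum_t 1/\sqrt{\max\{1,N_t(S_t,A_t)\}} \in O(\sqrt{\bS\bA T})$ then delivers the leading $\tilde O(\sqrt{(1+\spn(h^*))\spn(r)\bS\bA T})$ term, while algorithmic-overhead and mitigation terms collect into the $T^{1/4}$ remainder.

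The main obstacle is clearly the projection-mitigation step: one has to construct an explicit operator that (a) preserves optimism up to an error small enough to be absorbed into the lower-order term, (b) keeps the resulting policy feasible inside the confidence set, and (c) bounds $\spn(\tilde h_t)$ by $c_{h^*}$ without ever using the exact value of $\spn(h^*)$, only the upper bound $c_{h^*}$. Once that operator is in hand, the remaining calculations are standard UCRL bookkeeping: good-event union bound, Bernstein widths, martingale deviations, and doubling-trick summation.
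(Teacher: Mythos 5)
This proposition is not proved in the paper at all: it is imported verbatim from \citet{boone2024achieving} (their Theorem 5 together with their equation 13), and the paper's ``proof'' is simply that citation. Your proposal instead tries to reconstruct the argument of the cited work, and at the level of structure it is a faithful account of how that analysis goes (good event for the confidence regions $\boldsymbol{\cM}_t$, optimism of the EVI output, span control of the optimistic bias via projection and mitigation, Bellman decomposition summed over doubling-trick epochs with Azuma and pigeonhole, union bound giving the $1-26\delta$). So as a map of the external proof it is sound, and it is a genuinely different thing from what the paper does, which is to treat the bound as a black box and only use its consequence, the form $\bb_i(T,\delta)=C_i\sqrt{T\log(T/\delta)}$ in \eqref{constt}.

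As a proof, however, your proposal has a real gap: you explicitly defer the construction of the projection--mitigation operator (``once that operator is in hand\dots''), and that construction is precisely the technical heart of the cited theorem --- it is what replaces the diameter $D$ by $\spn(h^*)$ while keeping optimism, and what produces the lower-order term and the requirement $T\ge c_{h^*}^5$. Asserting that such an operator exists with the three properties you list is essentially restating the theorem, not proving it; everything else in your sketch is standard UCRL bookkeeping that was never in doubt. Relatedly, your accounting of the lower-order term is loose: the raw Theorem 5 bound carries a $T^{9/20}$ remainder (as restated in the paper's appendix), and the $T^{1/4}$ form quoted here comes from the refinement in equation 13 of \citet{boone2024achieving}; your sketch does not explain where $\bS^{5/2}\bA^{3/2}$ or the improved exponent comes from. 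For the purposes of this paper the correct move is the one the authors make --- cite the result --- and if you wanted a self-contained proof you would have to actually carry out the mitigation analysis rather than flag it as the main obstacle.
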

For proof refer to \citep{boone2024achieving}  (Theorem 5 and equation 13). From\ref{baseguarantee}, we know that there exists a constant $\boldsymbol{C}$ satisfying the following inequality,
\begin{align}
    \label{constt}
    \sR(\pmdt, T) \leq \boldsymbol{C^\frac{1}{3}}\sqrt{(1+\spn(h^*))\spn(r)SAT\log(T/\delta)}.
\end{align}
This shows that the potential regret guarantees for the model class of order $i$ is $\bb_i(T,\delta) = C_i\sqrt{T \log (T/\delta)}$, where $C_i = \boldsymbol{C}^\frac{1}{3}\sqrt{(1+\spn(h_i^*))\spn(r_i)\bS_i\bA_i }$.




\subsection{Model Selection Problem}
Let $\cC_0 \dots \cC_{M-1}$ be $M$ different model classes.  For each order $i \in \{0,\dots,M-1\}$, $\cC_i$ consists of a set of MDPs with a common state space $\cS_i$, action space $\cA_i$, and horizon $T$, but with differing transition kernel and reward functions. Each module class $\cC_i$ has a base algorithm $\balg_i$ with a performance guarantee $\bb_i(T,\delta)$, which is a high probability upper bound over the regret of using $\balg_i$ in any MDP $\cM \in \cC_i$.
\begin{definition} \label{well/miss specified}
    (Misspecified and well-specified base algorithms) A base algorithm $\balg_i$ is \emph{well-specified} when the regret of its interaction with the environment for $T$ rounds is upper bounded by $\bb_i(T)$; otherwise it is \emph{misspecified.}
\end{definition}
We say an MDP represents an interaction with the environment if the distributions over the next states and the rewards obey $P$ and r. Thus if a model class $\cC_i$ is not rich enough to contain an MDP that represents the interaction with the environment, then the regret $\balg_i$ might not be able to satisfy its regret bound $\bb_i(T)$, and will be misspecified.

\begin{assumption} \label{Nested Assumption}
We assume that if $\balg_i$ is a well specified base algorithm, then $\balg_j$ is also well-specified for all $j > i$.   
\end{assumption}

\begin{assumption} \label{Realizability Assumption}
(Realizability Assumption) We assume that there exists an order $m\in [0:M-1]$ such that $\balg_m$ is well specified. The smallest such $m$ is called the \emph{optimal order} and denoted by $m^*$. In addition we assume $
g^*_{m^*} =  g^*_{m^*+1} = \dots = g^*_{M-1}$, 
where $g^*_i$ for $i \in [m^*:M-1]$, is the optimal gain of the representing MDP $\cM_i \in \cC_i$.
\end{assumption}

This equalities among the optimal gains is not an artificial assumption. It states that if we are in the optimal order $m^*$,  we will not get a larger optimal gain by going to the larger orders. To have a well-defined regret this assumption is needed.

\parhead{Regret.}
The notion of regret (\ref{regfixedm}) is designed for a fixed order $m$. In model selection for average reward, as $m^*$ is unknown the performance of the algorithm is measured by,
\begin{align}
    \label{Reg}
    \Reg(\mathsf{Alg}, T):= T g^*_{m^*} - \sumt R_t. 
\end{align}
The goal of an online model selection algorithm is to achieve a sublinear regret \eqref{Reg}. This intuitively means, learning the optimal model class and an optimal policy within it simultaneously.  

\section{Algorithm } \label{algorithms}
We propose an online model selection algorithm called \textbf{M}ultiplicative \textbf{R}egret \textbf{B}alancing and \textbf{E}limination in \textbf{A}verage \textbf{R}eward ($\mathsf{MRBEAR}$). The meta algorithm $\bear$ is designed based on regret balancing and elimination technique and a misspecification test proper for average reward setting. Figure~\ref{MRBEAR} shows the pseudo-code of $\mathsf{MRBEAR}$. We consider all the model classes with memory order $m < M$ (Refer to \ref{Realizability Assumption}). At the beginning, the algorithm runs each base algorithm $\balg_i$, for $c_{h^*}^5$ iterations, where $c_{h^*}$ is an upper bound on the $\spn(h_i^*)$ for all $i \in [m^*,M-1]$. This warm-up phase is to bring all well-specified base algorithms in the region that their guarantee holds. After that, we start with the initial set of active model classes $\cI_0 =\{ 0,1,\dots, M-1\}$. $\bear$ proceeds in epochs, in each epoch $k$, first, it updates the set of active model classes by checking the misspecification test  \eqref{misspecification test} over all of the classes. Then it takes the model class with the smallest $\bb_i$, $i_k$, and runs the base algorithm $\balg_{i_k}$ on $\cC_{i_k}$ until it terminates due to doubling trick after $n_k$ iterations \citep{Jaksch2008near}.\footnote{We use $\pmdt$ as the base algorithm, which efficiently achieves the optimal regret guarantee of $\tilde O(\sqrt{\spn(h^*) SAT \log(T)})$. However, $\bear$ as a meta-algorithm admits any other base algorithm which is based on the doubling trick and has a potential regret guarantee in the form of $\bb_i(T,\delta) = C_i\sqrt{T \log(T/\delta)}$.}
Then the algorithm updates the set of iterations consumed on each model class $i$, $\cN_{i,k}$ which has the cardinality of $ |\cN_{i,k}| = N_{i,k}$. $\bear$ terminates by spending $T$ iterations. We analyze the performance of $\bear$ in section \ref{RegAnalysis}.
\begin{algorithm}
\caption{Multiplicative Regret Balancing and Elimination in Average Reward (MRBEAR).}\label{MRBEAR}
\begin{algorithmic}
\Require $\delta,M,T,c_{h^*},\balg_{0:M-1}, \bb_{0:M-1}$
\State $\cI_0 = \{ 0,1,\dots,M-1\} \ , \ t_0 = 0$
\For{$i \in \{0,\dots,M-1 \} $} \Comment{Warm up phase}
\State Run $\balg_i(\delta, c_{h^*})$ for $\max\{\cfive,9\}$ iterations
\State $N_{i,0} = \cfive \ , \ \cN_{i,0} = \{ t_0+1 ,\dots, t_0 + \cfive \}$
\State $t_0 = t_0 + \cfive$
\EndFor
\For{$k \in \{1,2,\dots\}$}
\For{$i \in \{ 0,\dots,M-1\}$}
\If{\Comment{Checking misspecification test} $$\frac{\bb_i(N_{i,k-1},\delta) + \sum_{t \in \cN_{i,k-1}} r_t}{N_{i,k-1}} < \max_{j \geq i} \frac{\sum_{t \in \cN_{j,k-1}} r_t - 2 c_{h^*}}{N_{j,k-1}}$$} 
\State $\cI_{k} = \cI_{k-1} \backslash \{ i\}$ \Comment{Updating Active Classes}
\EndIf
\EndFor
\If{$t_{k-1} \geq T$} break
\State Pick $i_k = \arg \min_{i \in \cI_k} \bb_i(N_{i,k-1})$
\State Run an inner epoch of $\balg_{i_k}$ on $\cC_{i_k}$ ($n_{i_k}$ iterations) \Comment{An episode of $\pmdt$}
\State Update parameters:
\State $t_k = t_{k-1} + n_{i_k} \ , \ \cN_{i_k, k} = \cN_{i_k, k-1} \cup \{t_{k-1} +1,\dots t_k \}$
\State $N_{i_k, k} = N_{i_k, k-1} + n_{i_k} \ ,\ \forall j \neq i_k : N_{j, k} = N_{j, k-1} $
\EndIf
\EndFor
\end{algorithmic}
\end{algorithm}

\section{Regret Analysis}
\label{RegAnalysis}
In each time step, regret balancing and elimination algorithm maintains a set of active model classes that contain the optimal model class with high probability \citep{abbasi2020balancebanditrl,agarwal2017corralling}.
At each time step, the algorithm picks the active model class with the smallest regret guarantee; thus, all active model classes will have regrets in the same order.  In the average regret setting, we must commit to a model class for multiple iterations; thus, we check for model misspecification less often than every time step.  We will show that even under this less-frequent checking, the active set will contain all of the well-specified model classes, and more importantly their regret can be maintained in the same order.  However, in contrast to work in the episodic setting, $\bear$ maintains a multiplicative balance over the regrets rather than additive \citep{pacchiano2020regretbalanceelim}.

\begin{theorem}[Main theorem]
\label{maintheorem}
By running the algorithm $\bear$ with base algorithms of $\pmdt$, over $M$ model classes $\cC_0$ to $\cC_{M-1}$, and the unknown optimal model class $\cC_{m^*}$, and known upper bound of $c_{h^*} \geq \spn(h_i^*)$, for $T \geq M c_{h^*}^5$ and all $0< \delta < 1$, with probability of at least $1- 26MT \delta$, the regret \eqref{Reg} is upper bounded by
\begin{align*}
\Reg(\bear , T)  \leq \left(\frac{m^* C_{m^*}^2 \log^{\frac{3}{2}}(T /\delta)}{(1-\alpha)^2 C_0^2 } + \frac{M}{1-\alpha} \right)\bb_{m^*}(T,\delta) + O(\log^\frac{3}{2}(T/\delta))
\end{align*}
where $\frac {1}{2} \leq \alpha =\frac{\log(c_{h^*}^5 \lor 9)+1}{2\log(c_{h^*}^5 \lor 9)} \leq \frac{3}{4}$.
\end{theorem}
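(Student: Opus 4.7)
The plan is to follow the regret-balancing-and-elimination template of \citet{pacchiano2020regretbalanceelim}, adapted to the average-reward setting with the doubling-trick structure of $\pmdt$. First, I would set up a high-probability good event by union-bounding over the $M$ model classes and all possible inner-epoch indices (at most $T$ per class), so that with probability at least $1-26MT\delta$ the $\pmdt$ regret bound $\bb_i(N,\delta)$ from Proposition~\ref{baseguarantee} holds simultaneously for every well-specified class at every epoch. The warm-up phase of $\max\{c_{h^*}^5, 9\}$ iterations per class ensures Proposition~\ref{baseguarantee}'s prerequisite $N \geq c_{h^*}^5$ is met from the first post-warm-up epoch, and it pins down the minimum count that enters the constant $\alpha$.

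Second, I would show that $m^*$, and indeed every well-specified class $i \geq m^*$, is never eliminated. For such $i$, the $\pmdt$ guarantee gives $\sum_{t \in \cN_i} r_t \geq N_i g^*_{m^*} - \bb_i(N_i,\delta)$, while for any active $j$ the realized reward sum satisfies an upper bound of the form $\sum_{t \in \cN_j} r_t \leq N_j g^*_{m^*} + 2 c_{h^*}$ up to a concentration deviation absorbed into $\bb_j$. Together these yield the test inequality $\frac{\bb_i + \sum r_t^i}{N_i} \geq \max_{j \geq i} \frac{\sum r_t^j - 2 c_{h^*}}{N_j}$, so $i$ passes the elimination check at every epoch.

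Third, I would decompose $\Reg = \sum_{i=0}^{M-1}\bigl(N_{i,K} g^*_{m^*} - \sum_{t \in \cN_{i,K}} r_t\bigr)$ and bound each summand. For well-specified $i \geq m^*$ the summand is at most $\bb_i(N_{i,K}, \delta)$; the selection rule $i_k = \arg\min_{j \in \cI_k} \bb_j(N_{j,k-1})$ applied at $i$'s last selected epoch $k$ gives $\bb_i(N_{i,k-1}) \leq \bb_{m^*}(N_{m^*,k-1})$, and propagating this through $N_{i,K} \leq 2 N_{i,k-1}$ using the $\sqrt{N\log(N/\delta)}$ form of $\bb_i$ yields $\bb_i(N_{i,K}, \delta) \leq \frac{1}{1-\alpha}\bb_{m^*}(T,\delta)$, producing the $\frac{M}{1-\alpha}\bb_{m^*}(T,\delta)$ term. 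For misspecified $i < m^*$, at $i$'s last passed test the inequality with $j = m^*$ rearranges to
\[
N_i g^*_{m^*} - \sum_{t \in \cN_i} r_t \leq \frac{N_i}{N_{m^*}} \bb_{m^*}(N_{m^*},\delta) + \frac{2 c_{h^*} N_i}{N_{m^*}} + \bb_i(N_i, \delta).
\]
The selection rule together with the form $\bb_i(N) = C_i\sqrt{N\log(N/\delta)}$ yields $N_i/N_{m^*} \leq C_{m^*}^2/C_i^2 \leq C_{m^*}^2/C_0^2$ by the nested structure, and summing over the $m^*$ misspecified classes produces the $\frac{m^* C_{m^*}^2}{C_0^2}\bb_{m^*}(T,\delta)$ contribution, with the $\log^{3/2}(T/\delta)/(1-\alpha)^2$ factor absorbing the log corrections that arise from carrying the balance through the final doubling step.

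The main obstacle I expect is managing the multiplicative (rather than additive, as in the episodic case of \citet{pacchiano2020regretbalanceelim}) balance: the inequality $\bb_i(N_{i,k-1}) \leq \bb_{m^*}(N_{m^*,k-1})$ at $i$'s last selection must be propagated through any later doublings up to the final count $N_{i,K}$ and converted via the precise $\sqrt{N\log(N/\delta)}$ form of $\bb_i$. This is the source of $\alpha = \frac{\log(c_{h^*}^5 \vee 9)+1}{2\log(c_{h^*}^5 \vee 9)}$: the warm-up count $c_{h^*}^5 \vee 9$ controls the worst-case value of $\frac{\log(2N)}{\log N}$ across a doubling step, and bounding this uniformly produces the $\frac{1}{1-\alpha}$ and $\frac{1}{(1-\alpha)^2}$ factors. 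The residual $O(\log^{3/2}(T/\delta))$ lower-order term absorbs the $2 c_{h^*}$ slack from the elimination test and the Azuma-type corrections required for the reward-sum upper bound.
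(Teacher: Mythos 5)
Your proposal is correct and follows essentially the same route as the paper: the per-class regret decomposition, the high-probability event for the $\pmdt$ guarantees, the argument that well-specified classes always pass the test (via $g^*_i = g^*_j$ for $j \geq i \geq m^*$ and the $2c_{h^*}$ slack), the rearranged test with $j = m^*$ for misspecified classes, and the doubling-trick/argmin balancing that produces $\alpha$ — the paper packages the balance as an inductive lemma ($\bb_i(N_{i,k}) \leq \bb_j(N_{j,k}) + \alpha\bb_i(N_{i,k-1}) + \beta$) rather than your "last selected epoch" phrasing, but the content is the same. The only imprecision is your intermediate claim $N_i/N_{m^*} \leq C_{m^*}^2/C_i^2$, which in fact carries an extra $\log(N_{m^*}/\delta)/(1-\alpha)^2$ factor (and a $\beta$ correction), but since you explicitly absorb these into the $\log^{3/2}(T/\delta)/(1-\alpha)^2$ factor of the final bound, the argument lands on the stated theorem.
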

Before explaining the proof, let us compare the regret bound of \ref{maintheorem} with a trivial approach. Due to the realizability assumption \ref{Realizability Assumption}, one can run an instance of $\pmdt$ directly on the model class $M-1$. That would give the regret bound of $\bb_{M-1}(T,\delta) = C_{M-1} \sqrt{T \log(T/\delta)}$ (Refer to \ref{baseguarantee}). In many cases, the growth of $C_i$ is exponential in $i$ ( corollary \ref{corollary}). Therefore, the result in theorem \ref{maintheorem}, i.e., $\tilde O(M C_{m^*}^2 \bb_{m^*}(T,\delta))$ can be notably better than $\bb_{M-1}(T,\delta)$. Now we present the ideas in the proof of the main theorem \ref{maintheorem}.

\parhead{Regret Decomposition.}
We decompose the regret into the regret of each model class as follows,
\begin{align*}
    \Reg(\bear, T) := T g^*_{m^*} - \sumt R_t  = \sum_{i = 0}^{M-1} \underbrace{\big [ N_{i,\K(T)} \ g^\star - \sum_{t \in \cN_{i,\K(T)}} R_t \big ]}_{\Reg_i(\balg_i, N_{i,\K(T)})}.
\end{align*}
where $\K(t)$ is the number of epochs until iteration $t$. The important note is that most of the average reward algorithms, including $\pmdt$ algorithm that we use, do not count on the transitions between epochs. In other words, after ending an epoch due to the doubling trick, they do not pay attention to or exploit that the first state of the next epoch is a successor of the last state, action pair of the previous epoch. This allows us to pause the base algorithm $\balg_i$ on model class $\cC_i$ after it terminates its epoch, then spending a couple of epochs in the other model classes, and then get back to $\cC_i$ and resume the base algorithm $\balg_i$. Therefore the use of $t \in \cN_{i,K(T)}$ in above expression is valid. Also, this changing of the model classes does not harm the regret guarantee of the well-specified model classes. 

\parhead{High probability event and misspecification test.}
Consider the following event,
\begin{align*}
        \cG = \{  \forall i\in [m^*:M-1] ,  k \in [1:\K(T)+1] :\Reg_i(\pmdt,N_{i,k}) \leq C_i\sqrt{ N_{i,k} \log(\frac{N_{i,k}}{\delta})} \}
\end{align*}

which captures the event that all the well-specified model classes, at the beginning of all epochs, satisfy their regret guarantee. $\cG$ happens with probability of at least $1- 26\delta (M- m^*)(\K(T)+1)$. The proof is based on proposition~\ref{baseguarantee} and union bound (see appendix \ref{Gishighprob}). Under the event $\cG$ we know that if $i \in [m^* : M-1]$ is well specified for all $j \geq i$, $g^*_i = g^*_j = g^\star$. Also for all $k\in [1:\K(T)+1]$ we have,
\[
- 2 \spn(h^*_i)\leq \sR_i(\balg_i, N_{i,k}) \leq \bb_i( N_{i,k}, \delta),
\]
where the lower bound is a well-known result (Refer to \ref{gvtgap}). Thus by noting that $c_{h^*} \geq \spn(h^*)$, we can write,
\[
\frac{\bb_i(N_{i,k}, \delta) + \sum_{t \in \cN_{i,k}} R_t }{N_{i,k}} \geq g^*_i = g^*_j \geq \frac{\sum_{t \in \cN_{j,k}} R_t  - 2c_{h^*}}{N_{j,k}}.
\]
This expression can be used as a misspecification test. Although $g^*_i$ and $g^*_j$ are unknown, the reward that the algorithm gathers in each model class and the regret bound $\bb_i(N_{i,k}, \delta)$ are observable and computable. Therefore, violation of the inequality
\begin{align}
\label{misspecification test}
     \frac{\bb_i(N_{i,k}, \delta) + \sum_{t \in \cN_{i,k}} r_t }{N_{i,k}} \geq \max_{j \geq i} \frac{\sum_{t \in \cN_{j,k}} r_t  - 2c_{h^*}}{N_{j,k}}
\end{align}
by an $i \in [0: M-1]$, indicates that \emph{at least} $\balg_i$ is not well specified and $i$ should be eliminated (or $\cG$ has not happened which is low probability). This also means that with high probability all the well-specified model classes remain active in all epochs. Thus we use the above inequality as the misspecification test in $\bear$.

\parhead{Regret Balancing.} The following is a key lemma.
\begin{lemma}
\label{Regretisbalanced main text}
By running algorithm $\bear$ with base algorithms of $\pmdt$, for all $0< \delta < 1$, $k \in \N$ and any pair of $i ,j \in \cI_k $ where $i \neq j$, and when $\bb_i(N_{i,k},\delta) = C_i\sqrt{N_{i,k} \log(\frac{N_{i,k}}{\delta})}$, the following hold.
\begin{enumerate}
    \item $\bb_i(N_{i,k}, \delta) \leq \bb_j(N_{j,k},\delta) + \alpha \bb_i(N_{i,k-1},\delta)+ \beta$
\item
    \label{NoverN}
$\frac{N_{i,k}}{N_{j,k}} \leq$ $\left(\frac{C_j}{(1-\alpha)C_i} + \frac{\beta}{(1-\alpha)C_i\sqrt{N_{j,k} \log(N_{j,k}/\delta)}}\right)^2 \log(\frac{N_{j,k}}{\delta})$,
\end{enumerate}
where $\alpha = \frac{\log(c_{h^*}^5 \lor 9)+1}{2\log(c_{h^*}^5 \lor 9)}$  and  $\beta = \bb_{M-1}(c_{h^*}^5,\delta) -\bb_0(c_{h^*}^5,\delta) $.
\end{lemma}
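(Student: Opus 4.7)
The plan is to prove part~(1) by induction on $k$ and derive part~(2) from it by algebraic manipulation. A key preliminary is a single-epoch growth bound: when $i = i_k$, I will show $\bb_i(N_{i,k}) \leq (1+\alpha)\bb_i(N_{i,k-1})$. This follows from the doubling trick used in $\pmdt$: within one inner epoch on $\cC_i$, the visits to any $(s,a)$ pair cannot exceed the pair's accumulated count before the epoch (otherwise the epoch would have ended earlier), so the epoch length is at most $N_{i,k-1}$, giving $N_{i,k} \leq 2N_{i,k-1}$. Combined with the warm-up lower bound $N_{i,k-1}\geq c_{h^*}^5 \lor 9$ and $\log 2 < 1$, a direct calculation gives
$$\frac{\bb_i(N_{i,k})^2}{\bb_i(N_{i,k-1})^2} = \frac{N_{i,k}}{N_{i,k-1}} \cdot \frac{\log(N_{i,k}/\delta)}{\log(N_{i,k-1}/\delta)} \leq 2 \left(1 + \tfrac{1}{L}\right) = 4\alpha,$$
where $L := \log(c_{h^*}^5 \lor 9)$. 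The elementary inequality $2\sqrt{\alpha}\leq 1+\alpha$ (equivalent to $(1-\alpha)^2\geq 0$) closes the step.

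For part~(1), the induction base case $k = 1$ uses the warm-up equalities $N_{i,0} = c_{h^*}^5 \lor 9$ and $i_1 = \arg\min_i \bb_i(N_{i,0})$: if $i = i_1$ the growth bound plus the selection rule give the claim (no $\beta$ needed); if $j = i_1$ then $\bb_i(N_{i,1}) = \bb_i(N_{i,0})$ and the spread is absorbed by $\beta = \bb_{M-1}(c_{h^*}^5,\delta) - \bb_0(c_{h^*}^5,\delta)$. For the inductive step, split on whether $i$ or $j$ equals $i_k$. If $i = i_k$, combining the growth bound with the selection rule $\bb_{i_k}(N_{i_k,k-1}) \leq \bb_j(N_{j,k-1})$ yields $\bb_i(N_{i,k}) \leq (1+\alpha)\bb_i(N_{i,k-1}) \leq \bb_j(N_{j,k}) + \alpha\bb_i(N_{i,k-1})$, since $\bb_j(N_{j,k}) = \bb_j(N_{j,k-1})$. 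If $j = i_k$, then $\bb_i(N_{i,k}) = \bb_i(N_{i,k-1})$, and I invoke the inductive hypothesis on the pair $(i,i_k)$ one step earlier: $\bb_i(N_{i,k-1}) \leq \bb_{i_k}(N_{i_k,k-1}) + \alpha\bb_i(N_{i,k-2}) + \beta$; monotonicity of $\bb_i$ in its first argument, together with $\bb_{i_k}(N_{i_k,k}) \geq \bb_{i_k}(N_{i_k,k-1})$, upgrades this to the desired bound. In the remaining case $i,j \neq i_k$, both $N_{i,k}$ and $N_{j,k}$ equal their values at step $k-1$ and the inductive hypothesis applies directly.

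For part~(2), using monotonicity $\bb_i(N_{i,k-1}) \leq \bb_i(N_{i,k})$ in part~(1) absorbs the $\alpha\bb_i(N_{i,k-1})$ term on the right, giving $(1-\alpha)\bb_i(N_{i,k}) \leq \bb_j(N_{j,k}) + \beta$. Expanding the definition of $\bb$ and dividing both sides by $C_i\sqrt{N_{j,k}\log(N_{j,k}/\delta)}$ yields
$$\sqrt{\frac{N_{i,k}\log(N_{i,k}/\delta)}{N_{j,k}\log(N_{j,k}/\delta)}} \leq \frac{C_j}{(1-\alpha)C_i} + \frac{\beta}{(1-\alpha)C_i\sqrt{N_{j,k}\log(N_{j,k}/\delta)}}.$$
Squaring and using $\log(N_{i,k}/\delta) \geq 1$ (valid since $N_{i,k} \geq c_{h^*}^5 \lor 9 > e$ and $\delta < 1$) to drop the reciprocal log factor on the left yields exactly the stated form.

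The hard part is the case $j = i_k$ in the inductive step: the selection rule gives the wrong direction of inequality between $\bb_i(N_{i,k-1})$ and $\bb_{i_k}(N_{i_k,k-1})$, so one is forced to invoke the inductive hypothesis on the pair $(i,i_k)$ one epoch earlier and then propagate forward using monotonicity. The value $\alpha = (L+1)/(2L)$ is calibrated precisely so that the per-epoch growth factor $2\sqrt{\alpha}$ is still bounded by $1+\alpha$; without the warm-up of $c_{h^*}^5 \lor 9$ iterations to guarantee $\log(N_{i,k-1}/\delta)\geq L$, the ratio $4\alpha$ could exceed $1$ and the whole induction would collapse.
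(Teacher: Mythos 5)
Your proposal is correct and follows essentially the same route as the paper's proof: the same per-epoch growth bound $\bb_i(N_{i,k}) \le (1+\alpha)\,\bb_i(N_{i,k-1})$ obtained from the doubling trick together with the warm-up lower bound, the same use of the $\arg\min$ selection rule and of $\beta$ to absorb the spread at the warm-up counts, and an identical rearrangement for part (2). The only cosmetic differences are that you run a direct induction with a case split on $i_k$ where the paper argues by contradiction at the first violating epoch, and that you derive the growth factor via the ratio computation and $2\sqrt{\alpha}\le 1+\alpha$ (which is equivalent to $(1-\sqrt{\alpha})^2\ge 0$, not $(1-\alpha)^2\ge 0$) rather than via concavity of $t\mapsto \bb_i(t,\delta)$.
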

The proof is in appendix \ref{Regretisbalanced} by induction on $k$. It exploits the doubling trick that $\pmdt$ uses, and the definition of $\bb_i$. From this lemma, we reach the following bound on the regret of misspecified model classes, based on the number of iterations consumed on them and on $\cC_{m^*}$.
\begin{lemma}
    \label{reg_iboundwithN main text}
    For any active model class $i \in \cI_k$  such that $i<m^*$, under the event $\cG$, the regret of model class $i$ is bounded as follows,
    \begin{align*}
      \Reg_i(N_{i,k-1})
\leq  (\frac{N_{i,k-1}}{N_{m^*,k-1}} + \frac{1}{1-\alpha})\bb_{m^*}(N_{m^*,k-1}) +\frac{2 N_{i,k-1}}{N_{m^*,k-1}} c_{h^*} + \frac{\beta}{1-\alpha}.
    \end{align*}
\end{lemma}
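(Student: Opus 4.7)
The plan is to chain two facts: (i) because $i \in \cI_k$, the misspecification test did not fire at epoch $k$, and (ii) the regret balancing inequality from Lemma \ref{Regretisbalanced main text}(1). Event $\cG$ supplies the two ingredients that tie them together, namely that $m^* \in \cI_k$ (so $j = m^*$ is a valid choice in every invocation of the test) and that $\balg_{m^*}$ meets its regret guarantee on $\cC_{m^*}$ at time $N_{m^*,k-1}$.

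First I would unpack the misspecification test. Since $i < m^*$ and $i \in \cI_k$, the non-firing condition with $j = m^*$ gives
\[
\frac{\bb_i(N_{i,k-1},\delta) + \sum_{t \in \cN_{i,k-1}} R_t}{N_{i,k-1}} \;\geq\; \frac{\sum_{t \in \cN_{m^*,k-1}} R_t - 2c_{h^*}}{N_{m^*,k-1}}.
\]
Under $\cG$, the well-specified base algorithm obeys $\sum_{t \in \cN_{m^*,k-1}} R_t \geq N_{m^*,k-1} g^* - \bb_{m^*}(N_{m^*,k-1})$. Multiplying the displayed inequality by $N_{i,k-1}$, substituting this lower bound on the $m^*$ rewards, and rearranging $\Reg_i(N_{i,k-1}) = N_{i,k-1} g^* - \sum_{t \in \cN_{i,k-1}} R_t$ onto the left yields
\[
\Reg_i(N_{i,k-1}) \;\leq\; \bb_i(N_{i,k-1},\delta) + \frac{N_{i,k-1}}{N_{m^*,k-1}}\bb_{m^*}(N_{m^*,k-1}) + \frac{2N_{i,k-1} c_{h^*}}{N_{m^*,k-1}}.
\]

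Next I would control $\bb_i(N_{i,k-1},\delta)$ via Lemma \ref{Regretisbalanced main text}(1). Applied at epoch $k-1$ with $j = m^*$, the inequality reads $\bb_i(N_{i,k-1},\delta) \leq \bb_{m^*}(N_{m^*,k-1},\delta) + \alpha\bb_i(N_{i,k-2},\delta) + \beta$. The right-hand side still contains a $\bb_i$ term at the earlier index $k-2$; the key move is that $N_{i,k-2} \leq N_{i,k-1}$ together with monotonicity of $C_i\sqrt{N\log(N/\delta)}$ in $N$ implies $\bb_i(N_{i,k-2},\delta) \leq \bb_i(N_{i,k-1},\delta)$. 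Folding this term onto the left and dividing by $1-\alpha$ gives
\[
\bb_i(N_{i,k-1},\delta) \;\leq\; \frac{1}{1-\alpha}\bb_{m^*}(N_{m^*,k-1}) + \frac{\beta}{1-\alpha}.
\]
Substituting into the previous display and grouping the two coefficients of $\bb_{m^*}(N_{m^*,k-1})$ reproduces the stated bound exactly.

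The main obstacle is the index mismatch in the balancing lemma: its bound on $\bb_i(N_{i,k-1},\delta)$ depends in turn on $\bb_i(N_{i,k-2},\delta)$ and would naively unroll into a recursion reaching back to the warm-up phase. Monotonicity of $\bb_i$ in its first argument collapses that recursion in a single step, which is why the final bound carries $\frac{1}{1-\alpha}$ rather than a geometric sum. A small bookkeeping point to verify at the outset is that $m^* \in \cI_{k-1}$ as well, so that Lemma \ref{Regretisbalanced main text}(1) actually applies to the pair $(i,m^*)$ at that epoch; this follows from the same high-probability event $\cG$ used above.
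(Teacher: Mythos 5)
Your proposal is correct and follows essentially the same route as the paper's proof: use the non-firing of the misspecification test with $j=m^*$, invoke $\cG$ to bound the $m^*$-rewards by its guarantee (the paper does the algebra by subtracting $g^\star$ from both sides and then bounding $\Reg_{m^*}\le\bb_{m^*}$, which is identical to your substitution), and then control $\bb_i(N_{i,k-1})$ by $\tfrac{1}{1-\alpha}\bb_{m^*}(N_{m^*,k-1})+\tfrac{\beta}{1-\alpha}$ via Lemma~\ref{Regretisbalanced main text}(1) plus monotonicity, exactly as the paper does through its intermediate inequality~\eqref{temp6}. The only remarks worth making are cosmetic: you re-derive that intermediate inequality from part (1) at epoch $k-1$ (which formally presupposes $k\ge 2$, though for $k=1$ the bound holds trivially since $N_{i,0}=N_{m^*,0}=c_{h^*}^5$ and $C_i\le C_{m^*}$), and your observation that $m^*\in\cI_{k-1}$ under $\cG$ matches the paper's justification.
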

The proof can be found in appendix \ref{reg_iboundwithN}. Again from lemma~\ref{Regretisbalanced main text} we know that the fraction $\frac{N_{i,k-1}}{N_{m^*,k-1}}$ is controlled which gives us the following result.
\begin{lemma}
\label{finalReg_ibound main text}
For any active model class $i \in \cI_{\K(T)+1}$  such that $i<m^*$, under the event $\cG$, the regret of model class $i$ is bounded as follows.
\begin{align*}
    \Reg_i(N_{i,\K(T)}) \leq \frac{C_{m^*}^3 \sqrt{N_{m^*,\K(T)}} \log^2(N_{m^*,\K(T)} /\delta)}{(1-\alpha)^2 C_i^2 } +   \frac{1}{1-\alpha}\bb_{m^*}(N_{m^*,\K(T)})+ O(\log^\frac{3}{2}(T/\delta))
\end{align*}
\end{lemma}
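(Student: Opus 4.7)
The plan is to combine Lemma \ref{reg_iboundwithN main text} (which bounds $\Reg_i$ in terms of the ratio $N_{i,k-1}/N_{m^*,k-1}$) with Lemma \ref{Regretisbalanced main text}, part \ref{NoverN} (which bounds this ratio in terms of the constants $C_i, C_{m^*}$ and $\log(N_{m^*,k-1}/\delta)$), instantiated at the final epoch $k = \K(T)+1$. Since $i$ remains active through $\K(T)+1$ and $i < m^*$, both lemmas apply and the well-specified class $m^*$ is a valid choice of $j$.

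Concretely, I would first apply Lemma \ref{reg_iboundwithN main text} to obtain
\[
\Reg_i(N_{i,\K(T)}) \leq \left(\frac{N_{i,\K(T)}}{N_{m^*,\K(T)}} + \frac{1}{1-\alpha}\right)\bb_{m^*}(N_{m^*,\K(T)}) +\frac{2 N_{i,\K(T)}}{N_{m^*,\K(T)}}\,c_{h^*} + \frac{\beta}{1-\alpha}.
\]
The $\frac{1}{1-\alpha}\bb_{m^*}(N_{m^*,\K(T)})$ piece and the $\frac{\beta}{1-\alpha}$ constant already match the desired form, so the remaining task is to absorb the two terms proportional to $N_{i,\K(T)}/N_{m^*,\K(T)}$ into the leading $\tilde O(C_{m^*}^3 C_i^{-2}\sqrt{N_{m^*,\K(T)}})$ term and lower-order $O(\log^{3/2}(T/\delta))$ error.

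Next I would substitute the bound from Lemma \ref{Regretisbalanced main text}, part \ref{NoverN}, with $j=m^*$:
\[
\frac{N_{i,\K(T)}}{N_{m^*,\K(T)}} \leq \left(\frac{C_{m^*}}{(1-\alpha)C_i} + \frac{\beta}{(1-\alpha)C_i\sqrt{N_{m^*,\K(T)} \log(N_{m^*,\K(T)}/\delta)}}\right)^{\!2} \log\!\left(\frac{N_{m^*,\K(T)}}{\delta}\right),
\]
expand the square into three pieces, and multiply by $\bb_{m^*}(N_{m^*,\K(T)}) = C_{m^*}\sqrt{N_{m^*,\K(T)}\log(N_{m^*,\K(T)}/\delta)}$. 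The leading piece $\bigl(\frac{C_{m^*}}{(1-\alpha)C_i}\bigr)^2\log(N_{m^*,\K(T)}/\delta)$ multiplied by $\bb_{m^*}$ yields exactly the dominant term $\frac{C_{m^*}^3 \sqrt{N_{m^*,\K(T)}}\,\log^{3/2}(N_{m^*,\K(T)}/\delta)}{(1-\alpha)^2 C_i^2}$, which (using $\log(T/\delta)\geq 1$) is upper bounded by the stated $\log^2$ expression. The cross term contributes only $O(\log(T/\delta))$ after the factors of $\sqrt{N_{m^*,\K(T)}}$ cancel with the $1/\sqrt{N_{m^*,\K(T)}\log}$ inside the square, and the squared $\beta$-piece contributes $O(\log^{1/2}(T/\delta))$ after similar cancellation; both are absorbed into $O(\log^{3/2}(T/\delta))$.

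Finally, for the $c_{h^*}$ term I would apply the same ratio bound: its leading contribution is $2c_{h^*}\bigl(\frac{C_{m^*}}{(1-\alpha)C_i}\bigr)^2\log(N_{m^*,\K(T)}/\delta) = O(\log(T/\delta))$, again absorbed in the lower-order term. The main obstacle is bookkeeping: keeping track of which pieces of the expanded square, after multiplication by $\bb_{m^*}$, are genuinely of the claimed leading order versus which are subdominant and safe to hide inside $O(\log^{3/2}(T/\delta))$. A careful expansion, using the fact that $N_{m^*,\K(T)} \geq c_{h^*}^5$ from the warm-up, lets every subdominant term be bounded by a constant times $\log^{3/2}(T/\delta)$, completing the proof.
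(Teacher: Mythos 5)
Your proposal is correct and follows essentially the same route as the paper's proof: instantiate Lemma \ref{reg_iboundwithN main text} at the final epoch, substitute the ratio bound of Lemma \ref{Regretisbalanced main text} (part 2) with $j=m^*$, expand the square, and check that the cross and $\beta^2$ pieces plus the $c_{h^*}$ term are all $O(\log^{3/2}(T/\delta))$. Your accounting of the leading term as $\log^{3/2}$, then relaxed to the stated $\log^2$, is in fact slightly tighter than (but consistent with) the paper's bookkeeping.
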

The factor $ \frac{1}{2}\leq \alpha \leq \frac{3}{4}$ in lemma~\ref{Regretisbalanced main text} makes a multiplicative balance between the guarantees, which roughly means that for all $j \geq m^*$ the guarantee $\bb_j \leq \frac{1}{1-\alpha} \bb_{m^*}$. Note these are well-specified regret guarantees so that they can really be an upperbound on the performance of $\balg_j$. Thus, we have $\sum_{i = m^*}^{M-1} \Reg_i \leq \frac{M-m^*}{1-\alpha} \bb_{m^*} $. For the other term corresponded to misspecified model classes, i.e.,  $\sum_{i = 0}^{m^*-1} \Reg_i $, we use lemma~\ref{finalReg_ibound main text}. Adding these two parts implies theorem~\ref{maintheorem} (see appendix~\ref{maintheorem proof}).

\section{Application to Repeated Games}
\label{application to games}

In this section, we come back to our motivating question, and apply the model selection result to repeated games.  The learner's goal is to maximize its utility without knowledge of opponent's memory order. Consider a two-player game $G$ between a learner $\mathbb{A}$ and an opponent $\mathbb{B}$. The action spaces of the two players are denoted by $\cA$ and $\cB$ respectively, and we denote their cardinality by $\bA = |\cA|$ and $\bB = |\cB|$. Both player's are going to simultaneously play the stage game $G$, repeatedly for $T$ rounds. Learner's utility function denoted by $\bU : \cA \times \cB \rightarrow [0,1]$, given an action profile outputs a positive real value less than or equal to $1$.
We denote the two players' actions for time step $t$ by $A_t$ and $B_t$. The pair of $A_t$ and $B_t$ makes the interaction of time step $t$ denoted by $O_t = (A_t,B_t)$. We save the history of the game in a tuple $\cO_t = ((A_1,B_1),(A_2,B_2),\dots,(A_t,B_t)) = (O_1,O_2,\dots,O_t)$.
A (not necessarily memroy-less) policy $\boldsymbol{\pi}$ is a tuple of $(\pi_1,\dots,\pi_T)$, where each $\pi_t: (\mathcal{A}\times \mathcal{B})^{t-1} \rightarrow \Delta_\cA$ gets the history of the game and outputs a distribution over the  learner's action space. The resulting distribution gives the mixed strategy played by the learner in the stage game at that time step.
\begin{definition} \label{policyorder}
    A policy $\boldsymbol{\pi} = (\pi_1,\dots,\pi_T)$ is $m$-th order if $m \in \N$ is the \emph{smallest} natural number such that there exists a $\pi : (\cA \times \cB)^m \rightarrow \Delta_\cA$ where,
    \[
    \pi_t(o_{1:t-1}) = \pi(o_{t-m:t-1}) \quad \forall t \in [T]\ ,\ o_{1:t-1} \in (\cA \times \cB)^{t-1}.
    \]
In other words, the output distributions of $\boldsymbol{\pi}$ only depend on the last $m$ interactions in history, and are independent from the previous ones, and $t$. In this case $\boldsymbol{\pi}$ is fully representable by $\pi$. 
\end{definition}
With slight abuse of notation, we denote $ \Prob_\pi[A_t=a_t|O_{t-1:t-m} = o_{t-1:t-m}] = \pi(a_t|o_{t-1:t-m})$. 
The set of all $m$-th order policies over action space $\cA$ is denoted by $\Pi_\cA^m$. The set of all policies over $\cA$ is given by $\Pi_\cA = \bigcup_{m=0}^\infty \Pi_\cA^m$.

\parhead{Best Response.} The best response (in the repeated game) against policy $\boldsymbol{\psi} \in \Pi_\cB$ is computed as follows,
\begin{align} \label{BR}
    \mathsf{BR}(\boldsymbol{\psi},s_1)  = \arg\max_{\boldsymbol{\pi} \in \Pi_\cA} \E \sumt \bU(A_t,B_t) = \arg\max_{\boldsymbol{\pi} \in \Pi_\cA} V^{\boldsymbol{\psi},\boldsymbol{\pi}}_T(s_1),
\end{align}
where $A_t \sim \pi_t(.|o_{1:t-1}) $and $B_t \sim \psi_t(.|o_{1:t-1})$.

\begin{assumption}
    We assume $\boldsymbol{\psi}$ is an $m^*$-th order policy where $m^*$ is unknown to the learner, while the learner knows of an upper bound $M$ such that, $M > m^*$. 

\end{assumption}

\label{RL Setting}
\parhead{Modeling as Average Reward RL.}
The Learner considers a model class $\cC_m$ for each possible memory order of $0\leq m \leq M-1 $. Suppose $m\geq m^*$. All of the MDPs in $\cC_m$ share the state space $\cS_m = (\cA\times \cB)^m$ with the cardinality $|\cS_m| = \bS_m$, and the action space $\cA$, which is the set of learner's pure strategies in $G$.

Given a state $s_t = (o_{t-1},\dots, o_{t-m})$ and action $a_t$, the reward $R^\psi_t = r(s_t,a_t) = \bU(a_t,B_t)$ is obtained by applying learner's utility function to the action profile played in time-step $t$. As $\psi$ is unknown, the action $B_t$ is random, and the distribution over the rewards is also unknown. The transition dynamic induced by $\psi$, $P^\psi : \cS_m \times \cA \rightarrow \cS_m$, is defined as follows,
\begin{align*}
     P^\psi(s' | a,s)  = \Prob[o'_{m:1} | a, o_{m:1}]
 = \begin{cases}
    \psi (b'_m|o_{m:m-m^*+1}) & \text{if}\  a'_m = a \ , \ o'_{m-i} = o_{m-i+1} \\
    0 & \text{otherwise }
\end{cases}
\end{align*}

where $o_i=(a_i,b_i)$ and $o'_i=(a'_i,b'_i)$ for all $i \in [m]$.
\begin{assumption}
\label{weakly communicating assumption}
The opponent's policy $\psi$ induces a weakly communicating MDP.
\end{assumption}
For $m$-th order policies, the very first $m$ actions can not be taken as the history is not long enough to make the first state. We assume the first $m$ actions, (for simplicity indexed by $a_{1-m:0}$ in negative time steps $t\in[1-m:0]$) are chosen such that the initial state $s_1$ is a sample from distribution $\mu$. As we assume $\psi$ induces a weakly communicating  MDP, the initial state does not affect the performance of the algorithm.
These elements collectively build the MDP of $\cM=(\cS_m,\cA,P^\psi,R^\psi,T,\mu)$. We omit superscript $\psi$ on $P^\psi$ and $R^\psi$ when having no change in $\psi$ is clear from the context.


\begin{remark} \label{m<m^* models remark}
It is important to note that only for $m \geq m^*$ the opponent's policy $\psi$ induces a well-defined \emph{Markov} transition probability and reward function. The observed actions of player $\pB$ coming from an $m^*$-th order $\psi$, and their resulting rewards, are not representable with any lower order model classes $m < m^*$ , thus the regret guarantees of those under specified based algorithms may not hold. The corresponding base algorithms for $m \geq m^*$ are all well-specified.
\end{remark}



As it was clear from the best response definition (\ref{BR}), the value of a policy $\pi$ is $V^{\psi,\pi}_T(s_1) = \E_{\pi,\psi} \sumt \bU(A_t,B_t)$, and is maximized by the best response policy $\mathsf{BR}(\psi,s_1)$. However, the \emph{optimal policy} $\pi^*_\psi$ is the one that maximizes $g^\pi_\psi = \lim_{T\rightarrow \infty} \frac{1}{T} V^{\psi,\pi}_T(s_1)$, and is independent of the initial state $s_1$ in weakly communicating MDPs (Refer to assumption \ref{weakly communicating assumption}). In the appendix \ref{gvtgap} we show the difference between the rewards obtained by $\mathsf{BR}(\psi,s_1)$ and $\pi^*_\psi$ is less than $2 \spn(h^*)$, which is negligible.

\begin{proposition} {\label{prop0}}
     Against any $m$-th order policy $\psi$ and for any policy $\pi \in \Pi_\cA$, there exists a policy $\pi'\in \bigcup_{i=0}^m \Pi_\cA^i$ in the order of at most $m$, where $\pi$ and $\pi'$ have the same value, i.e., $V^{\psi,\pi}_T(s_1) = V^{\psi,\pi'}_T(s_1)$ for all $s_1 \in \cS_m$ and $T \in \N$.
\end{proposition}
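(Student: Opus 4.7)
The plan is to reduce this to a standard state-collapse argument in MDPs. Since $\psi$ is $m$-th order, the map $S_t := O_{t-m:t-1}$ is a sufficient statistic for both $\psi$'s next action and the current reward: we have $B_t \sim \psi(\cdot \mid S_t)$, $R_t = \bU(A_t, B_t)$, and the successor state $S_{t+1} = (O_{t-m+1}, \ldots, O_{t-1}, (A_t, B_t))$ is determined by $(S_t, A_t, B_t)$. Thus the interaction with $\psi$ defines a genuine MDP $\cM^\psi$ on state space $\cS_m$, in which $m$-th order game policies coincide exactly with stationary Markov policies.

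Writing $\pi \in \Pi_\cA^k$ for some $k$, I split on whether $k \leq m$ or $k > m$. If $k \leq m$, then $\pi$ already factors through $S_t$ (the extra entries of $\cS_m$ beyond the last $k$ interactions are ignored by $\pi$), so $\pi$ itself belongs to $\bigcup_{i=0}^m \Pi_\cA^i$ and we simply take $\pi' = \pi$. The substantive case is $k > m$, where $\pi$ conditions on extra older interactions $O_{t-k:t-m-1}$ not captured by the current MDP state.

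For $k > m$, my construction is the natural state-aggregation policy:
\begin{equation*}
\pi'(a \mid s) := \sum_{h \in (\cA \times \cB)^{k-m}} \rho^{\pi,\psi}(h \mid s)\, \pi(a \mid h, s),
\end{equation*}
where $\rho^{\pi,\psi}(h \mid s)$ is the conditional distribution over the extra $k-m$ older interactions given that the current MDP state is $s$, induced by the joint $(\pi, \psi)$ process. Because the transitions and rewards of $\cM^\psi$ depend only on $(S_t, A_t)$, I then argue by induction on $t$ that the joint distribution of $(S_t, A_t)$ under $(\pi', \psi)$ matches the marginal under $(\pi, \psi)$. Summing the per-step rewards $R^\psi(S_t, A_t)$ over $t = 1, \ldots, T$ then yields $V^{\psi,\pi}_T(s_1) = V^{\psi,\pi'}_T(s_1)$, and $\pi'$ is $m$-th order by construction.

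I expect the main obstacle to be the inductive step, since marginals of a Markov chain on a coarser state space need not remain Markov in general. Here it works precisely because both the $\psi$-transition of $S_t$ and the reward depend on $(S_t, A_t)$ alone, so once one matches $\Prob[A_t \mid S_t]$ between $\pi$ and $\pi'$ the downstream joint evolution agrees in distribution. Carrying this out cleanly requires verifying that the auxiliary distribution $\rho^{\pi,\psi}(\cdot \mid s)$ is self-consistent under the $(\pi, \psi)$ dynamics in the relevant sense, which is where the stationarity of both $\pi$ and $\psi$ is essential.
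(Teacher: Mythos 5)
Your construction of $\pi'$ is essentially the right one (with the time-aggregated choice of $\rho$, your mixture $\pi'(a\mid s)=\sum_h \rho^{\pi,\psi}(h\mid s)\,\pi(a\mid h,s)$ coincides with the occupancy-ratio policy the paper uses), but the proof strategy you propose for value equality has a genuine gap: the inductive claim that the joint law of $(S_t,A_t)$ under $(\pi',\psi)$ matches that under $(\pi,\psi)$ \emph{at every time step} $t$ is false in general. Already at $t=1$ it breaks: under $(\pi,\psi)$ we have $\Prob_{\pi,\psi}(A_1=a\mid S_1=s)=\sum_h \rho_1(h\mid s)\,\pi(a\mid h,s)$ where $\rho_1$ is the time-$1$ conditional of the hidden older history, whereas $\pi'$ mixes with the time-aggregated (or stationary) weights $\rho(\cdot\mid s)$, and these conditionals differ for a generic initial distribution and finite horizon. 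More generally, $\Prob_{\pi,\psi}(A_t=a\mid S_t=s)$ is time-varying while $\pi'$ is time-invariant, so no single time-independent $\rho$ can force per-step matching; and if you let $\rho$ depend on $t$ to repair the induction, the resulting $\pi'$ is no longer an $m$-th order policy. The observation that rewards and transitions depend only on $(S_t,A_t)$ does not rescue the per-step claim — it only tells you that matching the \emph{aggregated} statistics of $(S_t,A_t)$ suffices for the value.

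That aggregation is exactly how the paper's proof goes: it defines the cumulative occupancy $\lambda_{\pi,\psi}(o_{m:1})=\sum_{t=1}^T \Prob_{\pi,\psi}(O_{t-1:t-m}=o_{m:1})$, shows that $\lambda_{\pi,\psi}$ and $\lambda_{\pi',\psi}$ satisfy the same linear balance (flow) equations with the same initial term, deduces $\lambda_{\pi',\psi}=\lambda_{\pi,\psi}$ from uniqueness of the solution of the resulting homogeneous system, and then rewrites $V^{\psi,\pi}_T(s_1)$ as a linear functional of $\lambda$, $\pi'$, $\psi$, and $\bU$ so that equality of the summed occupancies gives equality of values. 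To fix your argument you should replace the per-step induction with this summed-over-$t$ occupancy identity (note also that the resulting $\pi'$ then depends on the horizon $T$, which is consistent with the statement read as ``for each $T$ there exists such a $\pi'$''), and note that the paper's version handles arbitrary history-dependent, time-varying $\pi$, not only stationary $k$-th order ones.
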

The complete proof is in appendix \ref{prop0 proof}.
This implies that the domain of the maximum in (\ref{BR}) can be $\Pi^{m^*}_\cA$ instead of $\Pi_\cA$.

\subsection{Regret Upper Bound in Game Setting}
The opponent by taking a policy $\psi$ in the order of $m^*$ makes all $\balg_{m^*} \dots \balg_{M-1}$ well specified. Therefore the assumption \ref{Nested Assumption} is satisfied. Also, proposition \ref{prop0} implies that the optimal gain $g^*_\psi$ is invariant in the induced MDPs of $\psi$ in $\cC_{m^*}$ to $\cC_{M-1}$. Therefore the assumption \ref{Realizability Assumption} holds as well. Another important result from proposition \ref{prop0} is that in all well-specified model classes, $i \in [m^*,M-1]$ the spans of $h^*_i$ are the same. Thus we can use $\spn(h^*)$ without specifying the model class in which optimal bias is defined. Refer to appendix  \ref{prop0 proof} for the proof of previous claims. Also, the mild assumptions needed for the base algorithm $\pmdt$ are in the appendix \ref{properM_tbypsi}. Thus we can apply $\bear$ to the repeated game setting.

Depending on $\psi$'s input, we focus on two types of opponents. First, general opponent for which we only assume an unknown limited memory of $m^*$ for the opponent, and $\psi$ might depend on both the learner's and opponent's actions, i.e., $\psi: (\cA \times \cB)^{m^*} \rightarrow \Delta_\cB$. Second, self-oblivious opponent, which is the case that $\psi: \cA^{m^*} \rightarrow \Delta_\cB$  only depends on the learner's actions. A self-oblivious opponent induces a known deterministic transition kernel. Further explanation about different opponents is in appendix \ref{Different Types of Opponent}.



\begin{corollary}
\label{corollary}
For $T$ rounds of playing a repeated game, against an opponent with an unknown memory $m^* \leq M$,  using $\bear$ with $\pmdt$ as base algorithm, gives us the following bound on the regret with a probability of at least $1 - 26MT\delta$,
\begin{align*}
\Reg(T) \leq \boldsymbol{C} M (\bA^{m^*+1} \bB^{m^*} \spn(h^*)\log(T /\delta))^\frac{3}{2} \ T^\frac{1}{2} 
\end{align*}
where $\boldsymbol{C}$ is a constant satisfying \ref{constt}. And for self-oblivious opponents we have,
\begin{align*}
    \Reg(T) \leq \boldsymbol{C} M (\bA^{m^*+1} \spn(h^*))^\frac{3}{2} \ T^\frac{1}{2} \log^{\frac{3}{2}}(T /\delta)
\end{align*}
\end{corollary}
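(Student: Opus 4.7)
The plan is to verify that Theorem \ref{maintheorem} applies to the induced MDP family and then substitute the explicit form of $C_{m^*}$ coming from Proposition \ref{baseguarantee}.

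First, I would confirm the hypotheses of Theorem \ref{maintheorem}. By Remark \ref{m<m^* models remark}, an $m^*$-th order opponent policy $\psi$ induces Markov transitions and rewards in every model class $\cC_m$ with $m \ge m^*$, so each $\balg_m = \pmdt$ is well-specified with guarantee $\bb_m(T,\delta)$, which yields Assumption \ref{Nested Assumption}. Proposition \ref{prop0} shows that the optimal gain $g^*_\psi$ is the same for every $\cC_m$, $m \ge m^*$, establishing Assumption \ref{Realizability Assumption} and, simultaneously, that the optimal-bias span $\spn(h^*)$ is invariant across these classes (so a single $\spn(h^*)$ can legitimately appear in the bound). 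Assumption \ref{weakly communicating assumption} guarantees each induced MDP is weakly communicating, and the confidence-region condition for $\pmdt$ is checked in appendix \ref{properM_tbypsi}.

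Second, I would compute $C_{m^*}$ from (\ref{constt}) using the dimensions of the induced MDP. Since $\bU \in [0,1]$ we have $\spn(r) \le 1$, and the learner's action space has size $\bA_{m^*} = \bA$. For a general opponent with $\psi : (\cA \times \cB)^{m^*} \to \Delta_\cB$, the state records the last $m^*$ interactions, so $\bS_{m^*} = (\bA \bB)^{m^*}$ and
\[
C_{m^*}^2 \;\le\; \boldsymbol{C}^{2/3}(1+\spn(h^*))\,\bA^{m^*+1}\bB^{m^*}.
\]
For a self-oblivious opponent $\psi : \cA^{m^*} \to \Delta_\cB$, the induced transition depends only on the learner's past actions, so $\bS_{m^*} = \bA^{m^*}$ and $C_{m^*}^2 \le \boldsymbol{C}^{2/3}(1+\spn(h^*))\,\bA^{m^*+1}$.

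Third, I would plug $\bb_{m^*}(T,\delta) = C_{m^*}\sqrt{T\log(T/\delta)}$ into Theorem \ref{maintheorem}. Since $m^* \le M$ and $C_0 \ge 1$, the dominant contribution comes from the misspecification term $\tfrac{m^* C_{m^*}^2 \log^{3/2}(T/\delta)}{(1-\alpha)^2 C_0^2}\,\bb_{m^*}(T,\delta) = \tilde O\!\left(M C_{m^*}^3\sqrt{T}\right)$, while the well-specified part $\tfrac{M}{1-\alpha}\,\bb_{m^*}(T,\delta)=\tilde O(M C_{m^*}\sqrt{T})$ is of lower order in $C_{m^*}$. Substituting the two expressions for $C_{m^*}^3$ yields the two stated bounds, and the probability $1-26MT\delta$ is inherited directly from Theorem \ref{maintheorem}. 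The main obstacle, such as it is, is bookkeeping: we must ensure that the constant $\boldsymbol{C}$ in Proposition \ref{baseguarantee} can be taken uniformly across all well-specified classes $m \ge m^*$, which is again guaranteed by Proposition \ref{prop0} giving a single common $\spn(h^*)$. Beyond that, the corollary is a direct specialization of the main theorem.
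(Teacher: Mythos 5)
Your proposal follows the same route as the paper: verify Assumptions \ref{Nested Assumption} and \ref{Realizability Assumption} via Remark \ref{m<m^* models remark} and Proposition \ref{prop0} (which also gives the single common $\spn(h^*)$), read off $\bS_{m^*}=(\bA\bB)^{m^*}$ (resp.\ $\bA^{m^*}$ for self-oblivious opponents) and $\spn(r)\le 1$ to get $C_{m^*}$ from \eqref{constt}, and then specialize Theorem \ref{maintheorem}, whose dominant term is $\tilde O(M C_{m^*}^3\sqrt{T})$. This is exactly how the paper obtains the corollary, so the proposal is correct and essentially identical to the paper's argument.
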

This corollary is the result of the main theorem \ref{maintheorem} in the repeated game setting. One can easily show that the choice of $\delta  = \frac{\log(T)}{\sqrt{T}}$ gives an expected regret in $O(M (\bA^{m^*+1} \bB^{m^*} \spn(h^*))^\frac{3}{2} \ T^\frac{1}{2})$.

In our setting, the size of the state space (i.e., $(\cA \times \cB)^m$) increases exponentially with respect to the memory $m$. Therefore, instead of having a regret bound in $\tilde  O(\sqrt{\bA^M \bB^{M-1} T})$ by directly learning in $\cC_M$ , $\bear$ enjoys from a regret in $\tilde  O(M\sqrt{\bA^{3(m^*+1)} \bB^{3m^*} T})$ which is an exponential improvement with respect to the memory. Furthermore, in lowerbound section \ref{lowerbound}, we show that having the exponent of $m^*$, the unknown opponent's memory, in regret bound is inevitable.

\subsection{From $\psi$ to $\spn(h^*)$} \label{from psi to span}

In this section, we study how different choices of $\psi$ with the same memory of $m$ affect the span of optimal bias and give an upper bound $c_{h^*}$ on $\spn(h^*)$ based on the Kemeny's index induced by $\psi$. Both propositions are proved in appendix\ \ref{proof of upperbound on span}.

\begin{proposition}
    \label{span-psi prop}
    For any policy $\pi$ in an ergodic or unichain MDP we have $\spn(h^\pi) \leq 2 \spn(r^\pi)\kappa^\pi$, where $h^\pi$ and $r^\pi$ are respectively the bias and reward vectors induced by $\pi$ and $\kappa^\pi$ is the Kemeny's constant of the Markov chain with transition kernel $P^\pi$. The same inequality holds in weakly communicating MDPs for the optimal policy $\pi^*$,
        $\spn(h^*) \leq 2 \spn(r^*)\kappa^*$.
\end{proposition}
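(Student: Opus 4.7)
The plan is to combine the first-passage-time representation of the bias with the defining property of Kemeny's constant. Let $\pi$ be a policy whose induced Markov chain on $\cS$ is ergodic or unichain, let $\nu^\pi$ denote its stationary distribution, and let $T_{s'} := \inf\{t \geq 0 : S_t = s'\}$ be the first hitting time of state $s'$. By splitting the series $h^\pi(s) = \mathbb{E}_s^\pi\bigl[\sum_{t=0}^{\infty}(r^\pi(S_t) - g^\pi)\bigr]$ at the stopping time $T_{s'}$ and using the strong Markov property to identify the post-$T_{s'}$ tail with $h^\pi(s')$, one obtains the identity
\[
h^\pi(s) - h^\pi(s') \;=\; \mathbb{E}_s^\pi\Bigl[\sum_{t=0}^{T_{s'}-1} \bigl(r^\pi(S_t) - g^\pi\bigr)\Bigr].
\]

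Since $g^\pi = \nu^\pi \cdot r^\pi$ is a convex combination of entries of $r^\pi$, we have $|r^\pi(s'') - g^\pi| \leq \spn(r^\pi)$ for every $s''$, and hence $|h^\pi(s) - h^\pi(s')| \leq \spn(r^\pi)\,\mathbb{E}_s^\pi[T_{s'}]$. Averaging over $s'$ drawn from $\nu^\pi$ and invoking the defining property of Kemeny's constant---namely that $\kappa^\pi = \sum_{s'} \nu^\pi(s')\,\mathbb{E}_s^\pi[T_{s'}]$ is independent of $s$---gives, for $\bar h := \sum_{s'} \nu^\pi(s')\,h^\pi(s')$,
\[
\bigl|h^\pi(s) - \bar h\bigr| \;\leq\; \sum_{s'} \nu^\pi(s')\,\bigl|h^\pi(s) - h^\pi(s')\bigr| \;\leq\; \spn(r^\pi)\,\kappa^\pi.
\]
Taking $s^+$ and $s^-$ to achieve $\max_s h^\pi(s)$ and $\min_s h^\pi(s)$ and applying the triangle inequality,
\[
\spn(h^\pi) \;=\; \bigl(h^\pi(s^+) - \bar h\bigr) + \bigl(\bar h - h^\pi(s^-)\bigr) \;\leq\; 2\,\spn(r^\pi)\,\kappa^\pi,
\]
which is the first claim.

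For the weakly communicating case, any deterministic optimal policy $\pi^*$ can be chosen so that its induced Markov chain is unichain, since the state space decomposes into a single closed communicating class plus a set of uniformly transient states, and the optimal gain is constant. Because $h^*$ coincides (up to an additive constant that does not affect $\spn$) with the bias of such a $\pi^*$, applying the argument above to $\pi^*$ directly yields $\spn(h^*) \leq 2\,\spn(r^*)\,\kappa^*$. The main delicate point I expect to handle is justifying the first-passage identity and the stationary averaging when the starting state $s$ is transient under $\pi^*$; this still goes through because $T_{s'}$ is almost-surely finite under any unichain dynamics (the chain eventually enters the recurrent class and visits every state in it), so the series splitting at the stopping time and the Kemeny-constant identity remain valid.
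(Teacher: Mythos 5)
Your argument is correct in substance but follows a genuinely different route from the paper. The paper works linear-algebraically: it writes $h^\pi = Z^\pi(r^\pi - g^\pi)$ with the fundamental matrix $Z^\pi = (I-P^\pi+P^\pi_\infty)^{-1}$, bounds $\spn(h^\pi)$ by $2\spn(r^\pi)\,\tau_1(Z^\pi)$ using the ergodicity coefficient, and then invokes Seneta's eigenvalue bound $\tau_1(Z^\pi) \le \sum_{k\ge 2}|1-\lambda_k|^{-1} = \mathrm{tr}\big((I-P^\pi)^{\#}\big) = \kappa^\pi$. You instead use the probabilistic identity $h^\pi(s)-h^\pi(s') = \E_s^\pi\big[\sum_{t<T_{s'}}(r^\pi(S_t)-g^\pi)\big]$, bound each summand by $\spn(r^\pi)$, and average over $s'\sim\nu^\pi$ to bring in Kemeny's constant; this is more elementary and self-contained (no spectral input), at the cost of implicitly using the equivalence of the hitting-time and trace/eigenvalue definitions of $\kappa^\pi$, which is exactly Kemeny's theorem for irreducible chains. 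Two caveats: first, in the unichain case your appeal to "the defining property" is not literally correct when the start state $s$ is transient --- $\sum_{s'}\nu^\pi(s')\E_s[T_{s'}]$ then equals the Kemeny constant of the recurrent class plus $\E_s[\tau_{\mathrm{rec}}]$, so it does depend on $s$; the bound still goes through because $\E_s[\tau_{\mathrm{rec}}]$ is at most the sum over transient states $s''$ of the expected number of returns to $s''$ from $s''$, so the averaged hitting time is bounded by $\mathrm{tr}\big((I-P^\pi)^{\#}\big)$, the quantity the paper calls $\kappa^\pi$, but this step deserves a line of justification rather than the independence claim. Second, your weakly communicating step needs the policy attaining the optimality equation to induce a unichain (so that $h^*$ equals its bias up to a constant and $\kappa^*$ is well defined); the paper glosses this point in essentially the same way, so it is not a gap relative to the paper's own standard of rigor, but it is the delicate spot in both arguments.
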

In the repeated game setting, as the range of the utility function is assumed to be $[0,1]$, $\spn(r^\pi) \leq 1$ for all $\pi$. 
In the self-oblivious case, by leveraging on the structure of the MDP, we can have the following upper bound on the $\spn(h^*)$,
\begin{proposition}
\label{self-oblivious span}
    In a self-oblivious model $\cM$ with the order of $m$ (described in appendix \ref{deter. tran.}), for every policy $\pi$ the upper bound $ D(\cM) \leq m $ on the diameter holds. This implies $\spn(h^*) \leq  m \spn(r^*)$.
\end{proposition}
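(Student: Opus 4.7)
The plan is to exploit the deterministic shift-register structure of the self-oblivious MDP and to construct explicit open-loop action sequences that transit between any pair of states in at most $m$ transitions.

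First, I would spell out the transition structure in the self-oblivious case. Because the opponent's mixing distribution $\psi(\cdot \mid a_{t-1}, \ldots, a_{t-m})$ depends only on the learner's recent actions and not on the opponent's own past actions, the state at time $t$ can be identified with $s_t = (a_{t-1}, a_{t-2}, \ldots, a_{t-m}) \in \cA^m$ (the formulation recalled in appendix~\ref{deter. tran.}, consistent with the remark in section~\ref{application to games} that a self-oblivious opponent induces a known deterministic kernel). Under this identification, playing action $a_t$ from state $s_t = (a_{t-1}, \ldots, a_{t-m})$ produces $s_{t+1} = (a_t, a_{t-1}, \ldots, a_{t-m+1})$, that is, $a_t$ is shifted in on the left while the oldest coordinate $a_{t-m}$ is dropped. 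Rewards remain stochastic through $B_t$, but they play no role in the diameter bound.

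Second, given any source state $s = (a_1, \ldots, a_m)$ and any target state $s' = (a'_1, \ldots, a'_m)$, I would exhibit a deterministic open-loop policy that reaches $s'$ from $s$ in $m$ transitions: play the sequence $a'_m, a'_{m-1}, \ldots, a'_1$. A short induction on $k$ shows that after $k \le m$ such actions the state equals $(a'_{m-k+1}, \ldots, a'_m, a_1, \ldots, a_{m-k})$, so at $k = m$ the state is exactly $(a'_1, \ldots, a'_m) = s'$. Because this deterministic trajectory works uniformly in $s$ and $s'$, the infimum over policies of the expected hitting time of $s'$ starting from $s$ is at most $m$, giving $D(\cM) \le m$.

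Third, I would combine this with the standard span-diameter inequality $\spn(h^*) \le D(\cM)\,\spn(r^*)$, which follows from the Bellman optimality equation by telescoping along a minimum-expected-time trajectory between the argmax and the argmin of $h^*$; this yields $\spn(h^*) \le m\,\spn(r^*)$. The main point requiring care is to confirm that passing to the reduced $\cA^m$ state representation does not alter the optimal bias span relative to the nominal $(\cA\times\cB)^m$ formulation of section~\ref{application to games}. This should follow because, in the self-oblivious case, the opponent's past actions carry no information relevant to the learner's decision problem, so the Bellman operators on the two state spaces agree under the natural projection that forgets the $\cB$ components. Modulo this bookkeeping, the proof reduces to the elementary combinatorial construction above.
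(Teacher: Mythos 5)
Your proposal is correct and follows essentially the same route as the paper: use the deterministic shift-register structure to reach any target state by playing its defining action sequence in at most $m$ steps (hence $D(\cM)\le m$), then invoke the standard bound $\spn(h^*)\le \spn(r^*)\,D(\cM)$. The extra bookkeeping you flag about the reduced $\cA^m$ representation is not needed here, since the proposition is stated for the self-oblivious model of appendix~\ref{deter. tran.}, whose state space is already $\cA^m$ (and the value-equivalence across representations is handled separately by Proposition~\ref{prop0}).
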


\subsection{Lower Bound}
\label{lowerbound}
The MDPs constructed in the repeated game setting have a special structure that prevents us from directly using previous lower bounds designed for generic MDPs  \citep{Jaksch2008near,osband2016lowerbound}. In this section, we present a minimax lower bound on the regret of any algorithm, using Le Cam method. First, we give a divergence decomposition lemma (appendix \ref{proof of lowerbound}), which is similar to the the key lemma in Le Cam-based lower bounds in bandit literature. Then, using de Bruijn sequences \citep{de1975acknowledgement, crochemore2021problems125}, we design two complementing opponent policies $\psi$ and $\psi'$ forcing the algorithm to have high regret in at least one of them. 
\begin{theorem}
    Suppose the number of opponent's actions $|\cB |\geq 3$ and number of learners actions $|\cA| \geq 2$. For any fixed memory $m \in \N$ known for the learner, and any algorithm $Alg$, there exist a stage game with utility $U : \cA \times \cB \rightarrow [0,1]$, and a general opponent's policies $\psi_{Gen}$ such that
   \[
\sR(Alg,T) \in \Omega (\frac{1}{m} \sqrt{\bA^{m-1}\bB^{m-1} T }).
\]
\end{theorem}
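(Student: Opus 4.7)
The plan is to establish the lower bound via Le Cam's two-point testing method. I will exhibit a single stage game $\bU$ together with two opponent policies $\psi, \psi' \in \Pi_\cB^m$ (differing only at a carefully chosen subset of states) with the property that (a) the optimal gains $g^*_\psi$ and $g^*_{\psi'}$ differ by at least $\Omega(\Delta)$ for a suitable gap $\Delta$, but (b) for any learner algorithm $\balg$ the induced distributions over $T$-round play transcripts satisfy $\mathrm{KL}(\Prob^{\balg}_\psi \,\|\, \Prob^{\balg}_{\psi'}) = O(1)$. The Bretagnolle--Huber inequality will then force any algorithm $\balg$ to misidentify the world with constant probability, and hence to incur $\Omega(\Delta\, T)$ regret on at least one of $\psi, \psi'$. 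Optimising $\Delta$ against the KL budget in terms of $T, \bA, \bB, m$ delivers the claimed $\Omega\bigl(\tfrac{1}{m}\sqrt{\bA^{m-1}\bB^{m-1}T}\bigr)$ bound.

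The construction of $\psi, \psi'$ exploits a de Bruijn sequence of order $m$ over an alphabet derived from $\cA\times\cB$: a cyclic word of length $(\bA\bB)^m$ in which every length-$m$ substring appears exactly once. I use this window-uniqueness property to design $\psi$ so that, regardless of the learner's strategy, the induced state trajectory cannot over-visit any ``informative'' state (one at which $\psi$ and $\psi'$ will disagree): the de Bruijn structure pigeonholes the trajectory so that in any block of $m$ consecutive time steps at most one visit to an informative state occurs, capping the aggregate informative visit count by $O(T/m)$. The policies $\psi$ and $\psi'$ agree outside an informative set of cardinality $\Theta((\bA\bB)^{m-1})$, and on this set they differ by $O(\epsilon)$ in total variation, shifted in a direction that flips the opponent's bias between two distinguished actions; the utility $\bU$ is chosen so that this flip directly inverts the identity of the learner's best response at those states. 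The promised divergence decomposition lemma---proved by the chain rule for KL along the interaction trajectory---is
\begin{align*}
\mathrm{KL}\!\left(\Prob^{\balg}_\psi \,\|\, \Prob^{\balg}_{\psi'}\right) \;\le\; \sum_{s \in \cS_m} \E^{\balg}_\psi\!\left[N_s(T)\right]\cdot \mathrm{KL}\!\left(\psi(\cdot\mid s)\,\|\,\psi'(\cdot\mid s)\right),
\end{align*}
where $N_s(T)$ is the number of visits to state $s$. Since the summand is nonzero only on informative states (each contributing $O(\epsilon^2)$) and the aggregate informative visit count is $O(T/m)$, the total divergence is $O(\epsilon^2 T/m)$. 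Tuning $\epsilon$ so that the KL is $O(1)$ and tracking how the $\epsilon$-perturbations aggregate into the gain gap $\Delta$ (which is attenuated by the stationary visit frequency of an informative state along the cycle) then produces the stated bound after Bretagnolle--Huber.

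The main technical obstacle will be the de Bruijn-based construction of $\psi$ itself: I need to verify (i) that $\psi$ is a well-defined memory-exactly-$m$ policy and that the induced MDP is weakly communicating, so that Assumption~\ref{weakly communicating assumption} holds and $g^*_\psi$ is initial-state-independent; (ii) that the window-uniqueness of the de Bruijn sequence translates into the uniform $O(T/m)$ visit cap on informative states \emph{regardless of the learner's strategy}---this is the crucial non-trivial step, since the learner controls one coordinate of each observation and could in principle try to over-visit; and (iii) that the local $\epsilon$-perturbations at informative states aggregate into the required $\Omega(\Delta)$ gain gap via the best-response structure of $\bU$. Once the construction is in place, the divergence decomposition is the familiar chain-rule argument from the bandit lower-bound literature, and the Bretagnolle--Huber step plus parameter tuning is routine.
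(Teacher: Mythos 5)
There is a genuine gap in your parameter accounting, and it is exactly where the exponential factor $\sqrt{\bA^{m-1}\bB^{m-1}}$ has to come from. In your plan the two policies differ on an informative set of cardinality $\Theta((\bA\bB)^{m-1})$, and you cap the \emph{aggregate} number of informative visits by $O(T/m)$, so your KL budget is $O(\epsilon^2 T/m)$. But with the gain gap attenuated by the same $1/m$ visit frequency you invoke, the gap is $\Delta = O(\epsilon/m)$, and Bretagnolle--Huber then yields at most $\Omega(\Delta T)$ subject to $\epsilon^2 T/m = O(1)$, i.e.\ $\epsilon = O(\sqrt{m/T})$ and a bound of order $\sqrt{T/m}$ --- the factor $\bA^{m-1}\bB^{m-1}$ never appears. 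Conversely, if you tune $\epsilon$ large enough to make $\Delta T$ match the claimed bound, the KL becomes of order $\bA^{m-1}\bB^{m-1}/m \gg 1$ and the two-point test gives nothing. The paper's proof gets the exponential factor from a different mechanism: $\psi$ and $\psi'$ differ at a \emph{single} $(m-1)$-order state $s'$, chosen \emph{algorithm-dependently} as the state with the smallest occupancy measure $\lambda_\psi(s')$ under the given algorithm, so that by pigeonhole over the $\approx (\bA-1)^{m-1}(\bB-2)^{m-1}$ candidate states one has $\lambda_\psi(s') \leq T/C_L$ with $C_L = (\bA-1)^{m-1}(\bB-2)^{m-1}-2m$. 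The divergence decomposition (which you state correctly, and which matches the paper's Lemma~\ref{Divergence Decomp}) then gives $\mathrm{KL} = O(\epsilon^2 T/C_L)$, and the choice $\epsilon = \sqrt{C_L/T}$ with gain gap $\epsilon/m$ produces $\Omega(\frac{1}{m}\sqrt{\bA^{m-1}\bB^{m-1}T})$. Without this localization-at-the-least-visited-state step, the construction cannot deliver the claimed rate.

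A secondary problem is your item (ii): the claim that de Bruijn window-uniqueness forces, \emph{regardless of the learner's strategy}, at most one informative visit per $m$ steps is not justified once the informative set has cardinality $\Theta((\bA\bB)^{m-1})$ --- consecutive states share $m-1$ observations, which prevents revisiting one fixed window immediately, but two successive states can easily both lie in a set of that size, and the learner controls one coordinate of every observation. In the paper the de Bruijn sequence (of order $m-1$, over the opponent's alphabet) plays a different and more modest role: it defines the baseline cycling dynamics of $\psi$ so that the induced MDP is communicating and all candidate ``needle'' states are structurally interchangeable; it is not used to cap visit counts. So the overall Le Cam/Bretagnolle--Huber skeleton and the divergence decomposition in your proposal are right, but the core idea that makes the bound scale with $\sqrt{\bA^{m-1}\bB^{m-1}}$ --- perturbing only the single least-occupied state --- is missing, and the de Bruijn visit-cap argument you substitute for it both fails as stated and, even if granted, does not yield the claimed bound.
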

The proof of the previous lemma and theorem is in appendix \ref{proof of lowerbound}.
This shows that the exponential dependency to the memory $m^*$ in theorem \ref{maintheorem} is inevitable.

\section{Conclusion} \label{conclusions}
We propose $\bear$ as an online model selection algorithm for the average reward setting. 
We show that it enjoys a regret of $\Ot (M C_{m^*}\bb_{m^*}{T,\delta})$, where $M$ is the number of model classes and $C_{m^*}\bb_{m^*}(T, \delta)$ is the regret guarantee for the optimal well-specified model class.
We construct a framework for solving repeated games when the learner is facing an opponent with unknown limited memory $m^*$ and obtain a regret bound in $\Ot (M\sqrt{\bA^{3(m^* +1)}\bB^{3 m^*}T})$ for the learner by using $\bear$. We show that the exponential dependency in $m^*$ is unavoidable.

\parhead{Future Work.}
On the model selection side, it is interesting to study whether the dependence on $M$ can be made additive, or to prove a lower bound on the regret of model selection.  Extending \bear{} to different base algorithms, or to an any-time algorithm are also attractive directions.  It may also be possible to relax the weakly communicating assumption to multi-chain.
The framework for repeated games we define could be extended in a number of ways, e.g., by considering repeated Stackelberg games or Markov games. Analyzing the equilibrium behavior of \bear{} when used in self-play is a particularly promising avenue.

\clearpage
\bibliographystyle{apalike}
\bibliography{bib}

\newpage
\appendix

\section{Extensive Related Work} \label{extensive related works}
\parhead{Average Reward.}
\citet{auer2008near} achieve a regret bound of $O(DS\sqrt{AT})$ with their seminal algorithm \textsf{UCRL2}.
Further work improves the dependency on the diameter $D$ by considering the number of next probable states or local diameters \citep{fruit2020Bern,bourel2020localDiam}. Other work uses mixing time or hitting time of the Markov chain induced
by optimal policy \citep{wei2020model,ortner2020regret}, or by using the variance of the next state to have problem-dependent regret bounds \citep{talebi2018variance}. Diameter based algorithms usually assume communicating MDPs. However, \citet{fruit2018TUCRL} propose a UCRL-based algorithm for weakly communicating MDPs, by concatenating UCRL2 with an estimator of communicating part of the MDP. Another measure used to specified MDP complexity is the span of optimal bias $\spn(h^*)$. When rewards are bounded, it is not hard to show that $\spn(h^*) \leq D$.
There even exist weakly communicating MDPs with infinite diameter, but with finite $\spn(h^*)$.
These show that results that capture $\spn(h^*)$ tend to be stronger \citep{fruit2018SCAL,zhang2019regret,boone2024achieving}.
The leading algorithm is the very recently introduced PMEVI, which efficiently achieves the optimal regret bound of $\tilde O(\sqrt{\spn(h^*)} SAT)$ without prior knowledge of $\spn(h^*)$ \citep{boone2024achieving}.

\parhead{Model Selection.}
There is a rich literature on model selection for statistical learning \citep{vapnik2006estimation,lugosi1999adaptive,massart2007concandmodel,bartlett2002model} and online learning \citep{freund1997decision,foster2017parameter-free}. For the case of contextual bandit and reinforcement learning, many approaches rely on the same underlying idea: using a meta-algorithm above all model classes, to determine which of the model classes are well specified based on interaction with the environment. This can be accomplished by online misspecification tests, like checking whether the empirical regret satisfies known bounds \citep{abbasi2020balancebanditrl,pacchiano2020regretbalanceelim}.
Some work assumes nested structure on the model classes \citep{ghosh2021modelgenericRL,foster2019model}, but some do not \citep{agarwal2017corralling}. The cost of model selection under different assumptions can be either multiplicative \citep{pacchiano2020regretbalanceelim} or additive \citep{ghosh2021modelgenericRL} to the regret of the best well-specified model class \citep{marinov2021pareto,krishnamurthy2024costlessmodelselectio}. All of the above papers analyze episodic regret; to the best of our knowledge, there is no work in model selection for the average reward setting.
Our results are a first step in extending the model selection approach to the average reward setting.

\parhead{Learning in Markov Games.}
The other related direction to this work is learning in stochastic games \citep{wei2017online,zhong2021can,xie2020learningSG}. One challenge for learning in Markov games lies in the trade-off between learning a Markov (C)CE (corresponding to time-independent no-regret policies) and exponential dependency on the number of agents \citep{daskalakis2023complexity,jin2021v}.  An attractive recent line of research studies regret minimization when the opponent runs online learning algorithms\citep{brown2024learning, deng2019strategizing,braverman2018selling}. The recent paper of \citet{assos2024maximizing} shows utility maximization algorithm against opponents deploying multiplicative weights updates (MWU) algorithm is tractable in zero-sum repeated games, while there is no Fully Polynomial Time Approximation Scheme (FPTAS) for utility maximization against a best-responding opponent in general sum games. Note that in this work we assume limited memory for the opponent which excepts us from the previous impossibility result. Another very recent work is \citep{nguyen2024learning} in which a bounded memory for the opponent is assumed and the performance of the learner is measured by policy regret. However \citet{nguyen2024learning} model the interaction as a Stackelberg game, in episodic Markov Games. We study average reward and simultaneous games which are arguably more complicated in this work. \citet{zhong2021can} use reinforcement learning to solve the Stackelberg equilibrium for myopic followers. \citet{wei} study stochastic games under average reward using the diameter as the complexity measure, while this work is based on $\spn(h^*)$ for the repeated game settings. In addition, we use different model classes. \cite{xie2020learningSG} by assuming linearly representable transition kernel and reward functions, apply function approximation to learn course correlated equilibria in Markov game. The setting in this paper differs from the previous work either in the notion of regret, or assuming different levels of memory for the opponent, or the measure of complexity. 







\section{Additional Explanations}
\subsection{Connection to Other Notions of Regret} 
\label{othernotions}

In this section, we compare average reward regret to the common notions of regret like dynamic regret and episodic regret in RL (which is an external notion of regret).

\parhead{Episodic Regret.}
Episodic regret is a common notion in online reinforcement learning. Suppose there are $K$ episodes, each of $H$ iterations' length. The episodic regret is defined as
   \begin{align*}
    \sR_E(Alg,\psi)  & = \sum_{k=1}^K V_H^{\pi^*}(s_1) - V_H^{\pi_k}(s_1),
\end{align*}
where $\pi_k$ is the fixed policy employed by the algorithm in episode $k \in [K]$. This can be a natural evaluation when there is an episodic structure in the environment.  The regret $\sR_V$ can be interpreted as a single episode interaction plus the ability to change the policy in each interaction. 
The point that makes this regret easier to deal, in comparison to average reward regret, is that one can rewrite episodic regret as the sum of $K$ practically (and not statistically) independent immediate regrets $\Delta_k = V_H^{\pi^*}(s_1) - V_H^{\pi_k}(s_1)$, which is common in external notions of regret. This can not be done in the average reward regret, as one action might have a poor immediate reward but causes notably better future rewards, and/or vice versa.

\parhead{Dynamic Regret.}
Suppose an episodic regret with a planning horizon of $H =1$ (i.e., $K = T$), where the initial state of the episode $k>1$ is the state that the transition dynamic suggests based on $s_{k-1}$ and $a_{k-1}$. In this case, you can change the policy at every iteration. Furthermore, $V^{\pi^*}_H (s_k)$ coincides with the highest possible immediate reward. This gives us the \emph{adaptive} dynamic regret,
\begin{align*}
    \sR_{AD}(Alg,\psi) & = \E \big [ \sumk V_1^{\pi^*}(S_k) - V_1^{\pi_k}(S_k) \big ]\\
    & =  \E \big[ \sumt
\max_{a \in \cA} r( a, S_t) - r(A_t , S_t)  \big]
\end{align*}
We emphasize \emph{adaptive} dynamic regret because the current action $a_t$ not only determines the current reward but also affects the future cost functions through the transition kernel. Even this regret is weaker than $\sR_V$, exactly because of this adaptivity point. Formally,
\[ 
\E \big[ \sumt
\max_{a \in \cA} r( a, S_t)\big] \leq \max_{a_{1:T} \in \cA^T} \E \big[ \sumt r( a_t, S_t)\big] = V^*_T(s_1),
\]
which shows that $\sR_V$ considers a stronger competitor. The equality holds when the adaptivity collapses, and there is no planning effect in taking actions, as in contextual bandit setting, or in repeated games when the opponent's memory $m =0$, (see Section~\ref{application to games}). In that case, the dynamic regret coincides with $\sR_V$.

\subsection{Different Types of Opponents} \label{Different Types of Opponent}
Depending on $\psi$'s input, we focus on two types of opponents:

\parhead{General Opponent.} When we only assume an unknown limited memory of $m^*$ for the opponent, and $\psi$ might be depending on both the learner's and opponent's actions, i.e., $\psi: (\cA \times \cB)^{m^*} \rightarrow \Delta_\cB$. 

\parhead{Self-Oblivious Opponent.} which is the case that $\psi: \cA^{m^*} \rightarrow \Delta_\cB$  only depends on the learner's actions. This makes some specific structures on the underlying MDPs. In more detail, from proposition \ref{prop0} we know that we can define the state space only over the learner's actions. Furthermore, despite the general opponent, the transition probability will be known and deterministic. This is because the current state and action fully determine the next state $s_{t+1}$  (Refer to appendix \ref{deter. tran.} for more details). The opponent's unknown policy $\psi$ only contributes to the reward function. Based on this crucial assumption, \citep{arora2018policy} have defined generalized equilibrium.

In this case, the state space is $\cS = \cA^m$, and $s_t = (a_{t-1},\dots, a_{t-m})$. The reward $R_t = \bU(A_t,B_t)$ where $B_t \sim \psi(.|S_t)$. This means that the distribution of the reward function is also unknown. The transition dynamic, is a deterministic dynamic, $P : \cS \times \cA \rightarrow \cS$, where,
\begin{align} \label{deter. tran.}
P(s' | a,s) = \Prob[a'_{m:1} | a, a_{m:1}] =
\begin{cases}
    1 & \text{if}\  a'_m = a \ , \ a'_{m-i} = a_{m-i+1} \quad \forall i \in [m-1] \\
    0 & \text{otherwise }
\end{cases}
\end{align}
Note that the recent action, together with the current state, deterministically implies the next state.

One might also consider \emph{Learner-Oblivious Opponent} for which all contributes in $\psi$ is the opponent's actions, i.e., $\psi: \cB^{m^*} \rightarrow \Delta_\cB$.
This turns the setting into a contextual online learning problem, because the actions taken by the learner do not affect the distribution of the opponent's actions (the context). We do not consider this case because it is not natural to assume the opponent takes action only with respect to its own previous ones, with no care about how the learner responds.

\section{Missing Proofs}

\subsection{Proof of Proposition 5.5} \label{prop0 proof}

\begin{lemma} \label{state occupancy lemma}
     Against any $m$-th order policy $\psi$ and for any policy $\pi \in \Pi_\cA$, the policy $\pi'\in \bigcup_{i=0}^m \Pi_\cA^i$ in the order of at most $m$, with the description of 
\[
    \pi'(a|o_{m:1}) = \frac{\sumt \Prob_{\pi,\psi}(A_t = a, O_{t-1:t-m} = o_{m:1})}{\sumt \Prob_{\pi,\psi}(O_{t-1:t-m} = o_{m:1})} \quad \quad \forall a \in \cA \ \text{and}\  o_{m:1} \in \cO^m
\]
and the same initial distribution $\Prob_{\pi,\psi}(O_{0:1-m} = o_{0:1-m}) = \Prob_{\pi',\psi}(O_{0:1-m} = o_{0:1-m})$ over the actions, satisfies,
     \[
     \sum_{t = 1}^T \Prob_{\pi',\psi}(O_{t-1:t-m} = o_{m:1}) = \sum_{t = 1}^T \Prob_{\pi,\psi}(O_{t-1:t-m} = o_{m:1}) \quad \quad \forall o_{m:1} \in \cO^m
     \]
\end{lemma}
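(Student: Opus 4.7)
The plan is to show both $\sum_{t=1}^T \Prob_{\pi,\psi}(s_t=s)$ and $\sum_{t=1}^T \Prob_{\pi',\psi}(s_t=s)$ solve a common linear recursion induced by the Markov flow of $\psi$, and then use the defining property of $\pi'$ to conclude they agree. First I would fix notation by writing $s_t := O_{t-1:t-m}$. Since $\psi$ is an $m$-th order policy, once we condition on $s_t$ the opponent's next move $B_t \sim \psi(\cdot|s_t)$ no longer depends on earlier history, so the pair $(s_t, A_t)$ determines the distribution of $s_{t+1}$ through a Markov kernel $P^\psi(s'|s,a)$ (a deterministic shift composed with $\psi(b'|s)$). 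This yields the recursion $\Prob_{\pi,\psi}(s_{t+1}=s') = \sum_{s,a}\Prob_{\pi,\psi}(s_t=s, A_t=a)\,P^\psi(s'|s,a)$, which holds even though $\pi$ is history-dependent, because only the last $m$ observations matter for the dynamics. Set $\rho^\pi(s,a) := \sum_{t=1}^T \Prob_{\pi,\psi}(s_t=s, A_t=a)$ and $\rho^\pi(s) := \sum_a \rho^\pi(s,a)$; the lemma's claim is $\rho^\pi(s) = \rho^{\pi'}(s)$.

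Next I would sum the Markov flow over $t=1,\dots,T$ and telescope to obtain $\rho^\pi(s') = \mu(s') + \sum_{s,a}\rho^\pi(s,a)\,P^\psi(s'|s,a) - \Prob_{\pi,\psi}(s_{T+1}=s')$. The defining identity of $\pi'$, namely $\rho^\pi(s,a) = \pi'(a|s)\rho^\pi(s)$, collapses the middle sum into $\sum_s \rho^\pi(s) P^{\pi'}(s'|s)$, where $P^{\pi'}(s'|s) := \sum_a \pi'(a|s) P^\psi(s'|s,a)$ is the state-to-state transition kernel induced by running the Markov policy $\pi'$ against $\psi$. Because $\pi'$ is itself $m$-th order Markov, the same telescoping applied to $\pi'$ gives the identical functional equation $\rho^{\pi'}(s') = \mu(s') + \sum_s \rho^{\pi'}(s) P^{\pi'}(s'|s) - \Prob_{\pi',\psi}(s_{T+1}=s')$. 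So $\rho^\pi$ and $\rho^{\pi'}$ both solve the affine fixed-point equation $v = \mu + (P^{\pi'})^\top v - (\text{boundary})$, and the only discrepancy sits in the boundary mass at time $T+1$.

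Finally I would close the argument by matching the boundary terms. Writing $D := \rho^\pi - \rho^{\pi'}$ and $E := \Prob_{\pi',\psi}(s_{T+1}=\cdot) - \Prob_{\pi,\psi}(s_{T+1}=\cdot)$, the two functional equations combine to $(I - (P^{\pi'})^\top)D = E$; the aim is to show $E = 0$, whence $D$ lies in the stochastic kernel's invariant subspace and the normalization $\sum_s \rho^\pi(s) = T = \sum_s \rho^{\pi'}(s)$ forces $D=0$. The cleanest path is an induction on $T$, using the base case that $\rho_1^\pi = \rho_1^{\pi'} = \mu$ with $\pi'$ matching $\pi$'s first-step conditional on the support of $\mu$, and leveraging the identity $\rho^\pi(s,a) = \pi'(a|s)\rho^\pi(s)$ to push one extra step. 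The \textbf{main obstacle} is exactly this boundary-matching: the construction of $\pi'$ as the flow-weighted conditional (rather than any other Markov projection) is what makes the pushforward of the cumulative occupancy of $\pi$ under $P^{\pi'}$ reproduce the $(T{+}1)$-step distribution of $\pi$, and getting this identification right is the crux of the proof.
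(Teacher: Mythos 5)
Your setup follows the same route as the paper's own argument: sum the one-step flow equation over time, use the defining identity $\rho^\pi(s,a)=\pi'(a|s)\,\rho^\pi(s)$ to collapse the summed flow onto the induced kernel $P^{\pi'}$, and compare the two resulting affine equations. But the proposal stops exactly at the load-bearing step and the sketch for closing it would fail. You reduce the lemma to the boundary identity $E=0$, i.e.\ $\Prob_{\pi',\psi}(s_{T+1}=\cdot)=\Prob_{\pi,\psi}(s_{T+1}=\cdot)$, and never establish it. The suggested induction on $T$ is ill-posed because $\pi'$ is defined through occupancies summed up to $T$: changing the horizon changes the policy, so the inductive hypothesis at $T-1$ concerns a different Markov policy. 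The base-case claim that $\pi'$ matches $\pi$'s first-step conditional on the support of $\mu$ is also false in general: $\pi'(\cdot|s)$ is the time-averaged conditional $\sum_t\Prob_{\pi,\psi}(A_t=\cdot,\,s_t=s)\big/\sum_t\Prob_{\pi,\psi}(s_t=s)$, not $\Prob_{\pi,\psi}(A_1=\cdot|s_1=s)$, and the per-time-step marginals under $\pi'$ and $\pi$ genuinely differ (only the cumulative occupancies are asserted to agree), so no step-by-step matching argument can deliver the $(T{+}1)$-marginal identity. Indeed, given your two telescoped equations, $E=0$ is essentially equivalent to the conclusion you are trying to prove, so the reduction is closer to a restatement than a simplification.

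A secondary gap: even granting $E=0$, the final step ``$D$ lies in the invariant subspace and $\sum_s D(s)=0$ forces $D=0$'' requires the chain $P^{\pi'}$ to be unichain; for a multichain $P^{\pi'}$ the fixed space of $(P^{\pi'})^\top$ contains zero-sum vectors (differences of stationary distributions of distinct recurrent classes), and the paper's weakly-communicating assumption on the MDP does not rule this out for one particular stationary policy. For comparison, the paper's proof takes the same telescoping-and-collapse route but never introduces the time-$(T{+}1)$ boundary term (its time sums are implicitly truncated at $T-1$ yet identified with the full occupancies) and then asserts that the homogeneous system $\epsilon=(P^{\pi'})^\top\epsilon$ has only the zero solution; the boundary and singularity issues your more careful bookkeeping exposes are therefore real, but your proposal does not resolve them either, so as it stands it is not a proof.
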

\begin{proof} we denote  $\lambda_{\pi,\psi}(o_{m:1}) = \sum_{t = 1}^T \Prob_{\pi,\psi}(O_{t-1:t-m} = o_{m:1})$ and $\lambda_{\pi,\psi}(a; h_{m:1}) = \sum_{t = 1}^T \Prob_{\pi,\psi}(A_t = a, O_{t-1:t-m} = o_{m:1})$.  Using this notation
\begin{align*}
    \pi'(a|o_{m:1}) = \frac{\lambda_{\pi,\psi}(a; o_{m:1})}{\lambda_{\pi,\psi}(o_{m:1})} \quad \quad \forall a \in \cA \ \text{and}\  o_{m:1} \in \cO^m
\end{align*}

Now we write
\begin{align}
    \lambda_{\pi,\psi}(o_{m:1})  & = \Prob_{\pi,\psi}(O_{0:1-m} = o_{m:1}) + \sum_{t = 1}^{T-1} \Prob_{\pi,\psi}(O_{t:t-m+1} = o_{m:1})\\
    & = \Prob_{\pi,\psi}(O_{0:1-m} = o_{m:1}) + \sum_{t = m+1}^{T-1} \sum_{o_{prev}\in\cO}  \Prob_{\pi,\psi}(A_t = a_m, O_{t-1:t-m} = o_{m-1:1},o_{prev}) \psi(b_m|o_{m-1:1},o_{prev})\\
    & = \Prob_{\pi,\psi}(O_{0:1-m} = o_{m:1}) + \sum_{o_{prev}\in\cO} \psi(b_m|o_{m-1:1},o_{prev}) \lambda_{\pi,\psi}(a_m; o_{m-1:1},o_{prev}) \\
    & = \Prob_{\pi,\psi}(O_{0:1-m} = o_{m:1}) + \sum_{o_{prev}\in\cO} \psi(b_m|o_{m-1:1},o_{prev}) \pi'(a_m|o_{m-1:1},o_{prev}) \lambda_{\pi,\psi}(o_{m-1:1},o_{prev})
\end{align}
where that last equality comes from the definition of $\pi'$. We can  start from $\lambda_{\pi',\psi}(h_{m:1})$, and reach the same equality, i.e.,
\begin{align}
    \lambda_{\pi',\psi}(h_{m:1}) = \Prob_{\pi',\psi}(O_{0:1-m} = o_{m:1})  + \sum_{o_{prev}\in\cO} \psi(b_m|o_{m-1:1},o_{prev}) \pi'(a_m|o_{m-1:1},o_{prev}) \lambda_{\pi',\psi}(o_{m-1:1},o_{prev})
\end{align}
By subtracting the two results and noting that the two policies have the same initial distributions $\Prob_{\pi,\psi}(O_{0:1-m} = o_{0:1-m}) = \Prob_{\pi',\psi}(O_{0:1-m} = o_{0:1-m})$, we will have,
\begin{align}
    \epsilon (o_{m:1}) = \sum_{o_{prev}\in\cO} \psi(b_m|o_{m-1:1},o_{prev}) \pi'(a_m|o_{m-1:1},o_{prev}) \epsilon(o_{m-1:1},o_{prev}) \quad \quad \forall o_{m:1} \in \cO^m
\end{align}
where $\epsilon (h_{m:1}) = \lambda_{\pi',\psi}(h_{m:1}) - \lambda_{\pi,\psi}(h_{m:1})$. This gives us a system of $|\cH|^m$ linear equations which has a unique solution of $\epsilon = 0$. This implies $ \lambda_{\pi',\psi}(h_{m:1}) = \lambda_{\pi,\psi}(h_{m:1})$.
\end{proof}
\begin{proposition}
     Against any $m$-th order policy $\psi$ and for any policy $\boldsymbol{\pi} \in \Pi_\cA$, there exists a policy $\pi'\in \bigcup_{i=0}^m \Pi_\cA^i$ in the order of at most $m$, where $\pi$ and $\pi'$ have the same value, i.e., $V^{\psi,\boldsymbol{\pi}}_T(s_1) = V^{\psi,\pi'}_T(s_1)$ for all $s_1 \in \cS_m$ and $T \in \N$.
\end{proposition}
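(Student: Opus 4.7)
The plan is to leverage the preceding state occupancy lemma (Lemma~\ref{state occupancy lemma}), which from any $\bpi\in\Pi_\cA$ constructs a stationary $m$-th order policy $\pi'$ whose marginal state occupancy matches that of $\bpi$: $\lambda_{\pi',\psi}(o_{m:1}) = \lambda_{\bpi,\psi}(o_{m:1})$ for every $o_{m:1}\in(\cA\times\cB)^m$. Given this lemma, the proposition reduces to rewriting the value function in occupancy form and invoking the lemma.

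First I would take $\pi'$ as in the lemma (defined arbitrarily on histories $o_{m:1}$ never visited under $\bpi$, since those contribute zero to every quantity of interest). Because $\psi$ is $m$-th order, conditioning on $(A_t,O_{t-1:t-m})$ makes $B_t$ distributed as $\psi(\cdot\mid O_{t-1:t-m})$, so
\[
V^{\psi,\bpi}_T(s_1) \;=\; \sumt \E_{\bpi,\psi}\!\left[\bU(A_t,B_t)\right] \;=\; \sum_{a,\,o_{m:1},\,b} \lambda_{\bpi,\psi}(a;o_{m:1})\,\psi(b\mid o_{m:1})\,\bU(a,b),
\]
where $\lambda_{\bpi,\psi}(a;o_{m:1}):=\sumt\Prob_{\bpi,\psi}(A_t=a,\,O_{t-1:t-m}=o_{m:1})$; the identical expression holds with $\bpi$ replaced by $\pi'$.

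Next I would factor the joint occupancy of $\pi'$. Since $\pi'$ is stationary and $m$-th order, $\lambda_{\pi',\psi}(a;o_{m:1}) = \pi'(a\mid o_{m:1})\,\lambda_{\pi',\psi}(o_{m:1})$. Substituting the lemma's identity together with the definition $\pi'(a\mid o_{m:1}) = \lambda_{\bpi,\psi}(a;o_{m:1})/\lambda_{\bpi,\psi}(o_{m:1})$ gives $\lambda_{\pi',\psi}(a;o_{m:1}) = \lambda_{\bpi,\psi}(a;o_{m:1})$ on the support of the state-occupancy, and trivially $0=0$ off it. Plugging this equality back into the two value decompositions yields $V^{\psi,\bpi}_T(s_1) = V^{\psi,\pi'}_T(s_1)$.

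The only real work sits inside the state occupancy lemma itself, whose proof sets up a linear system in the error $\epsilon(o_{m:1}) := \lambda_{\pi',\psi}(o_{m:1}) - \lambda_{\bpi,\psi}(o_{m:1})$ and shows its unique solution is zero; once that is in hand, the proposition is bookkeeping. The subtlety I would be careful about is the initial condition: $\pi'$ constructed this way depends on the initial distribution used when defining the occupancies, so to match values at a particular $s_1$ I would build $\pi'$ against the Dirac measure on the corresponding $o_{0:1-m}$, exactly matching the lemma's initialization hypothesis.
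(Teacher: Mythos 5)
Your proposal is correct and follows essentially the same route as the paper: both rely on the state occupancy lemma and then rewrite the value as a sum over the occupancy measures $\lambda(a;o_{m:1})$ using that $\psi$ is $m$-th order, the only cosmetic difference being that you equate the joint occupancies of $\bpi$ and $\pi'$ directly while the paper manipulates the expanded value expression. Your remark about constructing $\pi'$ with respect to the initial distribution (Dirac at the starting history, or the MDP's $\mu$) is exactly the initialization hypothesis the paper's proof also invokes.
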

\begin{proof}  Let $\boldsymbol{\pi}= (\pi_1,\dots,\pi_T)$ be an arbitrary policy, where $\pi_t: \cO^{t-1} \rightarrow \Delta_\cA$.
We show that the policy introduced in lemma \ref{state occupancy lemma} is the solution. We start by writing the value of $\boldsymbol{\pi}$,
\begin{align*}
    V^{\psi,\boldsymbol{\pi}}_T(s_1)& = \sumt \sum_{a,b \in \cA\times \cB} \bU(a,b) \Prob_{\boldsymbol{\pi},\psi}(A_t =a,B_t =b) \\
    & =  \sum_{a,b \in \cA\times \cB} \bU(a,b)\sumt \sum_{o_{t-1:1} \in \cO^{t-1}} \Prob_{\boldsymbol{\pi},\psi}(A_t =a,B_t =b|O_{t-1:1} = o_{t-1:1}) \Prob_{\boldsymbol{\pi},\psi}(O_{t-1:1} = o_{t-1:1})\\
    & =  \sum_{a,b \in \cA\times \cB} \bU(a,b) \sumt \sum_{o_{t-1:1} \in \cO^{t-1}} \psi(b|o_{t-1:1}) \pi_t(a|o_{t-1:1}) \Prob_{\pi,\psi}(O_{t-1:1} = o_{t-1:1}) \\
\end{align*}
As $\psi$ is $m$-th order,
\begin{align*}
     V^{\psi,\boldsymbol{\pi}}_T(s_1) & =  \sum_{a,b \in \cA\times \cB} \bU(a,b) \sumt \sum_{o_{t-1:1} \in \cO^{t-1}} \psi(b|o_{t-1:t-m}) \pi_t(a|o_{t-1:1}) \Prob_{\pi,\psi}(O_{t-1:1} = o_{t-1:1}) \\
    & =  \sum_{a,b \in \cA\times \cB} \bU(a,b) \sum_{o_{m:1} \in \cO^m } \psi(b|o_{m:1}) \sumt  \sum_{o_{t-m-1:1} \in \cO^{t-m-1}} \pi_t(a|o_{m:1},o_{t-m-1:1}) \Prob_{\pi,\psi}(O_{t-1:1} = o_{m:1},o_{t-m-1:1}) \\
    & =  \sum_{a,b \in \cA\times \cB} \bU(a,b) \sum_{o_{m:1} \in \cO^m } \psi(b|o_{m:1}) \sumt \Prob_{\pi,\psi}(A_t = a, O_{t-1:t_m} = o_{m:1}) \\
    & =  \sum_{a,b \in \cA\times \cB} \bU(a,b) \sum_{o_{m:1} \in \cO^m } \psi(b|o_{m:1}) \pi'(a|o_{m:1}) \sumt \Prob_{\pi,\psi}(O_{t-1:t-m} = o_{m:1}) =V^{\psi,\pi'}_T(s_1)\\
\end{align*}

Note that the last equation comes from the previous lemma which states
\[
\sumt \Prob_{\pi',\psi}(O_{t-1:t-m} = o_{m:1}) = \sumt \Prob_{\pi,\psi}(O_{t-1:t-m} = o_{m:1}) \quad \quad \forall o_{m:1} \in \cO^m,
\]
and we assume $\boldsymbol{\pi}$ and $\pi'$ are initialized with a state coming from the initial distribution $\mu$ of MDP. 
\end{proof}
\begin{proposition}
     Against any $m^*$-th order policy $\psi$ in the game setting the following property holds,
     \begin{align}
         g^*_{m^*} = g^*_{m^* +1} = \dots = g^*_{M-1}
     \end{align}
\end{proposition}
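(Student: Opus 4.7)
Fix an arbitrary $m$ with $m^* \leq m \leq M-1$. The plan is to show $g^*_m = g^*_{m^*}$ by a two-sided inequality, relying directly on Proposition~\ref{prop0}. Both directions amount to recognizing that the set of policies achievable in the MDP of model class $\cC_m$ and the set of policies achievable in $\cC_{m^*}$ induce the same set of long-run averages when the opponent is $m^*$-th order.

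\textbf{Lower bound $g^*_{m^*} \le g^*_m$.} Any policy in $\Pi_\cA^{m^*}$ can be viewed as an element of $\Pi_\cA^m$: given $o_{m:1}\in (\cA\times\cB)^m$, it simply ignores the oldest $m-m^*$ components and consults only $o_{m^*:1}$. Hence the supremum defining $g^*_m$ is over a set that contains (a copy of) the set defining $g^*_{m^*}$, giving the inequality. The initial distribution issue is harmless because both MDPs are weakly communicating (Assumption~\ref{weakly communicating assumption}), so the optimal gain is independent of the initial state.

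\textbf{Upper bound $g^*_m \le g^*_{m^*}$.} Let $\pi\in\Pi_\cA^m$ be any memoryless policy in $\cM_m$. Since $\psi$ is $m^*$-th order and $\pi\in\Pi_\cA\supseteq\Pi_\cA^m$, Proposition~\ref{prop0} (applied with the opponent's memory $m^*$ in place of the generic $m$ in that statement) yields a policy $\pi'\in\bigcup_{i=0}^{m^*}\Pi_\cA^i\subseteq\Pi_\cA^{m^*}$ with
\begin{equation*}
V^{\psi,\pi}_T(s_1) \;=\; V^{\psi,\pi'}_T(s_1) \qquad \forall\, s_1\in\cS_m,\ \forall\, T\in\N.
\end{equation*}
Using $V^{\psi,\pi}_T = T g^\pi + h^\pi + o(1)$ componentwise (the value-function decomposition from Section~\ref{MDP Setting}), dividing by $T$ and letting $T\to\infty$ gives $g^\pi = g^{\pi'}$. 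Taking the supremum over $\pi\in\Pi_\cA^m$ on the left and noting that the right-hand side is always bounded by $g^*_{m^*}$ yields $g^*_m \le g^*_{m^*}$. Chaining the equalities across $m=m^*,m^*+1,\dots,M-1$ completes the proof.

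\textbf{Main obstacle.} The substantive work is already done inside Proposition~\ref{prop0}: that result guarantees the existence of the reduced policy $\pi'$ and that it produces exactly the same trajectory-of-rewards distribution, not merely one close in total variation. Once that is in hand, the only remaining care is to pass from equality of finite-horizon values for every horizon and every initial state to equality of gains; the weakly communicating assumption ensures that the initial state does not spuriously distinguish $g^*_m$ from $g^*_{m^*}$. I do not foresee any other technical difficulties.
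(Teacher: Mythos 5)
Your proposal is correct and follows essentially the same route as the paper's proof: one inequality from the inclusion of lower-order policies into higher-order classes, and the other by invoking Proposition~\ref{prop0} and passing from equality of finite-horizon values to equality of gains via the limit $V^{\psi,\pi}_T/T$. The only cosmetic difference is that the paper applies Proposition~\ref{prop0} once to the optimal policy $\pi^*_{M-1}$ and compares each $g^*_i$ to $g^*_{M-1}$, whereas you apply it to an arbitrary policy of each class and then take the supremum, which amounts to the same argument.
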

\begin{proof}
We show $g^*_{M-1} = g^*_i$ for all $ i \in [m^* : M-2]$. Suppose, $\pi^*_{M-1}$ is the optimal policy achieving the optimal gain of $g^*_{M-1}$. We know that $g^*_i \leq g^*_{M-1}$ , since it is the maximum value over a smaller set. Thus we only need to show $g^*_i \geq  g^*_{M-1}$. Since $\psi$ is an $m^*$-th order policy, from proposition \ref{prop0} we know that for all $ i \in [m^* : M-2]$ there exists a policy $\pi_i$ in the order of at most $i$corresponded to $g^*_{M-1}$, such that for all $T \in \N$, $V^{\psi,\pi^*_{M-1}}_T(s_1) = V^{\psi,\pi_i}_T(s_1)$. Therefore by taking the limitations we also have,
\begin{align*}
   g^*_{M-1} =  \lim_{T \rightarrow \infty} \frac{1}{T} V^{\psi,\pi^*_{M-1}}_T(s_1) = \lim_{T \rightarrow \infty} \frac{1}{T}  V^{\psi,\pi_i}_T(s_1) = g_i \leq g^*_i.
\end{align*}
This concludes the proof.

As we are in the game setting, we know that $\pi^*_{m^*}$ is also a valid policy in the model classes of higher orders $i \in [m^*,M-1]$, since the action space remains the same for all model classes and $\pi^*_{m^*}$ can only consider $m^*$ many previous interactions between the learner and opponent, even if more interactions are provided in the larger model classes. This means that $\pi^*_{m^*}$ is an optimal policy for all MDPs that $\psi$ induces in all $\cC_{m^*}, \dots,\cC_{M-1} $. Therefore the rewards it gathers are the same. Thus the optimal bias of an state $h^*_i(s) : =  \bE_s^{\pi^*_{m^*}} [\lim_{T\rightarrow \infty}\sumt(R_t - g(S_t))]$, only depends on the first $m^*$ many interactions, and therefore, $\spn(h^*_i)$ is invariant for $i \in [m^* , M-1]$. This allows us to denote it by $\spn(h^*)$.
\end{proof}

\subsection{Proof of Upper Bound on $\spn(h^*)$} \label{proof of upperbound on span}
In this section, we study how different choices of $\psi$ with the same memory of $m$ affect the span of optimal bias and give an upper bound on $\spn(h^*)$ based on the Kemeny's index of induced by $\psi$. This shows how we can have the upper bound $c_{h^*}$ on $\spn(h^*)$. Span of $h^*$ is a parameter that appears in many problem-dependent regret bounds in the average reward RL literature \cite{boone2024achieving, zhang2019regret, fruit2018SCAL}. 
From the Poisson equation, we have $(I - P^\pi)h^\pi = (I - P^{\pi}_\infty)r^\pi$, where $P^\pi_\infty = lim_{N\rightarrow \infty} \frac 1 N \sum_{n =0}^\N (P^\pi)^n$.  In the aperiodic Markov chains, the limit is well-defined. Note that $g^\pi = P^\pi_\infty r^\pi$. The fact that the rows of $P^\pi_\infty$ are the stationary distribution of the Markov chain generated by $P^\pi$ , denoted by $\mu^\pi_\infty$, helps in giving the intuition and why $g^\pi \in \R \boldsymbol{1}$ (in ergodic MDPs). The optimal bias of a Markov chain can be computed as $h^* = (I-P^*)^\# r^* = (I - P^* +P^*_\infty)^{-1}(I-P^*_\infty) r^*$, where $A^\#$ is the group inverse of a singular matrix $A$, and superscript $*$ is to note the use of optimal policy. $Z^\pi:=(I - P^\pi +P^\pi_\infty)^{-1}$ is called the fundamental matrix, also $H^\pi: = (I-P^\pi)^\#$ is  called the deviation matrix.

To upper bound the span of optimal bias, we use the ergodicity coefficient \cite{seneta1984explicit} \cite{seneta1993sensitivity},

\begin{definition}
\cite{seneta1993sensitivity} For any $n\times n$ matrix $A = \{ a_{ij}\}$ with all its rows sum up to $a$ (i.e. $A\boldsymbol{1} = a \boldsymbol{1}$ ), the ergodicity coefficient of $A$ is,
\[
\tau_1(A) =  \sup_{ \{ \delta: \Vert\delta\Vert_1 = 1, 
\delta \boldsymbol{1} = 0 \}} \Vert \delta A\Vert_1 = \max_{i,j} \frac{1}{2} \sum_{k = 1}^n |A_{is} - A_{js}|
\]
\end{definition}
Ergodicity coefficient can be interpreted as a norm for the matrix $A$, where it captures the maximum total variation between a pair of the rows of $A$. Note $Z^\pi$ and $H^\pi$ have the rows that sum up to one and zero respectively. The following lemma leads us to the upper bound 

\begin{lemma}
\label{upperbound on tau} \cite{seneta1993sensitivity} For an ergodic transition matrix $P$ and its corresponding fundamental matrix $Z$ and deviation matrix $H$,
\[
\frac{1}{\min_{2 \leq k \leq n} |1 - \lambda_k|}\leq \tau_1 (H) = \tau(Z) \leq \sum_{k = 2}^n \frac{1}{|1 - \lambda_k|}
\]
where $\lambda_i$s are the eigenvalues of $P$, enumerated such as $1 = \lambda_1 > |\lambda_2| \geq \dots \geq |\lambda_n|$.
\end{lemma}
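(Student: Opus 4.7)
The plan is to reduce the bounds on $\tau_1(H)$ to a spectral computation relating the eigenstructure of $H$ and $Z$ to that of $P$, and then use the fact that $\tau_1$ is a matrix norm on an appropriate invariant subspace.

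First I would establish the equality $\tau_1(H)=\tau_1(Z)$. Since $Z=H+P_\infty$ for an ergodic chain and every row of $P_\infty$ equals the stationary distribution $\mu^T$, the matrix $P_\infty$ contributes zero to the quantity $\max_{i,j}\tfrac12\sum_s|A_{is}-A_{js}|$. Hence adding or removing $P_\infty$ does not change the row-pair total variation, giving the middle equality of the lemma.

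Next I would diagonalize $P$ relative to $P_\infty$. For any right eigenvector $v$ of $P$ with eigenvalue $\lambda\neq 1$, the identity $\mu^T P=\mu^T$ forces $\mu^T v=\mu^T v/\lambda$, so $\mu^T v=0$ and therefore $P_\infty v=\mathbf{1}(\mu^T v)=0$. It follows that $(I-P+P_\infty)v=(1-\lambda)v$, so $Zv=v/(1-\lambda)$, and hence $Hv=(Z-P_\infty)v=v/(1-\lambda)$ as well; on the stationary direction $\mathbf{1}$, one checks $Z\mathbf{1}=\mathbf{1}$ and $H\mathbf{1}=0$. Thus the nonzero eigenvalues of $H$ (and of $Z$, apart from the simple eigenvalue $1$) are exactly $\{(1-\lambda_k)^{-1}:k=2,\dots,n\}$.

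For the lower bound, I would verify that $\tau_1$ coincides with an operator norm on the subspace $\mathcal{V}=\{\delta:\delta\mathbf{1}=0\}$, so that it dominates the spectral radius of $H$ on $\mathcal{V}$. Since $H$ maps $\mathcal{V}$ into itself and its spectrum on $\mathcal{V}$ consists of the values $(1-\lambda_k)^{-1}$, this yields
\[
\tau_1(H)\ \ge\ \rho(H\,|\,\mathcal{V})\ =\ \max_{2\le k\le n}\frac{1}{|1-\lambda_k|}\ =\ \frac{1}{\min_{2\le k\le n}|1-\lambda_k|}.
\]
For the upper bound, I would expand $H$ spectrally via the resolvent or Jordan decomposition as $H=\sum_{k=2}^n (1-\lambda_k)^{-1}E_k$ where $E_k$ are the spectral projectors (with nilpotent corrections in the non-diagonalizable case), and then use subadditivity and submultiplicativity of $\tau_1$ together with the bound $\tau_1(E_k)\le 1$ to obtain $\tau_1(H)\le\sum_{k=2}^n 1/|1-\lambda_k|$.

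The main obstacle will be controlling $\tau_1(E_k)\le 1$ when $P$ has nontrivial Jordan blocks, since then the spectral projectors no longer coincide with simple rank-one constructions and one must separately handle the nilpotent part. This is precisely the delicate point that the cited analysis in Seneta (1993) addresses, and it is the only step where the proof requires more than the elementary eigenvalue-chasing performed in the previous paragraphs.
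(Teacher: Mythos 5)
The paper itself does not prove this lemma; it is imported from Seneta (1993), so your argument has to stand on its own. The first three steps do: $\tau_1(H)=\tau_1(Z)$ follows exactly as you say because $Z-H=P_\infty=\boldsymbol{1}\mu^T$ is annihilated by any $\delta$ with $\delta\boldsymbol{1}=0$; the eigenvalue computation $Hv=(1-\lambda)^{-1}v$, $H\boldsymbol{1}=0$ is correct; and the lower bound is sound, since $\tau_1$ is precisely the $\ell_1$ operator norm of $\delta\mapsto\delta H$ restricted to the invariant subspace $\{\delta:\delta\boldsymbol{1}=0\}$, whose spectrum is $\{(1-\lambda_k)^{-1}\}_{k\ge2}$ (the eigenvalue $0$ of $H$ is simple for an ergodic chain), and an operator norm dominates the spectral radius.

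The genuine gap is the upper bound. Your plan is $\tau_1(H)\le\sum_{k\ge2}|1-\lambda_k|^{-1}\tau_1(E_k)$ together with $\tau_1(E_k)\le1$, and you locate the only difficulty in possible Jordan blocks; but $\tau_1(E_k)\le1$ already fails for diagonalizable, even symmetric, $P$. In the diagonalizable case $E_k=v_ku_k^T$ with $u_k^Tv_k=1$, and a direct computation gives $\tau_1(E_k)=\tfrac12\,\spn(v_k)\,\Vert u_k\Vert_1$. H\"older applied to the normalization $u_k^Tv_k=1$ gives $\Vert u_k\Vert_1\Vert v_k\Vert_\infty\ge1$, i.e.\ the normalization pushes this quantity up, not down; for non-normal $P$ with nearly parallel left/right eigenvectors (close to defective) $\tau_1(E_k)$ grows like the eigenvector condition number, so the termwise bound is hopeless there. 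Even in the symmetric case it fails: take the orthonormal eigenvector $v=(2,1,-1,-2)/\sqrt{10}$ of the doubly stochastic matrix $\tfrac14\boldsymbol{1}\boldsymbol{1}^T+\epsilon vv^T$; then $\tau_1(vv^T)=\tfrac12\spn(v)\Vert v\Vert_1=12/10>1$. Hence subadditivity of $\tau_1$ plus a unit bound on the spectral projectors cannot yield $\tau_1(H)\le\sum_{k\ge2}|1-\lambda_k|^{-1}$, and this is the substantive half of the lemma. You would need to reproduce Seneta's actual mechanism for the upper bound (or keep it as a citation, as the paper does), rather than the projector-by-projector estimate.
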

Note that the set of eigenvalues of $H$ is $\{ 0, (1 - \lambda_2)^{-1}, \dots, (1 - \lambda_n)^{-1} \}$, so the upper bound in \ref{upperbound on tau} is $tr(H)$ which is also called the Kemeny's index of the associated Markov chain.

\begin{proposition}
    For any policy $\pi$ in an ergodic or unichain MDP we have,
    \[
    \spn(h^\pi) \leq 2 \spn(r^\pi)\kappa^\pi,
    \]
where $h^\pi$ and $r^\pi$ are respectively the bias and reward vectors induced by $\pi$ and $\kappa^\pi$ is the Kemeny's constant of the Markov chain with transition kernel $P^\pi$. The same inequality holds in weakly communicating MDPs for the optimal policy $\pi^*$,
\begin{align}
        \spn(h^*) \leq 2 \spn(r^*)\kappa^*.
\end{align}
\end{proposition}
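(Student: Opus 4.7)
The plan is to turn $\spn(h^\pi)$ into a quantity to which the ergodicity-coefficient machinery of Lemma~\ref{upperbound on tau} applies, by means of a centering trick on the reward and a H\"older-type duality step. I would start from the Poisson-equation representation recalled in the preamble, writing $h^\pi = H^\pi r^\pi$ with the deviation matrix $H^\pi = (I - P^\pi)^\#$, and then exploit the key identity $H^\pi \mathbf{1} = 0$, which follows from $P^\pi \mathbf{1} = \mathbf{1}$ together with the factorization $H^\pi = Z^\pi(I - P^\pi_\infty)$. This identity means that $h^\pi$ is invariant under replacing $r^\pi$ by $r^\pi - c\mathbf{1}$ for any scalar $c$. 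Choosing $c = \tfrac{1}{2}(\max_s r^\pi(s) + \min_s r^\pi(s))$ yields $\|r^\pi - c\mathbf{1}\|_\infty = \tfrac{1}{2}\spn(r^\pi)$, which is the mechanism by which a span bound on the reward, rather than an $\ell_\infty$ bound, enters the final inequality.

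Next, I would dualize the span as $\spn(h^\pi) = \max_{i \neq j}(e_i - e_j)^\top h^\pi$. For any such pair the probe vector $\delta_{ij} := e_i - e_j$ is zero-sum with $\|\delta_{ij}\|_1 = 2$, and a H\"older step gives $(e_i - e_j)^\top H^\pi (r^\pi - c\mathbf{1}) \le \|\delta_{ij}^\top H^\pi\|_1 \cdot \|r^\pi - c\mathbf{1}\|_\infty$. Applying the definition of the ergodicity coefficient to $\delta_{ij}/2$ then bounds $\|\delta_{ij}^\top H^\pi\|_1 \le 2\,\tau_1(H^\pi)$. Combining the resulting factors of $2$ and $\tfrac{1}{2}$ produces the intermediate inequality $\spn(h^\pi) \le \tau_1(H^\pi)\,\spn(r^\pi)$.

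I would then invoke Lemma~\ref{upperbound on tau} to bound $\tau_1(H^\pi) \le \sum_{k=2}^n 1/|1 - \lambda_k|$ and compare this sum to Kemeny's constant $\kappa^\pi = \sum_{k=2}^n 1/(1 - \lambda_k)$. For real eigenvalues the two sums agree, and for complex conjugate pairs a factor-of-two slack suffices when replacing $1/|1 - \lambda_k|$ by the real part of $1/(1 - \lambda_k)$; this accounts for the explicit constant $2$ in the statement and closes the unichain / ergodic case. For the weakly communicating case with $\pi^*$, I would reduce to the closed recurrent class of $\pi^*$: using the standard fact that in a weakly communicating MDP one may choose a gain-optimal policy whose induced chain has a single recurrent class, the preceding argument applies on that class, and the bias on transient states is controlled via bounded expected hitting times into the recurrent class.

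The step I anticipate as the main obstacle is this last reduction: in full generality an arbitrary gain-optimal policy in a weakly communicating MDP need not be unichain, so one must either cite/establish the existence of a unichain gain-optimal representative or extend the ergodicity-coefficient argument directly to a block-structured $H^\pi$ with transient rows, tracking the transient contribution through hitting-time bounds. The eigenvalue accounting that supplies the constant $2$ in Step three is a secondary technical irritation but should follow from a short real-vs.-complex case split, so I do not expect it to be a serious hurdle.
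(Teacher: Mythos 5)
Your outline follows the paper's own route: express the bias through the fundamental/deviation matrix, center the reward so only its span matters, bound the relevant row difference by the ergodicity coefficient $\tau_1$, and pass to Kemeny's constant via Lemma~\ref{upperbound on tau}. Your midpoint centering $c=\tfrac12(\max_s r^\pi(s)+\min_s r^\pi(s))$ is even slightly sharper than the paper's centering by the gain, since it gives the intermediate bound $\spn(h^\pi)\le\tau_1(H^\pi)\,\spn(r^\pi)$ with no factor $2$, whereas the paper keeps the factor $2$ at this step ($\spn(h^\pi)\le 2\spn(r^\pi)\tau_1(Z^\pi)$) and then invokes $\tau_1(Z^\pi)\le\kappa^\pi$.

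The genuine gap is your last step, where you try to recover the constant $2$ in the comparison of $\sum_{k\ge2}1/|1-\lambda_k|$ with $\kappa^\pi=\sum_{k\ge2}1/(1-\lambda_k)$. For a conjugate pair your ``factor-of-two slack'' amounts to $1/|1-\lambda|\le 2\,\mathrm{Re}\bigl(1/(1-\lambda)\bigr)$, i.e.\ $|1-\lambda|\le 2(1-\mathrm{Re}\,\lambda)$, i.e.\ $|\mathrm{Im}\,\lambda|\le\sqrt{3}\,(1-\mathrm{Re}\,\lambda)$, and this fails for eigenvalues of perfectly ordinary ergodic chains. Take $P=(1-\epsilon)C+\epsilon J/n$ with $C$ a cyclic permutation: its nontrivial eigenvalues are $(1-\epsilon)e^{2\pi i k/n}$, and for small angles the ratio $\frac{1/|1-\lambda|}{\mathrm{Re}(1/(1-\lambda))}=\frac{|1-\lambda|}{1-\mathrm{Re}\,\lambda}$ blows up; in aggregate, $\sum_k 1/|1-\lambda_k|$ grows like $n\log n$ while $\kappa$ stays of order $n$, so no constant (in particular not $2$) makes $\sum_k 1/|1-\lambda_k|\le \mathrm{const}\cdot\kappa^\pi$ hold in general. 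So Step three as argued does not close. The inequality you actually need, $\tau_1(H^\pi)\le\kappa^\pi$ (up to your spare factor $2$), is true, but it should not be routed through the absolute-value eigenvalue sum: either follow the paper and take the constant from the span/H\"older step while citing $\tau_1(Z^\pi)\le\kappa^\pi$ (the paper's identification of the Seneta upper bound with $\mathrm{tr}(H^\pi)=\kappa^\pi$ is itself exact only for real spectra, so your instinct that something needs care here is right), or prove $\tau_1(H^\pi)\le\kappa^\pi$ directly from the hitting-time representation of the deviation matrix, whose row differences are $\pi_j(m_{i'j}-m_{ij})$, giving $\tfrac12\sum_j\pi_j|m_{i'j}-m_{ij}|\le\sum_j\pi_j m_{ij}=\kappa^\pi$. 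Your worry about the weakly communicating case is secondary: the paper's one-line treatment only uses that $g^*$ is constant and applies the same identity to the optimal bias $h^*$ itself, so no unichain gain-optimal representative is required.
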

\begin{proof}
     Define $\bar r^\pi = r^\pi - g^\pi$, as $P^\pi_\infty \bar r^\pi = \boldsymbol{0}$, we have $\spn( r ^\pi) = \spn(\bar r ^\pi) \geq \max_{i} \bar r ^\pi_i $. Note that in ergodic and/or unichain MDPs the gain of every policy is a constant vector in $\R\boldsymbol{1}$. Also consider $i = \arg \max_k |Z_k^\pi \bar r^\pi|$ and $j = \arg \max_k |Z_k^\pi \bar r^\pi|$ where $Z^\pi_k$ is the $k$-th row of $Z^\pi$. Now we can write,
\begin{align*}
    \spn(h^\pi) & = \spn(Z^\pi \bar r \pi) = |\sum_{k  =1}^n Z^\pi_{ik}\bar r ^\pi_k| + |\sum_{k  =1}^n Z^\pi_{jk}\bar r ^\pi_k| \\
    & \leq  |\sum_{k  =1}^n \bar r ^\pi_k( Z^\pi_{ik} - Z^\pi_{jk})| \\
    & \leq \spn(\bar r ^\pi)\sum_{k  =1}^n | Z^\pi_{ik} - Z^\pi_{jk}| \\
    & \leq 2 \spn(r^\pi)\tau_1(Z^\pi) \leq 2 \spn(r^\pi) \kappa^\pi
\end{align*}
The same can be done for the optimal policy in a weakly communicating MDP, where the optimal gain is constant.
\end{proof}
In the repeated game setting, since the range of the utility function is assumed to be $[0,1]$, $\spn(r^\pi) \leq 1$ for all $\pi$. 
\subsubsection{Span in Self-Oblivious Case}
In the self-oblivious case, by leveraging on the structure of the MDP, we can have the following upper bound on the $\spn(h^*)$,
\begin{proposition}
    In a self-oblivious model $\cM$ with the order of $m$ (described in appendix \ref{deter. tran.}), for every policy $\pi$ the upper bound $ D(\cM) \leq m $ on the diameter holds. This implies $\spn(h^*) \leq  m \spn(r^*)$.
\end{proposition}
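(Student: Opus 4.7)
My plan is to first establish $D(\cM) \leq m$ by exploiting the deterministic shift-register structure of the self-oblivious transition kernel, and then deduce $\spn(h^*) \leq m\,\spn(r^*)$ by a standard value-function comparison.

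First, I would use the structure of the self-oblivious MDP recalled in appendix \ref{deter. tran.}: a state is an ordered $m$-tuple $(a_1,\ldots,a_m) \in \cA^m$ of past learner actions (with $a_m$ most recent), and taking action $a$ from $(a_1,\ldots,a_m)$ deterministically produces $(a_2,\ldots,a_m,a)$, shifting out the oldest entry and appending $a$ as the newest. Consequently, for any $s$ and target $s' = (a'_1,\ldots,a'_m)$, the non-stationary deterministic policy that simply plays the fixed sequence $a'_1,a'_2,\ldots,a'_m$ in the first $m$ time steps (ignoring the current state) drives the MDP from $s$ to $s'$ in exactly $m$ transitions: after $m$ shifts the entire original window of $s$ has been flushed out and replaced by $(a'_1,\ldots,a'_m)$. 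Taking the supremum over pairs $s \neq s'$ and the infimum over policies in \ref{diameter def} then yields $D(\cM) \leq m$.

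Second, I would convert this diameter bound into the span bound. Fix two states $s,s'$ and let $\pi_{s\to s'}$ denote the $m$-step navigation policy constructed above. Consider the composite (sub-optimal) policy that runs $\pi_{s\to s'}$ for the first $m$ steps and then follows $\pi^*$ starting from $s'$. Optimality of $V^*_T(s)$ gives
\begin{equation*}
V^*_T(s) \;\geq\; \sum_{t=0}^{m-1} r\bigl(s_t,\pi_{s\to s'}(s_t)\bigr) \;+\; V^*_{T-m}(s').
\end{equation*}
Because $h^*$ is invariant under adding a constant to $r$, I normalize so that the reward range lies in $[0,\spn(r^*)]$, which makes the navigation reward non-negative and ensures $g^* \leq \spn(r^*)$. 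Substituting $V^*_T(\cdot) = Tg^* + h^*(\cdot) + o(1)$, dropping the non-negative navigation term, and letting $T \to \infty$ yields
\begin{equation*}
h^*(s) - h^*(s') \;\geq\; -m\,g^* \;\geq\; -m\,\spn(r^*).
\end{equation*}
Interchanging $s$ and $s'$ gives the symmetric inequality, and taking the supremum over state pairs delivers $\spn(h^*) \leq m\,\spn(r^*)$.

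\textbf{Main obstacle.} I do not foresee a substantial difficulty. The real content is the combinatorial observation that the deterministic shift-register dynamics reach any prescribed length-$m$ action window in exactly $m$ steps, which gives $D(\cM) \leq m$ immediately from the definition of diameter. The second step is a textbook MDP comparison argument; the only mild care required is the reward-normalization trick that lets us bound $g^*$ in terms of $\spn(r^*)$ without altering $h^*$.
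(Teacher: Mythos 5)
Your proposal is correct and takes essentially the same route as the paper: the deterministic shift-register dynamics let the learner reach any target state by playing its coordinates, giving $D(\cM)\leq m$, and the span bound then comes from prepending this $m$-step navigation to optimal behavior from the target state. The only difference is that the paper simply cites the well-known inequality $\spn(h^*)\leq \spn(r^*)D(\cM)$ with a one-line sketch of that same travel-prefix argument, whereas you rederive it explicitly via finite-horizon values and the expansion $V^*_T = Tg^* + h^* + o(1)$ (a limit one could sidestep by iterating the Bellman optimality equation along the navigation path, but your version is at the paper's level of rigor).
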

\begin{proof}
     As the transition kernel is deterministic and the state space $\cS = \cA^m$, the learner has full control of the stream of states he wants to visit. Thus the learner can reach any state $s'$ from any start state $s$ after at most $m$ iteration by taking the actions of $s'$. This shows $D(\cM)\leq m$. It is also a well-known result that $\spn(h^*) \leq \spn(r^*)D(\cM)$. One can consider the above travel from $s$ to $s'$ as a prefix of the optimal policy starting from $s'$ which shows that $h^*(s)$ can be at most $\spn(r^*)D(\cM)$ less than $h^*(s')$.
\end{proof}


\subsection{Gap between $V_T^*$ and $T g^*$}
\begin{lemma}
    \label{gvtgap}For a weakly communicating MDP, we have,
    \[
    \Vert V_T^* - T g^*\Vert_\infty \leq 2 \Vert h^* \Vert_\infty
    \]
\end{lemma}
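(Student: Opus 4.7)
The plan is to bound $V_T^*(s) - Tg^*$ separately from above and below, each time using the Bellman optimality relation $g^*\mathbf{1} + h^* = Lh^*$ recorded in Section~\ref{MDP Setting} together with a telescoping argument across the $T$ steps.

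For the upper bound, the first step is to rewrite the Bellman equation as the pointwise inequality $r(s,a) + P(\cdot \mid s,a)^\top h^* \leq g^* + h^*(s)$, valid for every action $a$. Taking expectation under an arbitrary policy $\pi$ initialized at $s$, the left side becomes $\E_s^\pi[R_t + h^*(S_{t+1}) \mid S_t]$ and the right side $g^* + h^*(S_t)$; summing over $t = 1,\ldots,T$ makes the $h^*$ terms telescope, leaving $V_T^\pi(s) \leq Tg^* + h^*(s) - \E_s^\pi[h^*(S_{T+1})]$. Bounding both $h^*$ evaluations by $\|h^*\|_\infty$ and taking a supremum over $\pi$ yields $V_T^*(s) - Tg^* \leq 2\|h^*\|_\infty$.

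For the matching lower bound, I would exhibit a concrete policy whose value is close to $Tg^*$: the greedy policy $\tilde\pi(s) \in \arg\max_a\{r(s,a) + P(\cdot \mid s,a)^\top h^*\}$ turns the Bellman inequality into an equality, so the same telescoping calculation now gives the exact identity $V_T^{\tilde\pi}(s) = Tg^* + h^*(s) - \E_s^{\tilde\pi}[h^*(S_{T+1})]$, which is at least $Tg^* - 2\|h^*\|_\infty$. Since $V_T^*(s) \geq V_T^{\tilde\pi}(s)$, the opposite inequality follows, and combining both bounds delivers the claimed $\|V_T^* - Tg^*\|_\infty \leq 2\|h^*\|_\infty$.

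There is no real obstacle here; the only ingredient beyond elementary telescoping is the existence of a bounded solution $h^*$ to the Bellman equation, which holds in the weakly communicating setting by the classical theory recalled in Section~\ref{MDP Setting}. The only minor subtlety worth flagging is that the policy $\tilde\pi$ used for the lower bound is the greedy policy with respect to $h^*$ itself (which is bias-optimal by construction), rather than an arbitrary gain-optimal policy; this is precisely what makes the Bellman step tight with equality and lets the same $\|h^*\|_\infty$ control both directions.
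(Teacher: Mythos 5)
Your proof is correct, but it takes a different route from the paper. The paper's argument is operator-theoretic: it writes $V_T^* = L^T \boldsymbol{0}$, shows by induction (using that $g^*$ is a constant vector in the weakly communicating case) that $L^T h^* = Tg^* + h^*$, and then concludes in one line from the sup-norm non-expansiveness of the Bellman operator $L$, which handles both directions of the bound simultaneously. You instead unroll this into a two-sided, policy-level telescoping argument: the inequality $r(s,a) + P(\cdot\mid s,a)^\top h^* \le g^* + h^*(s)$ summed along an arbitrary policy's trajectory gives the upper bound, and the greedy policy with respect to $h^*$ turns the same computation into an equality, furnishing an explicit witness for the lower bound. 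What your version buys is elementarity (no appeal to $V_T^* = L^T\boldsymbol{0}$ or to non-expansiveness of $L$), transparency about where the two $\Vert h^*\Vert_\infty$ terms come from (the boundary bias terms $h^*(s)$ and $\E[h^*(S_{T+1})]$), and in fact a slightly sharper conclusion: your telescoping directly yields $\Vert V_T^* - Tg^*\Vert_\infty \le \spn(h^*) \le 2\Vert h^*\Vert_\infty$, which is the form actually invoked in the main text. What the paper's version buys is brevity. One small imprecision: the greedy policy with respect to $h^*$ is gain-optimal by the Bellman equation, but calling it ``bias-optimal by construction'' is not justified (and is not needed); all your lower bound uses is that the Bellman step is tight for this policy.
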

\begin{proof}
    We know $V_T^* = L^T \boldsymbol{0}$, where $L$ is the optimal bellman operator and $\boldsymbol{0}$ is the vector full of zeros. Also it we know $g^* + h^* = L h^*$ and, because the MDP is weakly communicating, the $g^*$ is a constant vector, which means $L^2 h^* = L(g^* + h^*) = g^* + L(h^*) = 2 g^* + h^*$. By Induction you will get $L^T h^* = Tg^* + h^*$  \cite{puterman2014markov}. Therefore we can write,
\begin{align*}
    \Vert V_T^* - T g^*\Vert_\infty  & = \Vert L^T\boldsymbol{0} - L^T h^* + h^*\Vert_\infty \\
    & \leq \Vert L^T\boldsymbol{0} - L^T h^* \Vert_\infty + \Vert h^*\Vert_\infty\\
    &\leq 2 \Vert h^* \Vert_\infty
\end{align*}
where the last inequality holds because $L$ is a non-expansive operator.
\end{proof}

\subsection{Confidence Regions $\boldsymbol{\cM}_t$ by Estimations on $\psi$} \label{properM_tbypsi}

The algorithm $\pmdt$ \cite{boone2024achieving}, gets a system of confidence regions $\boldsymbol{\cM}_t$ as the input, and its guarantee on the regret holds if this system satisfies some assumptions. In this section we propose those assumptions and show how we can satisfy them in the game setting. Fix a model class,
\begin{assumption}
  \label{assump1}  with probability $1-\delta$, we have $M \in \cap _{k  = 1}^{\K(T)} \boldsymbol{\cM}_{N_{\K(T)}}$  
\end{assumption}
This is a common achievable assumption that is usually captured by empirical estimation of transition kernel and reward function. For further explanation refer to appendix A.2 of \citep{boone2024achieving}. In our setting, we can empirically estimate the distributions of $\psi$ and by transferring from $\hat \psi$ to transition kernel and reward function, get some estimation that is satisfying above assumption (section \ref{RL Setting}). Note that the utility function for the learner is known.
\begin{assumption}
   \label{assump2} There exists a constant $C >0$ such that for all $(s,a)$, for all $t \leq T$, we have,
    \begin{align*}
        \cR_t(s,a) \subseteq \{ \tilde r(s,a): N_t(s,a) \Vert \hat r (s,a) - \tilde r(s,a)\Vert_1^2  \leq C \log (2SA(1 + N_t(s,a)) /\delta)\} 
    \end{align*}
\end{assumption}
This assumption also emphasizes that the reward function satisfies a concentration on the differnce between the optimistic bonused estimation of reward and the empirical estimation of it. As we have a bounded utility function and discrete distribution over the opponent's action $\psi$ we satisfy this assumption too. Refer to appendix A.2.3 \cite{boone2024achieving} for explanation of this assumption.
\begin{assumption}
   \label{assump3} For $t\geq 0$, $\boldsymbol{\cM}_t$ is a $(s,a)$-rectangular convex region and $L_t^n(u)$ converges a fixed point.
\end{assumption}
After estimating $\psi$, based on that the reward function and the transition kernel will be estimated and the confidence region will be the rectangular product of their confidence regions. Also the bellman operator upon which $\mathsf{PMEVI}$ adds mitigagion and projection, converge in typical cases including us \citep{boone2024achieving}. In \citep{boone2024achieving}, there is another assumption to make sure the set of candidate optimal bias functions includes the true optimal bias, which is not used in this paper.







\subsection{Concentration Inequality Based on \cite {boone2024achieving} Result}

In this section, we provide a concentration inequality on the optimal gain, so that it can be used in the model selection algorithm. So suppose we are considering only one model class with optimal gain and optimal bias denoted by $g^*$ and $h^*$ respectively. It is based on the main result of \cite{boone2024achieving}, which is,

\begin{proposition}
    (\cite{boone2024achieving}, Theorem 5) Let $c>0$. Assume that $\pmdt$ runs with a confidence region system, $t \rightarrow \mathcal{M_t}$ that guarantees assumptions \ref{assump1}, \ref{assump2}, \ref{assump3}. If $T \geq c^5$, then for every weakly communicating model model with $\spn(h^*) \leq c$, $\pmdt$ achieves regret,
\begin{align}
    \tilde O \big ( \sqrt{cSAT \log(SAT/\delta)} \big) + \tilde O \big( c S^{\frac{5}{2}} A^{\frac{3}{2}} T^{\frac{9}{20}} \log^2(SAT/\delta)\big)
\end{align}
with probability $1-26\delta$. And also in expectation if $\delta < \sqrt{1 / T}$.
\end{proposition}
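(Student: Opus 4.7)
The plan is to follow the standard optimism-based epoch analysis developed for \textsf{UCRL2}-style algorithms, but augmented with the projection/mitigation machinery that keeps the computed value functions' span controlled by $c$ rather than by an unknown intrinsic quantity. First I would establish that under assumptions~\ref{assump1}--\ref{assump3}, the true model $M$ lies in the confidence set $\boldsymbol{\cM}_t$ at every epoch simultaneously with probability at least $1-\delta$, via a union bound over epochs of a Bernstein/Weissman-style concentration for empirical transition estimates together with the reward concentration assumed in \ref{assump2}. On this event, the optimistic planning step returns a pair $(\tilde M_k, \tilde \pi_k)$ whose gain is at least $g^*$.

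Next I would decompose, epoch by epoch, the per-round regret $g^* - R_t$ into (i) an optimism/model-error term that is essentially $(\tilde P_k - P)\tilde h_k$ where $\tilde h_k$ is the bias output by the projected-mitigated EVI; (ii) a martingale term from the random next-state sample; and (iii) a telescoping bias-difference term across epoch boundaries. The projection step caps the EVI iterate coordinate-wise into $[0,c]$, and the mitigation step modifies the Bellman update to damp components that would grow the span above $c$; together they guarantee $\spn(\tilde h_k)\le O(c)$ at every epoch, which is exactly what is needed so that (i) and (iii) scale with $c$ rather than with some unbounded quantity. Summing (i) using the confidence-width bound $\|\tilde P_k(\cdot\mid s,a)-\hat P_t(\cdot\mid s,a)\|_1\lesssim \sqrt{S\log(SAT/\delta)/N_t(s,a)}$ and invoking $\sum_{s,a,k} 1/\sqrt{N_{t_k}(s,a)}\lesssim \sqrt{SAT}$ (the pigeonhole inequality driven by the doubling-trick schedule) gives the $\tilde O(\sqrt{cSAT\log(SAT/\delta)})$ leading term. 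The martingale (ii) is controlled by Azuma--Hoeffding against the bound $\spn(\tilde h_k)\le c$, contributing $\tilde O(c\sqrt{T\log(1/\delta)})$, absorbed into the leading term; the boundary telescoping (iii) contributes $O(c\cdot \K(T))=\tilde O(cSA\log T)$.

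The hard part is controlling the span of the planning iterate without any exact knowledge of $\spn(h^*)$: the vanilla EVI fixed point can have arbitrarily large span even when the true MDP has $\spn(h^*)\le c$, so one cannot directly invoke the usual $(P-\tilde P)\tilde h\le \|P-\tilde P\|_1\spn(h^*)$ bound. This is precisely what the projection and mitigation operators are designed to repair, and the analysis must show that (a) projecting onto $[0,c]^{\cS}$ does not destroy the optimism certificate because $h^*$ itself lies in this set up to translation, and (b) a finite number of mitigated iterations $L_t^n$ still converges to a policy whose gain is $\epsilon$-optimistic, with $\epsilon$ decaying fast enough that its cumulative contribution matches the lower-order term $\tilde O(cS^{5/2}A^{3/2}T^{9/20}\log^2(SAT/\delta))$. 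Verifying this quantitative convergence, and propagating the projection error through the epoch decomposition, is where the sub-$\sqrt{T}$ correction arises; the requirement $T\ge c^5$ enters here to ensure that this lower-order term is genuinely lower-order.

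Finally, to pass from high probability to expectation under $\delta<\sqrt{1/T}$, I would note that on the failure event the regret is trivially bounded by $T$ (since rewards lie in $[0,1]$), so the failure event contributes at most $26\delta T\le 26\sqrt{T}$ to the expected regret, which is again absorbed into the $\tilde O(\sqrt{cSAT})$ leading term. Combining the high-probability bound with this failure-event bookkeeping yields the stated expectation guarantee.
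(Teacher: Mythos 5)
This proposition is not proved in the paper at all: it is quoted verbatim from \citet{boone2024achieving} (their Theorem~5) and used as a black box, with the paper explicitly deferring to that reference for the proof. So there is no in-paper argument to compare against; what you have written is a blind reconstruction of an external proof. As a reconstruction it captures the right architecture (optimistic epoch analysis, span control via projection/mitigation, pigeonhole over visit counts, failure-event bookkeeping for the expectation claim), and the last paragraph on converting the high-probability bound to an expectation bound under $\delta<\sqrt{1/T}$ is correct and complete.

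However, taken as a proof of the stated bound, the sketch has a genuine quantitative gap in the leading term. Bounding the model-error term by $\|\tilde P_k(\cdot\mid s,a)-P(\cdot\mid s,a)\|_1\cdot\spn(\tilde h_k)\lesssim \sqrt{S\log(SAT/\delta)/N}\cdot c$ and summing via pigeonhole yields $\tilde O\big(cS\sqrt{AT}\big)$, not the claimed $\tilde O\big(\sqrt{cSAT\log(SAT/\delta)}\big)$: the $\sqrt{c}$ (rather than $c$) dependence and the removal of the extra $\sqrt{S}$ require a Bernstein/Freedman variance-aware analysis in which the cumulative variance of $P(\cdot\mid s,a)\tilde h_k$ is shown to telescope to $O(cT)$ via the law of total variance --- this is the central technical contribution of the span-optimal analyses and is absent from your outline. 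Relatedly, the Azuma--Hoeffding martingale term $\tilde O(c\sqrt{T\log(1/\delta)})$ is not automatically absorbed into $\sqrt{cSAT}$; that absorption needs $c\lesssim SA$, which is not implied by $T\ge c^5$, and the actual argument again handles this term through the variance route. Finally, the origin of the specific exponent $T^{9/20}$ in the second-order term, and the precise role of $T\ge c^5$, are named as ``where the correction arises'' but not derived. None of this affects the paper, which only needs the proposition as a cited guarantee of the form $\bb_i(T,\delta)=C_i\sqrt{T\log(T/\delta)}$.
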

Also from the proof of the proposition and noting to \cite{boone2024achieving} (lemma 6), with probability $1-26\delta$ we have,
\begin{align}
\label{g&gstar}
    T g^* - \sumt R_t \leq  \sumk \sum_{t_k}^{t_{k+1}-1} \tg_k - R_t \in \tilde O \big ( \sqrt{cSAT \log(SAT/\delta)} \big) + \tilde O \big( c S^{\frac{5}{2}} A^{\frac{3}{2}} T^{\frac{9}{20}} \log^2(SAT/\delta)\big)
\end{align}
where $\K(T)$ is the number of epochs that $\pmdt$ takes, $\tg_k$ is the projected optimistic gain calculated in the epoch $k$ by the algorithm (denoted by $\mathfrak{g}_k$ in \cite{boone2024achieving}).
From lemma \ref{gvtgap} we know,
\[
- 2 \spn(h^*) \leq T g^* - \sumt R_t 
\]
which gives us,
\[
\sumt R_t - 2 \spn(h^*) \leq T  g^* \leq \sumk n_k \tg_k \leq  \sumt R_t +\tilde O \big ( \sqrt{cSAT \log(SAT/\delta)} \big) + \tilde O \big( c S^{\frac{5}{2}} A^{\frac{3}{2}} T^{\frac{9}{20}} \log^2(SAT/\delta)\big)
\]
and thus,
\[
\frac{\sumk n_k \tg_k}{T} - g^* \leq \tilde O \big ( \sqrt{\frac{cSA \log(SAT/\delta)}{T}} \big) + \tilde O \big( c S^{\frac{5}{2}} A^{\frac{3}{2}} T^{-\frac{11}{20}} \log^2(SAT/\delta)\big) + \frac{2\spn(h^*)}{T}.
\]
The leading term is the first term which means,

\begin{lemma}
\label{regconcentration}
By running $\pmdt$  for a big enough $T$, there exists a universal constant $C_B$ such that we can write,
\begin{align}
    \frac{\sumk n_k \tg_k}{T} - g^* \leq   C_B \sqrt{\frac{cSA \log(SAT/\delta)}{T}}
\end{align}
with probability $1-26\delta$. Or equivalently,
\begin{align}
    \frac{\sumk n_k \tg_k}{T} - g^* \geq  \epsilon
\end{align}
with probability $26 SAT \exp{(-\frac{\epsilon^2 T}{C_B^2 c SA})}$.
\end{lemma}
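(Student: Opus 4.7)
The plan is to chain together three ingredients already established in the paper: (i) the internal regret bound of \pmdt{} against its projected optimistic gain, (ii) the standard gap between cumulative rewards and $T g^*$, and (iii) a dominant-term argument to absorb lower-order contributions into a single universal constant.

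First, I would invoke the high-probability regret bound of \cite{boone2024achieving} (Theorem 5), in the refined form recalled in inequality~\eqref{g&gstar}. With probability at least $1-26\delta$, this simultaneously gives
\[
T g^* \le \sum_{k} n_k \tg_k
\quad\text{and}\quad
\sum_k n_k \tg_k - \sum_t R_t \in \tilde O\bigl(\sqrt{c S A T \log(SAT/\delta)}\bigr) + \tilde O\bigl(c S^{5/2} A^{3/2} T^{9/20} \log^2(SAT/\delta)\bigr),
\]
where the first inequality is simply optimism of the extended value iteration and the second is the per-epoch accounting already carried out in \cite{boone2024achieving}.

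Next, I would apply Lemma~\ref{gvtgap}, which yields $\sum_t R_t \ge T g^* - 2 \spn(h^*)$ deterministically. Substituting this into the display above and subtracting $T g^*$ from all sides produces
\[
\sum_k n_k \tg_k - T g^* \le \tilde O\bigl(\sqrt{c S A T \log(SAT/\delta)}\bigr) + \tilde O\bigl(c S^{5/2} A^{3/2} T^{9/20} \log^2(SAT/\delta)\bigr) + 2\spn(h^*).
\]
Dividing through by $T$ gives exactly the decomposition displayed just before the lemma statement, with three terms: a $\sqrt{c S A \log(SAT/\delta)/T}$ term, a term of order $T^{-11/20}$ up to log and $S,A$ factors, and a $\spn(h^*)/T$ term.

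The key observation is that the first term decays like $T^{-1/2}$ while the other two decay strictly faster (as $T^{-11/20}$ and $T^{-1}$ respectively), so for $T$ large enough relative to $c, S, A, \spn(h^*)$ all three terms can be bounded by a single constant multiple of $\sqrt{c S A \log(SAT/\delta)/T}$. Choosing $C_B$ to absorb the hidden polylogarithmic factors in the $\tilde O(\cdot)$ notation and the lower-order contributions yields the first inequality with probability at least $1 - 26\delta$. The second, equivalent, formulation follows by setting $\epsilon = C_B\sqrt{c S A \log(SAT/\delta)/T}$ and solving for the resulting failure probability $26\delta$, which gives the stated $26 SAT \exp\bigl(-\epsilon^2 T/(C_B^2 c S A)\bigr)$ shape after inverting the $\log(SAT/\delta)$ dependence inside $\delta$. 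The only mildly delicate step is making the ``$T$ large enough'' threshold and the constant $C_B$ truly universal, i.e., independent of the model class; this is ensured because all problem-dependent quantities ($c, S, A, \spn(h^*)$) appear only polynomially in the lower-order terms, so the threshold above which the $T^{-1/2}$ term dominates is a fixed polynomial in those parameters and does not inflate $C_B$.
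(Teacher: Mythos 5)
Your overall route mirrors the paper's: combine the refined per-epoch inequality from \citet{boone2024achieving} (the bound relating $\sumk n_k \tg_k - \sumt R_t$ to $\tilde O(\sqrt{cSAT\log(SAT/\delta)}) + \tilde O(cS^{5/2}A^{3/2}T^{9/20}\log^2(SAT/\delta))$, recalled just before the lemma) with the gap between $\sumt R_t$ and $Tg^*$, divide by $T$, absorb the $T^{-11/20}$ and $T^{-1}$ contributions into a single constant $C_B$ for $T$ large enough, and then invert the relation between $\epsilon$ and $\delta$ to obtain the tail form $26SAT\exp(-\epsilon^2 T/(C_B^2 cSA))$. The dominant-term argument and the $\epsilon\leftrightarrow\delta$ inversion are handled exactly as the paper does.

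There is, however, a concrete error in the middle step: you quote Lemma~\ref{gvtgap} as yielding $\sumt R_t \ge Tg^* - 2\spn(h^*)$ and then ``substitute'' this into $\sumk n_k\tg_k \le \sumt R_t + \tilde O(\cdot)$ to conclude $\sumk n_k\tg_k - Tg^* \le \tilde O(\cdot) + 2\spn(h^*)$. That substitution is a non sequitur: to replace $\sumt R_t$ by $Tg^*$ on the right-hand side you need an \emph{upper} bound on $\sumt R_t$, and a lower bound gives you nothing. Moreover, the inequality you state is not what Lemma~\ref{gvtgap} provides, and it is false in general --- no algorithm is guaranteed to collect within $2\spn(h^*)$ of $Tg^*$; controlling that deficit is precisely what the regret bound is for. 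The correct ingredient (and the one the paper uses) is the reverse direction: since no policy can beat the optimal $T$-step value, $\sumt R_t \le V_T^*(s_1) \le Tg^* + 2\spn(h^*)$, i.e.\ $-2\spn(h^*) \le Tg^* - \sumt R_t$ (up to the expectation/concentration caveat that the paper also glosses over). With that sign fixed, the chain $\sumk n_k\tg_k \le \sumt R_t + \tilde O(\cdot) \le Tg^* + 2\spn(h^*) + \tilde O(\cdot)$ produces exactly the display you wanted, and the remainder of your argument (dominance of the $T^{-1/2}$ term, choice of a universal $C_B$, and solving $\epsilon = C_B\sqrt{cSA\log(SAT/\delta)/T}$ for $\delta$) goes through as in the paper.
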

where $n_k$ is the number of iterations passed in epoch $k$. We also define a cumulative counter $N_k = \sum_{l = 1}^k n_l$. What inequality should $T$ and $C_B$ satisfy?

\subsection{Upper bound on $\K(T)$ Under Doubling Trick}
\begin{lemma}
\label{upperboundoonKT}
\cite{Jaksch2008near} The number of epochs up to time $T \geq SA$ under doubling trick, is upper bounded by,
\[
\K(T) \leq SA\log_2(\frac{8T}{SA})
\]
\end{lemma}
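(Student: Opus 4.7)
The plan is to bound $\K(T)$ by controlling, for each state–action pair $(s,a)$, the number of epochs whose termination is ``caused'' by that pair's visitation count doubling. Let $\nu_k(s,a)$ denote the number of visits to $(s,a)$ during epoch $k$, and $N_k(s,a) := \sum_{l<k}\nu_l(s,a)$ the cumulative count at the start of epoch $k$. Under the doubling trick, epoch $k$ terminates as soon as some pair $(s,a)$ satisfies $\nu_k(s,a) \ge \max\{1, N_k(s,a)\}$; call such a pair a \emph{trigger} for epoch $k$. Every epoch has at least one trigger, so
\[
\K(T) \;\le\; \sum_{(s,a)\in\cS\times\cA} m(s,a),
\]
where $m(s,a)$ is the number of epochs in which $(s,a)$ triggers termination.

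Next I would show that $m(s,a) \le 1 + \log_2 N_T(s,a)$. The key observation is that whenever $(s,a)$ triggers epoch $k$ with $N_k(s,a)\ge 1$, its cumulative count at least doubles by the end of the epoch: $N_{k+1}(s,a) = N_k(s,a) + \nu_k(s,a) \ge 2N_k(s,a)$. The first trigger (when $N_k(s,a)=0$) merely raises the count from $0$ to at least $1$. Chaining these together gives $N_T(s,a) \ge 2^{m(s,a)-1}$, hence $m(s,a) \le 1 + \log_2 N_T(s,a)$ (with the convention $\log_2 0 = -\infty$, so pairs never visited contribute $m(s,a)=0$).

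Summing over $(s,a)$ and applying Jensen's inequality to the concave function $\log_2$, together with the constraint $\sum_{(s,a)} N_T(s,a) = T$, yields
\[
\K(T) \;\le\; SA \;+\; \sum_{(s,a)} \log_2 N_T(s,a) \;\le\; SA + SA\log_2\!\Bigl(\tfrac{T}{SA}\Bigr).
\]
Absorbing the additive $SA$ into the logarithm (using $1 \le \log_2 2$) and accounting loosely for boundary effects (pairs with $N_T(s,a)=0$ and the ``first trigger'' case handled above) gives
\[
\K(T) \;\le\; SA\log_2\!\Bigl(\tfrac{8T}{SA}\Bigr),
\]
which matches the stated bound; the constant $8$ comes from the slack in combining the $+SA$ term and handling the edge cases conservatively.

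The only genuinely delicate step is the bookkeeping around the very first trigger for each $(s,a)$ (when the count goes from $0$ to $\ge 1$ rather than doubling) and ensuring that multiple simultaneous triggers in the same epoch are not double-counted—both are handled by the strict inequality $\K(T)\le \sum_{s,a} m(s,a)$ and by treating the initial doubling separately. The rest is a standard application of Jensen's inequality, and the assumption $T\ge SA$ ensures the argument of the logarithm is at least $8$, so the bound is non-vacuous.
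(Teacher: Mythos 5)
Your argument is correct and follows essentially the same route as the source the paper cites for this lemma (\citet{Jaksch2008near}, which the paper invokes without reproving): for each state--action pair count the epochs it triggers, observe that every trigger after the first at least doubles the cumulative count so that $m(s,a)\le 1+\log_2 N_T(s,a)$, and combine with $\sum_{s,a}N_T(s,a)=T$ by concavity of $\log_2$. The loose ends you flag (a final epoch truncated at $T$ with no trigger, never-visited pairs, and the regime where few pairs carry all the visits) cost only additive terms that the slack in the constant $8$ together with $T\ge SA$ absorbs, as you indicate.
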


\subsection{$\cG$ is Highly Probable}
\begin{lemma}
\label{Gishighprob}   
 $\Prob(\cG) \geq 1 - 26(M-m^*)(\K(T)+1)\delta$, where
    \[
    \cG = \{ \forall i\in [m^*:M-1] ,  k \in [1:\K(T)+1] : \Reg_i(\pmdt,N_{i,k}) \leq C_i\sqrt{ N_{i,k} \log(\frac{N_{i,k}}{\delta})} \}
    \]
\end{lemma}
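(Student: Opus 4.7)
My plan is to apply Proposition~\ref{baseguarantee} at each well-specified class and each epoch index, then take a union bound over the resulting $(M-m^*)(\K(T)+1)$ pairs.

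To set up the per-pair bound, I fix $i\in[m^*:M-1]$ and $k\in[1:\K(T)+1]$. Assumption~\ref{Nested Assumption}, together with the realizability Assumption~\ref{Realizability Assumption}, ensures that $\balg_i = \pmdt$ is well-specified on $\cC_i$, so the MDP faced by $\pmdt$ on $\cC_i$ is weakly communicating with $\spn(h_i^*)\leq c_{h^*}$. The warm-up phase forces $N_{i,k}\geq c_{h^*}^5$ for every $k\geq 1$, so the hypothesis $T\geq c_{h^*}^5$ required by Proposition~\ref{baseguarantee} is satisfied with $T$ replaced by $N_{i,k}$. Invoking Proposition~\ref{baseguarantee} together with the simplified bound~(\ref{constt}) should yield
\[
\Prob\!\left(\Reg_i(\pmdt,N_{i,k}) > C_i\sqrt{N_{i,k}\log(N_{i,k}/\delta)}\right) \leq 26\delta.
\]
A union bound over all $(M-m^*)(\K(T)+1)$ such pairs then gives $\Prob(\cG^c)\leq 26(M-m^*)(\K(T)+1)\delta$, which is the claim.

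The main obstacle will be verifying that the stop-and-resume interleaving used by $\bear$ does not break the regret guarantee on each well-specified class. As already pointed out in the regret-decomposition paragraph preceding Theorem~\ref{maintheorem}, $\pmdt$ does not exploit continuity between consecutive inner epochs: its confidence regions, Extended Value Iteration updates, and regret analysis all depend only on empirical counts and rewards aggregated over samples drawn from $\cC_i$, never on the wall-clock index $t$ nor on successor-state continuity across inner epochs. Consequently, the trajectory indexed by $t\in\cN_{i,k}$ is, as an analysis object, indistinguishable from an uninterrupted $N_{i,k}$-step run of $\pmdt$ on $\cC_i$, and Proposition~\ref{baseguarantee} transfers directly. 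I would check this point carefully against the proof of Proposition~\ref{baseguarantee} before committing to the union-bound count of $\K(T)+1$, since this is the only nontrivial part of the argument beyond a routine Boole inequality.
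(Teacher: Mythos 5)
Your proposal matches the paper's own argument: the paper likewise applies Proposition~\ref{baseguarantee} to each well-specified class $i\in[m^*:M-1]$ at each count $N_{i,k}\geq c_{h^*}^5$ (guaranteed by the warm-up phase) to get per-event failure probability $26\delta$, and then takes a union bound over the $(M-m^*)(\K(T)+1)$ events. Your extra care about the stop-and-resume interleaving of $\pmdt$ across epochs is the same point the paper makes in its regret-decomposition discussion, so the argument is essentially identical.
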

\begin{proof}
Define the following events for all $i\in [m^*:M-1]$ and $ k \in [1:\K(T)+1] $,
\[
\cG_{i,k} = \{ \Reg_i(\pmdt, N_{i,k}) \leq \ C_i\sqrt{ N_{i,k} \log(\frac{N_{i,k}}{\delta})} \}
\]
As $N_{i,k} \geq c_{h^*}^5$ for all $1\leq k \leq \K(T)$, and $i$ lies in the well-specified model classes, from \ref{baseguarantee}, we know
\[
\Prob(\cG_{i,k}) \geq 1- 26\delta
\]
from union bound we,
\[
\Prob(\cG^c) = \Prob(\cup_{i,k} \cG_{i,k}^c) \leq \sum_{i,k} \Prob(\cG_{i,k}^c) \leq 26 (M-m^*) (\K(T)+1) \delta,
\]
which concludes the proof.
\end{proof}

\subsection{Regret is Multiplicatively Balanced}
\begin{lemma}
\label{Regretisbalanced}
By running algorithm $\bear$ with base algorithms of $\pmdt$, for all $0< \delta < 1$, $k \in \N$ and any pair of $i ,j \in \cI_k $ where $i \neq j$, and when $\bb_i(N_{i,k},\delta) = C_i\sqrt{N_{i,k} \log(\frac{N_{i,k}}{\delta})}$, the followings hold,
\begin{enumerate}
    \item 
\[
\bb_i(N_{i,k}, \delta) \leq \bb_j(N_{j,k},\delta) + \alpha \bb_i(N_{i,k-1},\delta)+ \beta
\]
\item
\begin{align*}
    \label{NoverN}
\frac{N_{i,k}}{N_{j,k}} \leq \big (\frac{C_j}{(1-\alpha)C_i} + \frac{\beta}{(1-\alpha)C_i\sqrt{N_{j,k} \log(N_{j,k}/\delta)}}\big)^2 \log(\frac{N_{j,k}}{\delta})
\end{align*}

\end{enumerate}
where $\alpha = \frac{\log(c_{h^*}^5 \lor 9)+1}{2\log(c_{h^*}^5 \lor 9)}$  and  $\beta = \bb_{M-1}(c_{h^*}^5,\delta) -\bb_0(c_{h^*}^5,\delta) $.
\end{lemma}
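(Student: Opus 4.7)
The plan is to prove Part~1 by induction on $k$ with the help of a single ``doubling lemma,'' and then to derive Part~2 algebraically from Part~1. The doubling lemma states: whenever $i_k$ is the active class selected at epoch $k$, the doubling trick of $\pmdt$ forces $N_{i_k,k}\leq 2 N_{i_k,k-1}$, and the warm-up ensures $N_{i_k,k-1}\geq c_{h^*}^5\lor 9$, so
\[
\log(N_{i_k,k}/\delta)\leq \log(N_{i_k,k-1}/\delta)\left(1+\frac{1}{\log(c_{h^*}^5\lor 9)}\right)=2\alpha\log(N_{i_k,k-1}/\delta).
\]
Multiplying by the iteration-count doubling gives $\bb_{i_k}(N_{i_k,k})\leq 2\sqrt{\alpha}\,\bb_{i_k}(N_{i_k,k-1})\leq (1+\alpha)\bb_{i_k}(N_{i_k,k-1})$, where the last inequality is $(1-\sqrt{\alpha})^2\geq 0$.

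For Part~1, the base case $k=1$ has $N_{i,0}=c_{h^*}^5\lor 9$ for all $i$, so the claim collapses to a pairwise comparison of the $C_i$'s, controlled by $\beta=\bb_{M-1}(c_{h^*}^5,\delta)-\bb_0(c_{h^*}^5,\delta)$ (when $i\neq i_1$) and by the argmin selection rule of $\bear$ combined with the doubling lemma (when $i=i_1$). For the inductive step I fix $i\neq j$ in $\cI_k \subseteq \cI_{k-1}$. If $i=i_k$, then $j\neq i_k$, so $\bb_j(N_{j,k})=\bb_j(N_{j,k-1})$, and the doubling lemma together with the argmin property $\bb_{i_k}(N_{i_k,k-1})\leq \bb_j(N_{j,k-1})$ yields $\bb_i(N_{i,k})\leq \bb_j(N_{j,k})+\alpha\bb_i(N_{i,k-1})$. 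If instead $i\neq i_k$, then $\bb_i(N_{i,k})=\bb_i(N_{i,k-1})$; I apply the inductive hypothesis at $k-1$ to the pair $(i,j)\in \cI_{k-1}$ and use monotonicity $\bb_i(N_{i,k-2})\leq \bb_i(N_{i,k-1})$ to collapse the $k-2$ term into a $k-1$ term. This gives $(1-\alpha)\bb_i(N_{i,k-1})\leq \bb_j(N_{j,k-1})+\beta\leq \bb_j(N_{j,k})+\beta$, which rearranges to the claim.

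Part~2 is an algebraic consequence. One more application of monotonicity $\bb_i(N_{i,k-1})\leq \bb_i(N_{i,k})$ converts Part~1 into $(1-\alpha)\bb_i(N_{i,k})\leq \bb_j(N_{j,k})+\beta$. Substituting $\bb_i(N)=C_i\sqrt{N\log(N/\delta)}$, squaring, and dividing by $N_{j,k}\log(N_{j,k}/\delta)$ produces
\[
\frac{N_{i,k}\log(N_{i,k}/\delta)}{N_{j,k}\log(N_{j,k}/\delta)}\leq \left(\frac{C_j}{(1-\alpha)C_i}+\frac{\beta}{(1-\alpha)C_i\sqrt{N_{j,k}\log(N_{j,k}/\delta)}}\right)^2.
\]
Multiplying through by $\log(N_{j,k}/\delta)/\log(N_{i,k}/\delta)$ and bounding the latter ratio by $\log(N_{j,k}/\delta)$ (using $\log(N_{i,k}/\delta)\geq 1$, which holds since $N_{i,k}\geq 9$ and $\delta\leq 1$) yields the stated form.

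The main obstacle is the precise tuning of $\alpha$: it must be large enough that the per-epoch growth factor $2\sqrt{\alpha}$ of $\bb_{i_k}$ fits inside the $1+\alpha$ slack (which in fact holds for every $\alpha\geq 0$ via $(1-\sqrt{\alpha})^2\geq 0$), yet still at most $3/4$ so that $(1-\alpha)$ remains bounded away from zero in Part~2. The warm-up of $c_{h^*}^5\lor 9$ iterations is engineered precisely to yield the log-ratio bound $\log(2N/\delta)/\log(N/\delta)\leq 2\alpha$. The other subtle point is the bookkeeping in the base case, where the additive $\beta$ must dominate the worst pairwise gap in $\bb$-values at initialization.
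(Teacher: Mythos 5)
Your proposal is correct and follows essentially the same route as the paper: a per-epoch growth bound $\bb_{i_k}(N_{i_k,k})\leq(1+\alpha)\bb_{i_k}(N_{i_k,k-1})$ from the doubling trick and the warm-up, combined with the argmin selection rule and $\beta$ absorbing the initialization gap, proved by induction on $k$, and then the same algebraic rearrangement (via $\bb_i(N_{i,k-1})\leq\bb_i(N_{i,k})$ and $\log(N_{i,k}/\delta)\geq 1$) for Part~2. The only cosmetic differences are that you obtain the growth factor directly via $\log 2\leq\log(N/\delta)/\log(c_{h^*}^5\lor 9)$ and $2\sqrt{\alpha}\leq 1+\alpha$ rather than the paper's concavity estimate, and you argue the induction by case split on $i=i_k$ versus $i\neq i_k$ instead of the paper's ``first violated epoch'' contradiction, which are equivalent arguments.
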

\begin{proof}
Based on \ref{baseguarantee} we have defined $\bb_i(N_{i,k},\delta) = C_i\sqrt{N_{i,k} \log(\frac{N_{i,k}}{\delta})}$, therefore we have,
\[
\bb_i(N_{i,k},\delta) - \bb_i(N_{i,k-1},\delta) \leq \bb_i(2N_{i,k-1},\delta) - \bb_i(N_{i,k-1},\delta) \leq \frac{1}{2} C_i\big (\sqrt{ N_{i,k-1} \log(\frac{N_{i,k-1}}{\delta})} + \sqrt{\frac{N_{i,k-1} \delta^2}{\log(N_{i,k-1}/\delta)}} \big )
\]
where the second inequality holds because $\bb_i(t,\delta)$ is increasing and concave in $t$, and the first one is due to the doubling trick ($N_{i,k} \leq 2 N_{i,k-1}$). Then as $\delta \leq 1$ and $N_{i,k} \geq c_{h^*}^5 \lor 9$ we have,
\begin{align} \label{alphaBi}
\bb_i(N_{i,k},\delta) - \bb_i(N_{i,k-1},\delta) &\leq \frac{1}{2} C_i\big (\sqrt{ N_{i,k-1} \log(\frac{N_{i,k-1}}{\delta})} + \sqrt{\frac{N_{i,k-1} \delta^2}{\log(N_{i,k-1}/\delta)}} \big ) \\
& \leq (\frac{1}{2} + \frac{1}{2\log(c_{h^*}^5 \lor 9)}) \bb_i(N_{i,k-1},\delta) \\
& = \alpha  \bb_i(N_{i,k-1},\delta)
\end{align}
Before showing the main statement, note $\bb_i(t,\delta)$ is strictly increasing in $i \in \N\cup \{ 0\}$ and $t \in \N$. Also for reducing redundancy we temporary drop $\delta$ and just write $\bb_i(t)$.
We show the lemma statement by induction:
For $k=1$, note that $N_{m,0} =c_{h^*}^5$ for all $m \in [0:M-1]$. Suppose there exists a pair of $i,j \in \cI_1$ such that $\bb_i(N_{i,1}) > \bb_j(N_{j,1}) + \alpha \bb_i(c_{h^*}^5)+ \beta$, and it will give contradiction. This means that the algorithm has picked model class $i$ in epoch k, i.e. $i_k = i$, since if this would not be the case, $N_{i,1} =N_{i,0} = c_{h^*}^5$ and the inequality of the lemma should have been satisfied (Note the definition of $\beta$). Therefore, $\bb_i(N_{i,1}) > \bb_j(N_{j,1}) + \alpha \bb_i(c_{h^*}^5)+ \beta$ implies $i_k = i$. Now we can write,
\begin{align*}
\bb_i(c_{h^*}^5) &\geq \bb_i(N_{i,1}) - \alpha \bb_i(c_{h^*}^5) \\
& > \bb_j(N_{j,1}) + \beta \\
& = \bb_j(c_{h^*}^5) + \beta \\
& \geq \bb_j(c_{h^*}^5)
\end{align*}
which contradicts with $i = \arg \min_{m\in \cI_1} \bb_m(c_{h^*}^5)$. The first inequality is from \ref{alphaBi}, the strict inequality is from the contradiction assumption, and the equality is for $ j \neq i = i_k$.

Induction step: Again for the sake of contradiction, suppose $k$ is the first epoch in which the lemma's inequality violates, i.e.
\[
\bb_i(N_{i,k}) > \bb_j(N_{j,k}) + \alpha \bb_i(N_{i,k-1})+ \beta.
\]
For a similar reason as before, this implies that $i_k = i$. And by similar chain of inequalities we get to a contradiction with $i = \arg \min_{m\in \cI_k} \bb_m(N_{m,k-1})$ ,
\begin{align*}
\bb_i(N_{i,k-1}) &\geq \bb_i(N_{i,k}) - \alpha \bb_i(N_{i,k-1}) \\
& > \bb_j(N_{j,k}) + \beta \\
& = \bb_j(N_{j,k-1}) + \beta \\
& \geq \bb_j(N_{j,k-1}).
\end{align*}
This concludes the proof for the first part. 
Now we move on to the second statement. From the first part, we have,
\[
\bb_i(N_{i,k}) \leq \bb_j(N_{j,k})+\alpha \bb_i(N_{i,k-1})+ \beta,
\]
and by replacing $\bb_i(N_{i,k-1})$ with $\bb_i(N_{i,k})$, the following holds,
\begin{align}
\label{temp6}
\bb_i(N_{i,k}) \leq \frac{1}{1-\alpha}\bb_j(N_{j,k})+ \frac{\beta}{1-\alpha}.
\end{align}
Note that $\alpha \leq \frac{3}{4}$. Replacing $\bb_i(N_{i,k},\delta) = C_i\sqrt{N_{i,k} \log(\frac{N_{i,k}}{\delta})}$ gives us,
\[
C_i \sqrt{N_{i,k}\log(N_{i,k} /\delta)} \leq \frac{1}{1-\alpha} C_j \sqrt{N_{j,k}\log(N_{j,k}/ \delta)} + \frac{\beta}{1-\alpha}
\]
Now by rearranging the terms we have,
\[
\frac{\sqrt{N_{i,k}\log(N_{i,k} / \delta)}}{\sqrt{N_{j,k}\log(N_{j,k}/\delta)}} \leq \frac{C_j}{(1-\alpha)C_i} + \frac{\beta}{(1-\alpha)C_i \sqrt{N_{j,k}\log(N_{j,k}/\delta)}}
\]
By squaring both sides and noting that $\log(N_{i,k}/\delta) \geq 1$, we reach the second statement.
\end{proof}

\subsection{$\Reg_i$ Upper Bound Based on $\frac{N_{i,k-1}}{N_{m^*,k-1}}$ for Misspecified $i < m^*$}
\begin{lemma}
    \label{reg_iboundwithN}
    For any active model class $i \in \cI_k$  such that $i<m^*$, under the event $\cG$, the regret of model class $i$ is bounded as follows,
    \begin{align}
      \Reg_i(N_{i,k-1})
\leq  (\frac{N_{i,k-1}}{N_{m^*,k-1}} + \frac{1}{1-\alpha})\bb_{m^*}(N_{m^*,k-1}) +\frac{2 N_{i,k-1}}{N_{m^*,k-1}} c_{h^*} + \frac{\beta}{1-\alpha}.
    \end{align}
\end{lemma}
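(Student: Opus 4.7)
The plan is to combine three ingredients: the misspecification test that $i$ must have passed in order to be in $\cI_k$, the event $\cG$ which controls the regret of the optimal class $m^*$, and the multiplicative regret-balancing lemma (Lemma~\ref{Regretisbalanced}) which controls $\bb_i(N_{i,k-1},\delta)$ in terms of $\bb_{m^*}(N_{m^*,k-1},\delta)$.

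First I would note that since $i \in \cI_k$, the class $i$ survived the misspecification test at epoch $k$, and in particular, instantiating the test with $j = m^*$ (which, under $\cG$, always remains in $\cI_{k-1}$ since the well-specified classes never trigger elimination):
\begin{align*}
\frac{\bb_i(N_{i,k-1},\delta) + \sum_{t \in \cN_{i,k-1}} R_t}{N_{i,k-1}} \;\geq\; \frac{\sum_{t \in \cN_{m^*,k-1}} R_t - 2 c_{h^*}}{N_{m^*,k-1}}.
\end{align*}
Multiplying through by $N_{i,k-1}$ and moving the reward sum on the LHS to the right-hand side yields an upper bound on $-\sum_{t \in \cN_{i,k-1}} R_t$. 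Adding $N_{i,k-1} g^*_{m^*}$ to both sides and using that $g^*_i = g^*_{m^*}$ for all $i \ge m^*$ (Assumption~\ref{Realizability Assumption}) gives, after rearrangement,
\begin{align*}
\Reg_i(N_{i,k-1}) \;\le\; \frac{N_{i,k-1}}{N_{m^*,k-1}}\,\Reg_{m^*}(N_{m^*,k-1}) \;+\; \frac{2 c_{h^*} N_{i,k-1}}{N_{m^*,k-1}} \;+\; \bb_i(N_{i,k-1},\delta).
\end{align*}
This is the pivotal identity: it converts ``$i$ passed the test'' into a regret comparison between $i$ and $m^*$ whose dominant term scales like $N_{i,k-1}/N_{m^*,k-1}$.

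Next, I would apply the event $\cG$ to replace $\Reg_{m^*}(N_{m^*,k-1})$ by its bound $\bb_{m^*}(N_{m^*,k-1},\delta)$, and then invoke Lemma~\ref{Regretisbalanced} (specifically inequality \eqref{temp6} in its proof, applied at epoch index $k-1$ with the pair $(i,m^*)$, both of which lie in $\cI_{k-1}$) to obtain
\begin{align*}
\bb_i(N_{i,k-1},\delta) \;\le\; \frac{1}{1-\alpha}\,\bb_{m^*}(N_{m^*,k-1},\delta) \;+\; \frac{\beta}{1-\alpha}.
\end{align*}
Substituting both estimates into the pivotal identity collects the $\bb_{m^*}$ terms with coefficient $\bigl(\tfrac{N_{i,k-1}}{N_{m^*,k-1}} + \tfrac{1}{1-\alpha}\bigr)$ and produces exactly the claimed bound.

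The only subtlety I expect is justifying that $m^* \in \cI_{k-1}$ (so that Lemma~\ref{Regretisbalanced} applies and the test comparison against $j=m^*$ is meaningful); this is handled entirely by $\cG$, since under $\cG$ the well-specified regret bound for $m^*$ together with the $-2c_{h^*}$ slack in the test guarantees that $m^*$ is never eliminated. Everything else is algebraic manipulation.
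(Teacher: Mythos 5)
Your proposal is correct and follows essentially the same route as the paper's proof: instantiate the misspecification test with $j=m^*$, subtract the benchmark gain to turn it into the comparison $\Reg_i(N_{i,k-1}) \le \bb_i(N_{i,k-1}) + \tfrac{N_{i,k-1}}{N_{m^*,k-1}}(\Reg_{m^*}(N_{m^*,k-1}) + 2c_{h^*})$, bound $\Reg_{m^*}$ by $\bb_{m^*}$ under $\cG$, and control $\bb_i(N_{i,k-1})$ via the balancing inequality \eqref{temp6} with the pair $(i,m^*)\subseteq \cI_{k-1}$. The handling of the subtlety that $m^*$ stays active under $\cG$ also matches the paper.
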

\begin{proof}
When $i \in \cI_k$, it should have passed the miss specification test. Thus as $m^* > i$, the following holds
\[
  \frac{\bb_i(N_{i,k-1},\delta) + \sum_{t \in \cN_{i,k-1}} r_t}{N_{i,k-1}} \geq \frac{\sum_{t \in \cN_{m^*,k-1}} r_t - 2 c_{h^*}}{N_{m^*,k-1}}.
\]
We temporarily omit $\delta$ as this input does not change. Subtracting $g^\star$ from both sides gives us,
\[
\frac{\bb_i(N_{i,k-1})}{N_{i,k-1}} - \frac{\Reg_i(N_{i,k-1})}{N_{i,k-1}} \geq -(\frac{\Reg_{m^*}(N_{m^*,k-1})+ 2 c_{h^*}}{N_{m^*,k-1}})
\]
by rearranging the terms we have,
\begin{align}
\label{temp2}
\Reg_i(N_{i,k-1}) &\leq \bb_i(N_{i,k-1}) + \frac{N_{i,k-1}}{N_{m^*,k-1}}(\Reg_{m^*}(N_{m^*,k-1}) + 2 c_{h^*}) \\
& \leq \bb_i(N_{i,k-1}) + \frac{N_{i,k-1}}{N_{m^*,k-1}}(\bb_{m^*}(N_{m^*,k-1}) + 2 c_{h^*}) 
\end{align}
where the second inequality is implied as $m^*$ is well specified \ref{prop0}.
From lemma \ref{Regretisbalanced}, for $i\neq j \in \cI_{k-1}$, we have,
\begin{align}
\bb_i(N_{i,k-1}) \leq \frac{1}{1-\alpha}\bb_j(N_{j,k-1})+ \frac{\beta}{1-\alpha}.
\end{align}
Note that $\alpha \leq \frac{3}{4}$. Obviously, when $i$ is in $\cI_k$ it had been also in $\cI_{k-1}$. Also under $\cG$ we know all the wellspecified model classes, including $m^*$ remain active. So we can apply the above bound with $j = m^*$ on \ref{temp2} and get the lemmas claim,
\[
\Reg_i(N_{i,k-1})
\leq  (\frac{N_{i,k-1}}{N_{m^*,k-1}} + \frac{1}{1-\alpha})\bb_{m^*}(N_{m^*,k-1}) +\frac{2 N_{i,k-1}}{N_{m^*,k-1}} c_{h^*} + \frac{\beta}{1-\alpha}.
\]
\end{proof}

\subsection{Final Bound $\Reg_i$ for Misspecified $i <m^*$}
\begin{lemma}
\label{finalReg_ibound}
For any active model class $i \in \cI_{\K(T)+1}$  such that $i<m^*$, under the event $\cG$, the regret of model class $i$ is bounded as follows,
\begin{align}
    \Reg_i(N_{i,\K(T)}) \leq \frac{C_{m^*}^3 \sqrt{N_{m^*,\K(T)}} \log^2(N_{m^*,\K(T)} /\delta)}{(1-\alpha)^2 C_i^2 } + \frac{1}{1-\alpha}\bb_{m^*}(N_{m^*,\K(T)})+ O(\log^\frac{3}{2}(T/\delta))
\end{align}
\end{lemma}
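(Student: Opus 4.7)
The plan is to chain Lemma~\ref{reg_iboundwithN} with part~2 of Lemma~\ref{Regretisbalanced}; the statement is essentially what falls out when one substitutes the ratio bound of the latter into the regret bound of the former. First, I would invoke Lemma~\ref{reg_iboundwithN} at $k=\K(T)+1$. This is legitimate because $i\in\cI_{\K(T)+1}$ by hypothesis, and under the event $\cG$ the optimal class $m^*$ is never eliminated, so both $i$ and $m^*$ are active and hence the misspecification-test inequality that underpins the lemma is available with $j=m^*$. That yields
\[
\Reg_i(N_{i,\K(T)}) \le \left(\frac{N_{i,\K(T)}}{N_{m^*,\K(T)}} + \frac{1}{1-\alpha}\right)\bb_{m^*}(N_{m^*,\K(T)}) + \frac{2 N_{i,\K(T)}}{N_{m^*,\K(T)}}\, c_{h^*} + \frac{\beta}{1-\alpha}.
\]

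Second, I would feed in the ratio control from part~2 of Lemma~\ref{Regretisbalanced} with $j=m^*$, namely
\[
\frac{N_{i,\K(T)}}{N_{m^*,\K(T)}} \le \left(\frac{C_{m^*}}{(1-\alpha)C_i} + \frac{\beta}{(1-\alpha)C_i\sqrt{N_{m^*,\K(T)}\log(N_{m^*,\K(T)}/\delta)}}\right)^{2}\log(N_{m^*,\K(T)}/\delta).
\]
Expanding the square produces a principal factor $\tfrac{C_{m^*}^2}{(1-\alpha)^2C_i^2}\log(N_{m^*,\K(T)}/\delta)$ together with two cross/quadratic terms that scale like $\beta/\sqrt{N_{m^*,\K(T)}}$ and $\beta^2/N_{m^*,\K(T)}$. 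Since $\beta=\bb_{M-1}(c_{h^*}^5,\delta)-\bb_0(c_{h^*}^5,\delta)$ is a fixed constant depending only on $c_{h^*}$ and $\delta$, these correction terms contribute $O(1/\sqrt{T})$ and $O(1/T)$.

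Third, I would multiply the resulting ratio against $\bb_{m^*}(N_{m^*,\K(T)})=C_{m^*}\sqrt{N_{m^*,\K(T)}\log(N_{m^*,\K(T)}/\delta)}$. The principal piece becomes $\tfrac{C_{m^*}^3}{(1-\alpha)^2 C_i^2}\sqrt{N_{m^*,\K(T)}}\log^{3/2}(N_{m^*,\K(T)}/\delta)$, which is bounded above by the stated leading term (using $\log^{3/2}\le\log^{2}$ since $N_{m^*,\K(T)}/\delta\ge e$). The term $\tfrac{1}{1-\alpha}\bb_{m^*}(N_{m^*,\K(T)})$ passes through unchanged. What remains---namely the $c_{h^*}$ correction $\tfrac{2N_{i,\K(T)}}{N_{m^*,\K(T)}}c_{h^*}$ (at most of order $\log(T/\delta)$ by the same ratio bound), the constant $\beta/(1-\alpha)$, and the subleading pieces produced by expanding the square---is at most of order $\log^{3/2}(T/\delta)$ and folds into the additive slack.

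The main obstacle is purely bookkeeping: being precise about which terms to keep explicit and which to sweep into the $O(\log^{3/2}(T/\delta))$ residual once the substitution is carried out. No further model-selection insight is required beyond what the two ingredient lemmas already provide, so the proof reduces to a careful algebraic composition.
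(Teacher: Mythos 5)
Your proposal is correct and takes essentially the same route as the paper's proof: invoke Lemma~\ref{reg_iboundwithN} for the final epoch, substitute the ratio control of part~2 of Lemma~\ref{Regretisbalanced} with $j=m^*$, and absorb the $\beta$-cross terms, the $c_{h^*}$ term, and $\beta/(1-\alpha)$ into the $O(\log^{3/2}(T/\delta))$ slack. Your bookkeeping of the leading term as $\tfrac{C_{m^*}^3}{(1-\alpha)^2C_i^2}\sqrt{N_{m^*,\K(T)}}\log^{3/2}(N_{m^*,\K(T)}/\delta)$, then relaxed to the stated $\log^2$ factor via $\log(N_{m^*,\K(T)}/\delta)\ge 1$, is in fact slightly sharper than, but consistent with, the paper's computation.
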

\begin{proof} We continue on the result of \ref{reg_iboundwithN} using \ref{NoverN}. Consider
\begin{align}
\label{temp4}
      \Reg_i(N_{i,\K(T)})
\leq  (\frac{N_{i,\K(T)}}{N_{m^*,\K(T)}} + \frac{1}{1-\alpha})\bb_{m^*}(N_{m^*,\K(T)}) +\frac{2N_{i,\K(T)}}{N_{m^*,\K(T)}} c_{h^*} + \frac{\beta}{1-\alpha},
    \end{align}

and we bound it term by term. 
Denote $\gamma_i = \big (\frac{C_{m^*}}{(1-\alpha)C_i} + \frac{\beta}{(1-\alpha)C_i\sqrt{N_{m^*,\K(T)} \log(N_{m^*,\K(T)}/\delta)}}\big) =\frac{\bb_{m^*}(N_{m^*,\K(T)}) +\beta}{(1-\alpha)C_i\sqrt{N_{m^*,\K(T)} \log(N_{m^*,\K(T)}/\delta)}} $. We write,
\begin{align}
    \frac{N_{i,\K(T)}}{N_{m^*,\K(T)}} \bb_{m^*}(N_{m^*,\K(T)}) & \leq \gamma_i^2 \log(N_{m^*,\K(T)} /\delta)\bb_{m^*}(N_{m^*,\K(T)}) \\
    & =  \frac{(\bb_{m^*}(N_{m^*,\K(T)}) +\beta)^2 \bb_{m^*}(N_{m^*,\K(T)})}{(1-\alpha)^2C_i^2N_{m^*,\K(T)}} \\
    & = \frac{C_{m^*}(\bb_{m^*}(N_{m^*,\K(T)}) +\beta)^2 \log(N_{m^*,\K(T)} /\delta)}{(1-\alpha)^2 C_i^2 \sqrt{N_{m^*,\K(T)}}},
\end{align}
where the first inequality is due to \ref{NoverN}, and the further equalities are from the definition of $\bb_{m^*}$. Now we inspect the last term,
\begin{align*}
    \frac{C_{m^*}(\bb_{m^*}(N_{m^*,\K(T)}) +\beta)^2 \log(N_{m^*,\K(T)} /\delta)}{(1-\alpha)^2 C_i^2 \sqrt{N_{m^*,\K(T)}}} & = \frac{C_{m^*}\bb_{m^*}(N_{m^*,\K(T)})^2 \log(N_{m^*,\K(T)} /\delta)}{(1-\alpha)^2 C_i^2 \sqrt{N_{m^*,\K(T)}}}  \quad (:= \chi_1)\\
    & + \frac{2 C_{m^*}\beta \bb_{m^*}(N_{m^*,\K(T)})\log(N_{m^*,\K(T)} /\delta)}{(1-\alpha)^2 C_i^2 \sqrt{N_{m^*,\K(T)}}} \quad (:= \chi_2)\\
    & + \frac{C_{m^*}\beta^2 \log(N_{m^*,\K(T)} /\delta)}{(1-\alpha)^2 C_i^2 \sqrt{N_{m^*,\K(T)}}} \quad (:= \chi_3)
\end{align*}
For the first term we have,
\[
\chi_1 = \frac{C_{m^*}\bb_{m^*}(N_{m^*,\K(T)})^2 \log(N_{m^*,\K(T)} /\delta)}{(1-\alpha)^2 C_i^2 \sqrt{N_{m^*,\K(T)}}} = \frac{C_{m^*}^3 \sqrt{N_{m^*,\K(T)}} \log^2(N_{m^*,\K(T)} /\delta)}{(1-\alpha)^2 C_i^2 }.
\]
For the second term, note that $N_{i,\K(T)} \leq T$ for all $i$, so
\begin{align*}
    \chi_2 = \frac{2 C_{m^*}\beta \bb_{m^*}(N_{m^*,\K(T)})\log(N_{m^*,\K(T)} /\delta)}{(1-\alpha)^2 C_i^2 \sqrt{N_{m^*,\K(T)}}} = \frac{2 C_{m^*}^2\beta \log^{3/2}(N_{m^*,\K(T)} /\delta)}{(1-\alpha)^2 C_i^2 } \in O(\log^\frac{3}{2}(T/\delta))
\end{align*}
and for the third term,
\[
\chi_3 = \frac{C_{m^*}\beta^2 \log(N_{m^*,\K(T)} /\delta)}{(1-\alpha)^2 C_i^2 \sqrt{N_{m^*,\K(T)}}} \in o(1)
\]
Therefore, $\chi_1$ is the dominant term and we can wrap up this part by saying,
\begin{align}
\label{term1}
\frac{N_{i,\K(T)}}{N_{m^*,\K(T)}} \bb_{m^*}(N_{m^*,\K(T)})   \leq \frac{C_{m^*}^3 \sqrt{N_{m^*,\K(T)}} \log^2(N_{m^*,\K(T)} /\delta)}{(1-\alpha)^2 C_i^2 } + O(\log^\frac{3}{2}(T/\delta)).
\end{align}
Now we move on to the other terms of \ref{temp4}. We keep the second term as it is $\frac{\bb_{m^*}(N_{m^*,\K(T)})}{1-\alpha}$. With expanding $\gamma_i^2 = \big (\frac{C_{m^*}}{(1-\alpha)C_i} + \frac{\beta}{(1-\alpha)C_i\sqrt{N_{m^*,\K(T)} \log(N_{m^*,\K(T)}/\delta)}}\big)^2$, and \ref{NoverN}, it is easy to show that,
\begin{align}
\label{term3}
    \frac{2N_{i,\K(T)}}{N_{m^*,\K(T)}} c_{h^*} \leq 2\gamma_i^2 \log(N_{m^*,\K(T)} /\delta)c_{h^*} \in O(\log(T/\delta))
\end{align}
And finally we know,
\begin{align}
\label{term4}
    \frac{\beta}{1-\alpha} \in O(\log(1/\delta)),
\end{align}
even if we put $\delta = T^\eta$ for some $\eta$ it will be in $O(\log(T))$. By gathering all parts \ref{term1},\ref{term3}, and \ref{term4} together, we have,
\begin{align}
    \Reg_i(N_{i,\K(T)}) \leq \frac{C_{m^*}^3 \sqrt{N_{m^*,\K(T)}} \log^2(N_{m^*,\K(T)} /\delta)}{(1-\alpha)^2 C_i^2 } + \frac{1}{1-\alpha}\bb_{m^*}(N_{m^*,\K(T)})+ O(\log^\frac{3}{2}(T/\delta)),
\end{align}
which concludes the proof.
\end{proof}

\subsection{Proof of Theorem \ref{maintheorem} (Main Theorem) }
\label{maintheorem proof}
\begin{theorem}
   By running the algorithm $\bear$ with base algorithms of $\pmdt$, over $M$ model classes $\cC_0$ to $\cC_{M-1}$, with the unknown optimal model class $\cC_{m^*}$, and known upper bound of $c_{h^*} \geq \spn(h^*)$, for all $0< \delta < 1$, with probability of at least $1- 26MT \delta$, the regret \ref{Reg} is upper bounded by,
\[
\Reg(\bear, T) \leq \big (\frac{m^* C_{m^*}^2 \log^{\frac{3}{2}}(T /\delta)}{(1-\alpha)^2 C_0^2 } + \frac{M}{1-\alpha} \big )\bb_{m^*}(T,\delta)+ O(\log^\frac{3}{2}(T/\delta)),
\]
where $\frac {1}{2} \leq \alpha =\frac{\log(c_{h^*}^5 \lor 9)+1}{2\log(c_{h^*}^5 \lor 9)} \leq \frac{3}{4}$.
\end{theorem}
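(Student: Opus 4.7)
The plan is to combine the regret decomposition already stated in the excerpt,
\[
\Reg(\bear, T) = \sum_{i=0}^{m^*-1}\Reg_i(N_{i,\K(T)}) + \sum_{i=m^*}^{M-1}\Reg_i(N_{i,\K(T)}),
\]
with the three supporting lemmas (the balance lemma, the intermediate bound, and the final misspecified-class bound) in a modular fashion, after first pinning down the high-probability event. Specifically, by the lemma asserting $\Prob(\cG) \geq 1 - 26(M-m^*)(\K(T)+1)\delta$ together with the doubling-trick bound $\K(T) \leq \bS\bA\log_2(8T/\bS\bA) \leq T$, the event $\cG$ (on which every well-specified class obeys its guarantee at every epoch and hence is never eliminated) fails with probability at most $26MT\delta$. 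All subsequent estimates are carried out on $\cG$.

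For the well-specified tail $i \in \{m^*,\dots,M-1\}$, I would apply part~1 of the balance lemma with $j = m^*$; since $m^* \in \cI_{\K(T)+1}$ on $\cG$ and $i$ itself is well-specified, this yields
\[
\Reg_i(N_{i,\K(T)}) \leq \bb_i(N_{i,\K(T)},\delta) \leq \frac{1}{1-\alpha}\bb_{m^*}(N_{m^*,\K(T)},\delta) + \frac{\beta}{1-\alpha}.
\]
Summing over the $M-m^*$ classes and using $\bb_{m^*}(N_{m^*,\K(T)},\delta) \leq \bb_{m^*}(T,\delta)$, this contributes $\frac{M-m^*}{1-\alpha}\bb_{m^*}(T,\delta)$ plus a logarithmic residual absorbed into the $O(\log^{3/2}(T/\delta))$ slack (noting $\beta$ is a $c_{h^*}$-dependent constant times $\log^{1/2}(1/\delta)$).

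For the misspecified head $i \in \{0,\dots,m^*-1\}$ I would invoke the final-regret lemma directly; if $i$ has been eliminated at some earlier epoch, $N_{i,\cdot}$ is frozen and the bound still applies with $N_{m^*,\K(T)}$ in place of the current count. Summing the leading term and using the monotonicity $C_0 \leq C_i$ to get $\sum_{i<m^*} C_i^{-2} \leq m^*/C_0^2$ produces
\[
\frac{m^* C_{m^*}^3 \sqrt{N_{m^*,\K(T)}}\,\log^2(N_{m^*,\K(T)}/\delta)}{(1-\alpha)^2 C_0^2},
\]
which rewrites as $\frac{m^* C_{m^*}^2 \log^{3/2}(T/\delta)}{(1-\alpha)^2 C_0^2}\bb_{m^*}(T,\delta)$ using $\bb_{m^*}(T,\delta) = C_{m^*}\sqrt{T\log(T/\delta)}$. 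The ``$\frac{1}{1-\alpha}\bb_{m^*}(N_{m^*,\K(T)})$'' term in the lemma sums to $\frac{m^*}{1-\alpha}\bb_{m^*}(T,\delta)$, and the $O(\log^{3/2}(T/\delta))$ residual picks up a harmless factor $m^*$. Adding the two sums and using $m^*/(1-\alpha)+(M-m^*)/(1-\alpha)=M/(1-\alpha)$ gives the stated bound.

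The main delicacy is keeping track of which bound one is entitled to at which epoch: on $\cG$ one must justify that $m^*$ genuinely stays in $\cI_k$ for every $k$ (so that the balance identity and the misspecification-test inequality can be applied with $j = m^*$), and that an eliminated class does not accumulate further regret, so that invoking the final-regret lemma at the last-active epoch upper bounds its full contribution. A secondary concern is absorbing lower-order pieces — the warm-up cost $\sum_i c_{h^*}^5$, the $\beta/(1-\alpha)$ offsets, and the $\chi_2,\chi_3$ terms from the proof of the final-regret lemma — all into the clean $O(\log^{3/2}(T/\delta))$ slack, so that the advertised coefficients on $\bb_{m^*}(T,\delta)$ remain exactly those in the theorem statement.
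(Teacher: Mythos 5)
Your proposal follows essentially the same route as the paper's proof: the same per-class regret decomposition under the event $\cG$ (with $\K(T)+1\le T$ yielding the $1-26MT\delta$ probability), the balance lemma's consequence $\bb_i \le \tfrac{1}{1-\alpha}\bb_{m^*}+\tfrac{\beta}{1-\alpha}$ for the well-specified classes $i\ge m^*$, Lemma~\ref{finalReg_ibound} together with $C_0\le C_i$ for the misspecified classes $i<m^*$, and the final rewriting via $N_{m^*,\K(T)}\le T$ and $\bb_{m^*}(T,\delta)=C_{m^*}\sqrt{T\log(T/\delta)}$. The only differences are cosmetic: you explicitly flag the eliminated-class and lower-order bookkeeping that the paper treats implicitly.
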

\begin{proof}
The proof is similar to other works using the regret balancing idea \cite{abbasi2020balancebanditrl} \cite{pacchiano2020regretbalanceelim}. First of all

After proving the lemmas, we are now ready to prove the main theorem \ref{maintheorem}. We start with decompose the regret into the regrets of each model class,
\begin{align}
    \Reg(\bear, T)& := T g^*_{m^*} - \sumt R_t \\
    & = \sum_{i = 0}^{M-1} \big [ N_{i,\K(T)} \ g^\star - \sum_{t \in \cN_{i,\K(T)}} R_t \big ] \\
    & = \sum_{i = 0}^{M-1} \Reg_i(\balg_i, N_{i,\K(T)}).
\end{align}
Suppose we are under event $\cG$ which happens with the probability of at least $1 - 26(M-m^*)\K(T)\delta$. For all $i \in \cI_{K(T)+1}$ such that $i < m^*$ and , we can use \ref{finalReg_ibound} and write,
\begin{align*}
    \sum_{i = 0}^{m^*-1} \Reg_i(\balg_i, N_{i,\K(T)}) & \leq \sum_{i = 0}^{m^*-1} \Big ( \frac{C_{m^*}^3 \sqrt{N_{m^*,\K(T)}} \log^2(N_{m^*,\K(T)} /\delta)}{(1-\alpha)^2 C_i^2 } + \frac{1}{1-\alpha}\bb_{m^*}(N_{m^*,\K(T)})+ O(\log^\frac{3}{2}(T/\delta)) \Big) \\
    & \leq \frac{m^* C_{m^*}^3 \sqrt{N_{m^*,\K(T)}} \log^2(N_{m^*,\K(T)} /\delta)}{(1-\alpha)^2 C_0^2 } + \frac{m^*}{1-\alpha}\bb_{m^*}(N_{m^*,\K(T)})+ O(\log^\frac{3}{2}(T/\delta)).
\end{align*}
Note that $C_0 \leq C_1 \leq \dots C_{M-1}$ as the models get richer by increasing $i$, i.e. their regret guarantees increase. For the well-specified model classes $i \geq m^*$ we use their regret guarantees, and write,
\begin{align}
    \sum_{i = m^*}^{M-1} \Reg_i(\balg_i, N_{i,\K(T)}) & \leq \sum_{i = m^*}^{M-1} \bb_i( N_{i,\K(T)},\delta) \\
    & \leq  \bb_{m^*}(N_{m^*,\K(T)}) + \sum_{i = m^*+1}^{M-1} \big [ \frac{1}{1-\alpha}\bb_{m^*}(N_{m^*,\K(T)})+ \frac{\beta}{1-\alpha}\big] \\
    & \leq \frac{M-m^*}{1-\alpha}\bb_{m^*}(N_{m^*,\K(T)})+ \frac{(M-m^*-1)\beta}{1-\alpha},
\end{align}
where the first second inequlity is implied from lemma \ref{Regretisbalanced} (specifically \ref{temp6}), and the last inequality is from $\frac{1}{(1-\alpha)} \geq 1$ as  $1/2 \leq \alpha \leq 3/4$. Note that we are exploiting on the fact that visiting other model classes between the epochs of $\balg_i$ on $\cC_i$ does not affect on the regret bound guarantee \ref{baseguarantee}. This is clear from the proof and procedure of most of the algorithms, including $\pmdt$ that we use. Also, note that $\frac{(M-m^*-1)\beta}{1-\alpha} \in O(\log(1/\delta))$ which by choice of $\delta = T^\eta$ for some $\eta$ it will be in $O(\log(T))$. So we can put the the parts together and have,
\[
\Reg(\bear, T) \leq \frac{m^* C_{m^*}^3 \sqrt{N_{m^*,\K(T)}} \log^2(N_{m^*,\K(T)} /\delta)}{(1-\alpha)^2 C_0^2 } + \frac{M}{1-\alpha}\bb_{m^*}(N_{m^*,\K(T)},\delta)+ O(\log^\frac{3}{2}(T/\delta))
\]
which by applying $N_{m^*,\K(T)} \leq T$ and changing $\K(T) \leq T$ concludes the proof.
\end{proof}

\subsection{Proof of Lower Bound} \label{proof of lowerbound}
We modify the Le Cam method described in the lower bound section of \cite{lattimore2020bandit} for reinforcement learning and especially for our setting instead of bandit problem. We need the following two lemmas before designing the opponent's policies.
\begin{lemma}
\label{Divergence Decomp}
    (Divergence Decomposition) Fix an algorithm $Alg$. Let $\psi$ and $\psi'$ be two opponent's policies and $\bP_\psi$ (resp. $\bP_{\psi'}$) be the probability measures on the trajectories of learner and opponent actions, induced by the interaction of $Alg$ with $\psi$ (resp. $\psi'$) for $T$ rounds. Then,
    \begin{align}
        D_{KL}(\bP_\psi \Vert \bP_{\psi'}) = \sum_{s\in\cS} \lambda_\psi(s)D_{KL}(\psi(s) \Vert \psi'(s))
    \end{align}
    where $\psi(s) \in \Delta_\cB$ is the distribution from which $\psi$ takes action in state $s$, and$\lambda_\psi(s) = \sumt \Prob_\psi [S_t = s ]$ is the occupancy measure induced over the states by $\psi$ and $Alg$ after $T$ rounds.
\end{lemma}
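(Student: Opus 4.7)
The plan is to prove the divergence decomposition via the chain rule of KL divergence applied to the joint distribution over full trajectories of the game. The key observation is that when the trajectory is written to include both players' actions and the states, the measures $\bP_\psi$ and $\bP_{\psi'}$ factor identically except in the single term that specifies the opponent's conditional action distribution at each step.

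First, I would write a trajectory as $\tau = (S_1, A_1, B_1, S_2, A_2, B_2, \ldots, S_T, A_T, B_T)$ and factor its probability under $\bP_\psi$ as a product of (i) the initial state distribution $\mu(S_1)$, (ii) the learner's conditional action probabilities $Alg_t(A_t \mid S_{1:t}, A_{1:t-1}, B_{1:t-1})$, (iii) the opponent's conditional action probabilities $\psi(B_t \mid S_t)$, and (iv) the state transition indicators $\mathbf{1}[S_{t+1} = f(S_t,A_t,B_t)]$, which are deterministic given $(S_t,A_t,B_t)$ by the construction of the induced MDP in Section~\ref{RL Setting}. The same factorization holds for $\bP_{\psi'}$, with $\psi$ replaced by $\psi'$ in factor (iii), while factors (i), (ii), and (iv) are identical since $Alg$, $\mu$, and the transition rule are fixed. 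A subtlety worth verifying here is that $\psi(B_t \mid S_{1:t}, A_{1:t}, B_{1:t-1}) = \psi(B_t \mid S_t)$; this follows because $\psi$ is an $m^*$-th order policy and $S_t$ already encodes the last $m \ge m^*$ interactions, so conditioning on $S_t$ subsumes the information $\psi$ actually uses.

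Next, I would apply the chain rule for KL divergence, which expresses $D_{KL}(\bP_\psi \Vert \bP_{\psi'})$ as a sum over time steps of expected conditional KL divergences of the per-step factors. The factors (i), (ii), and (iv) contribute zero because the corresponding conditional distributions agree under $\bP_\psi$ and $\bP_{\psi'}$. The only surviving contribution is
\[
D_{KL}(\bP_\psi \Vert \bP_{\psi'}) = \sum_{t=1}^T \bE_{\psi}\bigl[D_{KL}\bigl(\psi(\cdot \mid S_t)\,\Vert\, \psi'(\cdot \mid S_t)\bigr)\bigr].
\]

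Finally, I would rewrite each expectation as a sum over states, $\bE_{\psi}[D_{KL}(\psi(S_t)\Vert \psi'(S_t))] = \sum_{s\in\cS} \Prob_\psi[S_t = s]\, D_{KL}(\psi(s) \Vert \psi'(s))$, and swap the order of summation over $t$ and $s$ to obtain $\sum_{s\in\cS} \lambda_\psi(s)\, D_{KL}(\psi(s) \Vert \psi'(s))$, using the definition $\lambda_\psi(s) = \sum_{t=1}^T \Prob_\psi[S_t = s]$. The main technical work lies in (a) making the chain rule application rigorous for measures that have a density with respect to a common counting/product reference measure on the discrete trajectory space and (b) justifying that the learner-action and transition factors cancel despite the fact that under $\bP_\psi$ vs.\ $\bP_{\psi'}$ these conditionals are taken against histories drawn from different marginals; the standard argument is that cancellation happens factor-wise in the logarithmic identity, before taking expectation.
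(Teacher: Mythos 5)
Your proposal is correct and follows essentially the same route as the paper's proof of Lemma~\ref{Divergence Decomp}: factor the trajectory probability so that the learner's (and transition) factors cancel in the log-likelihood ratio, use that the opponent's $m^*$-th order policy conditions only on $S_t$, take expectations via the tower rule, and rewrite the sum over time as a sum over states weighted by the occupancy measure $\lambda_\psi$. The only detail the paper adds that you omit is a brief remark handling the case where $D_{KL}(\psi(s)\Vert\psi'(s))$ is infinite on a state of positive occupancy, which is a minor point.
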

\begin{proof}
Assume that for $s \in \cS$ we have $D_{KL}(\psi(s) \Vert \psi'(s)) < \infty$. The algorithm $Alg$ implements a memory-full policy $\pi_t$ at round $t$.  We can write
\begin{align}
    \bP_\psi(a_1,b_1,\dots,a_T,b_T) & = \Pi_{t =1}^T \pi_t(a_t | a_1,b_1,\dots,a_{t-1},b_{t-1})\Prob_\psi(b_t| a_1,b_1,\dots,a_{t-1},b_{t-1}) \\
    & = \Pi_{t =1}^T \pi_t(a_t | a_1,b_1,\dots,a_{t-1},b_{t-1})\psi(b_t| s_t)
\end{align}
By chain rull for Radon-Nikodym derivatives we have,
\[
\log \frac{d \bP_\psi}{d \bP_{\psi'}}(a_1,b_1,\dots,a_T,b_T) = \sumt \log \frac{\psi(b_t|s_t)}{\psi'(b_t|s_t)}
\]
And by taking expectations,
\[
\E_\psi [\log \frac{d \bP_\psi}{d \bP_{\psi'}}(A_1,B_1,\dots,A_T,B_T) ]= \sumt \E_\psi [\log \frac{\psi(B_t|S_t)}{\psi'(B_t|S_t)}],
\]
because $b_t$ is only be determined by $\psi$, the distribution of $B_t$ under $\Prob_\psi (.|S_t)$
is the same as $\psi(.|S_t)$. This together with the tower rule gives us,
\[
\E_\psi [\log \frac{\psi(B_t|S_t)}{\psi'(B_t|S_t)}] = \E_\psi \big [ \E_\psi [\log \frac{\psi(B_t|s_t)}{\psi'(B_t|s_t)} | S_t=s_t]\big ] = \E_\psi [D_{KL}(\psi(S_t) \Vert \psi'(S_t))].
\]
Now we can write,
\begin{align}
D_{KL}(\bP_\psi \Vert \bP_{\psi'}) &= \E_\psi [\log \frac{d \bP_\psi}{d \bP_{\psi'}}(A_1,B_1,\dots,A_T,B_T) ] \\
& = \sumt \E_\psi [D_{KL}(\psi(S_t) \Vert \psi'(S_t))] \\
& = \sumt \E_\psi [\sum_{s\in\cS} \1\{S_t = s\}D_{KL}(\psi(s) \Vert \psi'(s))] \\
& = \sum_{s\in\cS} \lambda_\psi(s)D_{KL}(\psi(s) \Vert \psi'(s)).
\end{align}
When $s \in \cS$ is occupied by $\psi$ with a positive measure and $\log \frac{\psi(b|s)}{\psi'(b|s)}$ is infinite for some $b \in \cB$, then there a trajectory with a positive probability that makes $\log \frac{d \bP_\psi}{d \bP_{\psi'}}(a_1,b_1,\dots,a_T,b_T)$ also infinite, meaning the lemma holds for the case of infinity as well.


\end{proof}
\begin{lemma}
    For two distribution vectors $P$ and $Q$ in $\Delta_{\bB}$, such that 
\begin{align*}
    \begin{cases}
        P_1 = P_2 = 1/2 \\
        P_i = 0 \quad \forall i \in [3:\bB]
    \end{cases}
\end{align*}
and
\begin{align*}
    \begin{cases}
        Q_1 = 1/2 - 2 \epsilon \\
        Q_2 = 1/2+ 2 \epsilon \\
        Q_i = 0 \quad \forall i \in [3:\bB]
    \end{cases}
\end{align*}
$D_{KL}(P \Vert Q) = \frac{1}{2}(\log(\frac{1}{1-4\epsilon}) + \log(\frac{1}{1+4\epsilon})) = 8 \epsilon^2 + c \epsilon^4$ for $\epsilon \in (0,1/4)$ and some constant $c$.
\end{lemma}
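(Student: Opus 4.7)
The statement is a direct computational lemma about the KL divergence between two distributions supported on the first two coordinates of $\Delta_\bB$. The plan is to write out the definition of KL divergence, cancel the terms for $i \in [3:\bB]$ (where $P_i = Q_i = 0$ so the standard convention makes them contribute nothing), evaluate the two surviving terms, and then Taylor-expand.

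First I would expand
\[
D_{KL}(P \Vert Q) = \sum_{i=1}^{\bB} P_i \log \frac{P_i}{Q_i} = \frac{1}{2} \log \frac{1/2}{1/2 - 2\epsilon} + \frac{1}{2} \log \frac{1/2}{1/2 + 2\epsilon},
\]
using that for $i \geq 3$ both $P_i$ and $Q_i$ vanish and that $P_1 = P_2 = 1/2$. Factoring the $1/2$ out of numerator and denominator gives the first equality in the statement,
\[
D_{KL}(P \Vert Q) = \frac{1}{2}\Bigl(\log \tfrac{1}{1-4\epsilon} + \log \tfrac{1}{1+4\epsilon}\Bigr) = -\frac{1}{2}\log\bigl((1-4\epsilon)(1+4\epsilon)\bigr) = -\frac{1}{2}\log(1 - 16\epsilon^2).
\]

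For the second equality I would apply the Taylor series $-\log(1-x) = \sum_{k \geq 1} x^k / k$, valid for $x \in [0,1)$. Here $x = 16\epsilon^2$, and the restriction $\epsilon \in (0, 1/4)$ gives $x \in (0,1)$, so the series converges. Extracting the leading term,
\[
-\frac{1}{2}\log(1-16\epsilon^2) = \frac{1}{2}\sum_{k \geq 1} \frac{(16\epsilon^2)^k}{k} = 8\epsilon^2 + \frac{1}{2}\sum_{k \geq 2} \frac{(16\epsilon^2)^k}{k}.
\]
The remainder is $\epsilon^4$ times $\frac{1}{2}\sum_{k \geq 2} 16^k \epsilon^{2(k-2)} / k$, which for $\epsilon < 1/4$ (so $16\epsilon^2 < 1$) is bounded by a geometric-type series in $16\epsilon^2$ and hence by a uniform constant $c$. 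This yields $D_{KL}(P \Vert Q) = 8\epsilon^2 + c\epsilon^4$ as claimed.

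There is no serious obstacle here; the only thing to be a little careful about is handling the $i \geq 3$ coordinates correctly (the standard $0 \log 0 = 0$ convention makes these contribute nothing, and absolute continuity of $P$ with respect to $Q$ holds because both are supported on $\{1,2\}$, so the divergence is finite). Everything else is algebra and a single Taylor expansion whose convergence is guaranteed by the hypothesis $\epsilon \in (0, 1/4)$.
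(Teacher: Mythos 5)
Your proof is correct and follows essentially the same route as the paper: write out the definition of $D_{KL}$, note that only the first two coordinates contribute, and Taylor-expand, the only difference being that you combine the two logarithms into $-\tfrac{1}{2}\log(1-16\epsilon^2)$ and expand a single series in $16\epsilon^2$, whereas the paper expands $\log\frac{1}{1-4\epsilon}$ and $\log\frac{1}{1+4\epsilon}$ separately in $\epsilon$ and relies on the odd-order terms cancelling. Your variant is marginally cleaner, and it shares the same mild looseness as the paper's proof: the coefficient $c$ of the $\epsilon^4$ remainder is uniform only when $\epsilon$ is bounded away from $1/4$ (the divergence blows up as $\epsilon \to 1/4^-$), which is harmless here since the lower-bound construction takes $\epsilon = \sqrt{C_L/T} \to 0$.
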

\begin{proof}
    From the definition of Kullback-Leibler divergence we have, 
    \begin{align*}
        D_{KL}(P\Vert Q) = \sum_i P_i \log(\frac{P_i}{Q_i}) = \frac{1}{2}(\log(\frac{1}{1-4\epsilon}) + \log(\frac{1}{1+4\epsilon})),
    \end{align*}
    since the only non-zero terms are the first and the second terms. Now write the Taylor expansion for the two terms of $\log(\frac{1}{1-4\epsilon})$ and $\log(\frac{1}{1+4\epsilon})$ at point $\epsilon = 0$,
    \begin{align}
        \log(\frac{1}{1-4\epsilon}) = \epsilon [\frac{4}{1-4\epsilon}]_{\epsilon = 0} + \frac{\epsilon^2}{2!} [\frac{16}{(1-4\epsilon)^2}]_{\epsilon = 0} + \frac{\epsilon^3}{3!} [\frac{128}{(1-4\epsilon)^3}]_{\epsilon = 0} + c \epsilon^4,
    \end{align}
and 
   \begin{align}
        \log(\frac{1}{1+4\epsilon}) = \epsilon [\frac{-4}{1+4\epsilon}]_{\epsilon = 0} + \frac{\epsilon^2}{2!} [\frac{16}{(1+4\epsilon)^2}]_{\epsilon = 0} + \frac{\epsilon^3}{3!} [\frac{-128}{(1+4\epsilon)^3}]_{\epsilon = 0} + c \epsilon^4.
    \end{align}
    Therefore by summing the two parts, we conclude the statement in the lemma.
\end{proof}
Now we need to define special sequences that will be of use in designing communicating opponent's policies. These special series are called de Bruijn sequences \cite{de1975acknowledgement,crochemore2021problems125}.
\begin{definition}
    The sequence $b_{\bB^m} \dots b_2 b_1$ is called an $m$-th order \emph{de Bruijn} sequence on alphabet $\cB$ with the cardinality of $\bB = |\cB|$, if it contains every ordered tuple of actions in length $m$. In other words,
    \[
    \forall \ (b^m,b^{m-1},\dots,b^1) \in \cB^m \quad \exists t \in [\bB^m]: \quad (b^m,b^{m-1},\dots,b^1) = (b_t, b_{t-1}, \dots, b_{t-m+1})
    \]
    where the indexes are cyclic.
\end{definition}
This is a sequence that visits all the permutations of length $m$ from alphabet $\cB$ without any repetition. As an example, for $\cB = \{0,1 \}$ and $m = 2$, the sequence $0110$ is a $2$nd-order de Bruijn sequence. Note that the sequence is cyclic. These sequences exist for all order $m$ and alphabet size $\bB$ \cite{de1975acknowledgement, crochemore2021problems125}.


\noindent{\textbf{Designing the Game and $\psi$}.}
\begin{theorem}
    Suppose the number of opponent's actions $|\cB |\geq 3$ and number of learner's actions $|\cA| \geq 2$. For any fixed memory $m \geq 2$ known for the learner, and any algorithm $Alg$, there exists a stage game with utility $U : \cA \times \cB \rightarrow [0,1]$, and a general opponent's policies $\psi_{Gen}$ such that
    \[
    \sR_{\psi_{Gen}}(Alg,T) \in \Omega(\frac{1}{m} \sqrt{\bA^{m-1}\bB^{m-1} T}).
    \]
\end{theorem}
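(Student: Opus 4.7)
The plan is to apply Le Cam's two-point method, using the divergence decomposition lemma proved above and a de Bruijn construction to control state-visit frequencies uniformly in the learner's adaptive choices. Concretely, I will exhibit a stage game and a pair of opponent policies $\psi, \psi'_{s^*}$ (with $s^*$ chosen later) such that (i) their induced distributions over $T$-step trajectories have $D_{KL} = O(1)$, yet (ii) their optimal gains differ by order $\epsilon/m$ for the chosen $\epsilon$, forcing $\Omega((\epsilon/m) T)$ regret on at least one of them.

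For the game, I set the learner's utility to depend only on the opponent's action, say $U(a,b) = \1\{b=b_0\}$ for a distinguished $b_0 \in \cB$, so that the learner's action affects rewards only through the MDP state it induces. For the base policy $\psi$ I fix an $(m-1)$-th order de Bruijn sequence over the alphabet $\cA \times \cB$, of length $N = (\bA\bB)^{m-1}$, containing every length-$(m-1)$ block exactly once; at each state $s$ the opponent plays the $\cB$-coordinate of the next de Bruijn symbol with probability $1/2$ and plays $b_0$ with probability $1/2$, independently of the learner's choice. This makes the induced MDP weakly communicating with diameter $O(mN)$ and ensures that, regardless of the learner's strategy, the expected visit counts $\{\lambda_\psi(s)\}$ sum to $T$ across the $O(N)$ effectively reachable states, so by pigeonhole some $s^*$ satisfies $\lambda_\psi(s^*) \leq T/N$. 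The perturbation $\psi'_{s^*}$ agrees with $\psi$ everywhere except at $s^*$, where it plays $b_0$ with probability $1/2 + 2\epsilon$ (shifting the competing action by $-2\epsilon$); under $\psi'_{s^*}$ the optimal policy revisits $s^*$ at the maximum rate permitted by the $m$-step state window, giving an optimal-gain advantage of order $\epsilon/m$ over $\psi$.

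Applying the divergence decomposition lemma together with the KL estimate yields $D_{KL}(\bP_\psi \Vert \bP_{\psi'_{s^*}}) \leq \lambda_\psi(s^*) \cdot 8\epsilon^2 + O(\epsilon^4) \leq 8\epsilon^2 T/N + O(\epsilon^4)$. Pinsker's inequality gives total variation $O(\epsilon \sqrt{T/N})$, and choosing $\epsilon = c\sqrt{N/T}$ for a small universal constant $c$ makes this a constant. Le Cam's inequality then guarantees that any algorithm fails to identify the true policy with constant probability, so it incurs regret $\Omega((\epsilon/m) T) = \Omega(\sqrt{NT}/m) = \Omega(\sqrt{\bA^{m-1}\bB^{m-1}T}/m)$ under at least one of $\psi, \psi'_{s^*}$, as claimed. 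The main obstacle will be constructing the perturbation so that three properties hold simultaneously: (a) the induced MDPs remain weakly communicating with diameter independent of $\epsilon$, (b) the gain gap is $\Theta(\epsilon/m)$ and not smaller (the $1/m$ captures that the learner can only steer back to $s^*$ once per $m$-step window), and (c) the KL is supported on the single state $s^*$ so that pigeonhole over $N$ states can be invoked. The de Bruijn structure is precisely what makes the uniform occupancy bound $\lambda_\psi(s^*) \leq T/N$ hold against any adaptive learner.
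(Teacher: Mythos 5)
Your overall scaffolding (divergence decomposition at a single perturbed state, pigeonhole over $\approx (\bA\bB)^{m-1}$ states to get $\lambda_\psi(s^*)\le T/N$, KL of order $\epsilon^2$ per visit, $\epsilon\asymp\sqrt{N/T}$, then a two-point/Le Cam step) matches the paper. The genuine gap is exactly the item you deferred as ``the main obstacle,'' point (b): with your construction the optimal-gain advantage of $\psi'_{s^*}$ over $\psi$ is \emph{not} $\Theta(\epsilon/m)$, and in fact cannot be. Since your reward is $\1\{b=b_0\}$ and your opponent at every state plays $b_0$ or the de Bruijn continuation each with probability $1/2$ regardless of the learner, the learner has no way to force the opponent-coordinates of the window to match the pattern of $s^*$: each of the $m-1$ opponent symbols in the window agrees with the required one with probability at most $1/2$ (and possibly $0$), so under \emph{any} learner policy the stationary frequency of $s^*$ is at most roughly $2^{-(m-1)}$. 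Hence the gain gap is $O(\epsilon\,2^{-m})$ rather than $\Theta(\epsilon/m)$, and your final bound degrades to something like $\sqrt{(\bA\bB/4)^{m-1}T}$ at best, not the claimed $\frac{1}{m}\sqrt{\bA^{m-1}\bB^{m-1}T}$. Crucially, you cannot repair this by picking a convenient $s^*$, because $s^*$ is dictated by the pigeonhole (the least-visited state under the algorithm), so the argument must work for an arbitrary candidate state. A secondary structural weakening: because your reward ignores the learner's action, under $\psi$ every policy has gain exactly $1/2$, so the usual two-sided dichotomy (``not in the good set $\Rightarrow$ regret under $\psi$; in the good set $\Rightarrow$ regret under $\psi'$'') collapses to a one-sided argument; that alone is survivable, but it leaves you relying entirely on the (failing) gain gap under $\psi'$.

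The paper's construction is engineered precisely to make the perturbed state exploitable at rate $\Theta(1/m)$ no matter which state the pigeonhole selects: the reward requires the \emph{joint} play $(a^*,b^*)$ of special actions, and whenever the length-$m$ window contains a special symbol ($a^*$, $b^*$, or $b_r$) the opponent deterministically ``replays'' the old window, so that after committing $a^*$ at a candidate state the learner can deterministically cycle back to that same state in exactly $m$ steps; the de Bruijn increment over the opponent's non-special alphabet only provides connectivity among candidate states. This yields optimal gains $\frac{1/2+\epsilon}{m}$ versus $\frac{1/2+2\epsilon}{m}$, i.e., a gap of exactly $\epsilon/m$ for every candidate state, which is the ingredient your proposal is missing. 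To fix your proof you would need an analogous mechanism (reward tied to a special learner action plus a deterministic return/replay structure), at which point you are essentially reconstructing the paper's argument.
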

\begin{proof}
Consider the set of learner's actions $\cA = \{a_1 , \dots,a_{\bA-1} \} \cup \{ a^*\}$ and the set of opponent's actions $\cB = \{b_1 , \dots,b_{\bB-2} \} \cup \{ b^* , b_r\}$. Also consider a utility function for the learner such that $\bU(a^*,b^*) = 1$ for an special learner's action $a^*$ and an special opponent's action $b^*$, and $\bU(a,b) = 0$ for all other entries, i.e. $a \neq a^*$ or $b\neq b^*$. Therefore the utility matrix has only one entry with the reward of $1$ and $0$ every where else. Fix a de Bruijn sequence $\fb_{\bB-2^{(m-1)}} \dots \fb_2 \fb_1$ over the alphabet $\{b_1 , \dots,b_{\bB-2} \}$ in the order of $m-1$. We define a special $m-1$-th order state $s^* = (a^*_{m},b^*_{m},\dots,a^*_2,b^*_2) \in (\cA \backslash \{ a^*\} \times \cB \backslash \{ b^*, b_r\})^{m-1}$, and based on that we construct a rewarding state space $\cS_R$, as the set of all states in the form of
\begin{align*}
    \cS_R = \{ s \in \cS_m: s = (s^*,a_1,b_1)\quad  \forall (a_1,b_1) \in \cA \times \cB \}.
\end{align*}
In other words, $s^*$ is a special state in the order of $m-1$ and if a state $s$ has the prefix of $s^*$ in its previous actions, no matter what the last actions ($m$-th previous interaction) are, it will be added to rewarding space $\cS_R$. Define $\dB : \cS_{m-1} \rightarrow \cB$ as a function that gets an $m-1$-th order state $s = (a_{m},b_{m},\dots,a_2,b_2)$ as an input, then search for the sequence of $(b_m,\dots,b_2)$ in the de Bruijn sequence $\fb_{\bB-2^{(m-1)}} \dots \fb_2 \fb_1$, and outputs the next character in the de Bruijn sequence. i.e. if $(\fb_{t+m} \dots \fb_{t+2} ) = (b_m,\dots,b_2)$, then $\dB(s) = \fb_{t+m+1}$. This is an increment operator on the de Bruijn sequence. 
We are now ready to define the following opponent's policy $\psi$,
\begin{align}
    \begin{cases}
        \psi(b^* | (s^*, a,b)) = 1/2+ \epsilon \quad \quad \forall a,b \in \cA \backslash \{a^* \} \times\cB \\
        \psi(\dB(s^*) | (s^*, a,b)) = 1/2- \epsilon \quad \quad \forall a,b \in \cA \backslash \{a^* \}\times\cB\\
        \psi(b^* | (s, a,b)) = 1/2 \quad \quad  \forall a,b \in \cA \backslash \{a^* \} \times\cB \ , \ \forall  s \neq s^*\ \text{and} \ s \in (\cA \backslash \{ a^*\} \times \cB \backslash \{ b^* ,b_r\})^{m-1}\\
        \psi(\dB(s^*) | (s, a,b)) = 1/2 \quad \quad  \forall a,b \in \cA \backslash \{a^* \} \times\cB \ , \ \forall  s \neq s^*\ \text{and} \ s \in (\cA \backslash \{ a^*\} \times \cB \backslash \{ b^*,b_r\})^{m-1}\\
        \psi(b | (s, a,b)) = 1 \quad \quad  \forall a,b \in \cA \backslash \{a^* \} \times\cB \ , \ \forall s \notin (\cA \backslash \{ a^*\} \times \cB \backslash \{ b^*,b_r\})^{m-1}\\
        \psi(b^* | (s^*, a^*,b)) = 1/2+ \epsilon \quad \quad \forall b \in \cB \\
        \psi(b_r | (s^*, a^*,b)) = 1/2- \epsilon \quad \quad \forall b \in \cB\\
        \psi(b^* | (s, a^*,b)) = 1/2 \quad \quad  \forall b \in \cB \ , \ \forall  s \neq s^*\ \text{and} \ s \in (\cA \backslash \{ a^*\} \times \cB \backslash \{ b^*,b_r\})^{m-1}\\
        \psi(b_r| (s, a^*,b)) = 1/2 \quad \quad  \forall b \in  \cB \ , \ \forall  s \neq s^*\ \text{and} \ s \in (\cA \backslash \{ a^*\} \times \cB \backslash \{ b^*,b_r\})^{m-1}\\
    \end{cases}
\end{align}

This choice of $\psi $ considers the previous $m-1$ interactions, and based on the opponent's actions it goes to the next action suggested by the de Bruijn policy with probability $\frac{1}{2}$. Or it takes a special action $b^*$ with probability $\frac{1}{2}$. In this case, if the learner would have taken $a^*$ then it gained a reward of $1$. In case of taking $a^*$ or $b^*$For the following $m-1$ iterations, the opponent takes the last action of the state deterministically, so that it makes a circular return to that state. In an special state $s^*$ the probability of taking $b^*$ is $\frac{1}{2} + \epsilon$, which implies the optimal strategy to be reaching to $s^*$ and then taking action $a^*$ and then stay in this state $s^*$ by taking action in $s^*$. When the $a^*$ reaches the $m$-th previous action, then  $\psi$ probabilistically choose $b^*$ or $b^r$,i.e. the action that helps in repetition of the state. The optimal gain for this policy is $\frac{1}{2m} + \frac{\epsilon}{m}$, as in each $m$ iteraitons it gets $\frac{1}{2} +\epsilon$ reward on expectation. So the optimal policy iterates in the state space of,
$\cG_\psi = \{ s \in \cS_m : s = (a^*_m,b^*_m, \dots a^*_m,b^*_m , a^*, b_r) , s = (a^*_m,b^*_m, \dots a^*_m,b^*_m , a^*, b^*) \text{and their rotations}\}$
So the size of $|\cG_\psi| \leq 2m$. Now suppose an algorithm $Alg$ playing with $\psi$, induces an occupancy measure $\lambda_\psi(s) = \sumt \Prob_\psi [S_t = s ]$. We denote an state $s' \in  (\cA \backslash \{ a^*\} \times \cB \backslash \{ b^*, b_r\})^{m-1} \backslash \cG_\psi$ that has the least occupancy measure. That will be the Achilles' heel of the algorithm. According to this choice, $\lambda_\psi(s') \leq \frac{T}{C_L}$ where $C_L = (\bA -1)^{m-1}(\bB -2)^{m-1} - 2m $. Now suppose another opponent's policy $\psi'$ which is exactly identical to $\psi$, except it has the probability of taking $b^*$ equal to $\frac{1}{2} + 2 \epsilon$ in state $s'$. So the optimal policy in $\psi'$ is iterating on $s'$ instead of $s^*$, and it achieves optimal gain of $\frac{1}{2m} + \frac{2 \epsilon}{m}$. Now we can write,
\begin{align*}
    \Reg_\psi(Alg,T) \geq \bP_\psi[\mathsf{T}(\cG_\psi) \leq \frac{T}{2}]\frac{T \epsilon}{2m},
\end{align*}
where $\mathsf{T}(\cG)$ is the number of iterations that a state $s\in\cG$ is occupied. The inequality holds because when at least $T/2$ of iterations are not in the states of $\cG_\psi$, for each iteration we are suffering $\epsilon/m$ regret on average. The same can be said for $\psi'$,
\begin{align*}
    \Reg_{\psi'}(Alg,T) \geq \bP_{\psi'}[\mathsf{T}(\cG_\psi) \geq \frac{T}{2}]\frac{T \epsilon}{2m},
\end{align*}
because each iteraion consumed in $\cG_\psi$ is accompanied by a regret of $\epsilon/m$ on average when the opponent is playing $\psi'$. So from Bretagnolle–Huber in equality we have,
\begin{align}
     \Reg_\psi(Alg,T) +  \Reg_{\psi'}(Alg,T) & \geq  (\bP_\psi[\mathsf{T}(\cG_\psi) \leq \frac{T}{2}] + \bP_{\psi'}[\mathsf{T}(\cG_\psi) \geq \frac{T}{2}]) \ \frac{T \epsilon}{2m}\\
     & \geq \frac{1}{2} \exp(- D_{KL}(\bP_\psi \Vert \bP_{\psi'}))\frac{T \epsilon}{2m}.
\end{align}
Now from the two previous lemmas, we have,
\begin{align}
\Reg_\psi(Alg,T) +  \Reg_{\psi'}(Alg,T) & \geq \frac{1}{2} \exp(- D_{KL}(\bP_\psi \Vert \bP_{\psi'}))\frac{T \epsilon}{2m} \\
& \geq \frac{1}{2} \exp(- \E_\psi (\mathsf{T}(s'))D_{KL}(\psi(s')\Vert \psi'(s')))\frac{T \epsilon}{2m} \\
& \geq \frac{1}{2} \exp(- \frac{T}{C_L} (8\epsilon^2 + c \epsilon^4))\frac{T \epsilon}{2m},
\end{align}
which means by choosing $\epsilon = \sqrt{C_L /T} \leq 1/4$ for big enough $T$, we have $\exp(- \frac{T}{C_L} (8\epsilon^2 + c \epsilon^4))$ less than a constant and,
\begin{align*}
    \Reg_\psi(Alg,T) +  \Reg_{\psi'}(Alg,T) \geq \frac{C}{m} \sqrt{T ((\bA -1)^{m-1}(\bB -2)^{m-1} - 2m) }
\end{align*}
so for at least one of the regrets we have,
\begin{align*}
\Reg(Alg,T) \in \Omega (\frac{1}{m} \sqrt{\bA^{m-1}\bB^{m-1} T }).
\end{align*}
\end{proof}

\end{document}